\definecolor{red}{RGB}{255,0,0}
\definecolor{green}{RGB}{0,150,0}
\newtheorem{thmlem}{Lemma}
\newtheorem{thmcol}{Corollary}
\newtheorem{thmprop}{Proposition}
\newtheorem{thmthm}{Theorem}
\newtheorem{thmasmp}{Assumption}
\theoremstyle{definition}
\newtheorem{thmdef}{Definition}
\newtheorem{thmex}{Example}
\newtheorem{thmrem}{Remark}
\newtheorem*{thmrems*}{Remarks}
\newenvironment{proofsketch}{%
  \proof}{\endproof}
\DeclareMathOperator*{\argmin}{arg\,min}
\definecolor{verylightgray}{rgb}{0.95, 0.95, 0.95}
\definecolor{verylightgreen}{rgb}{0.94, 0.97, 0.92}
\def\E{\mathbb{E}}
\def\V{\mathrm{Var}}
\def\scrA{\mathscr{A}}
\def\PI{\textsc{p}}
\def\CL{\textsc{c}}
\def\SRL{\textsc{srl}}
\def\CRL{\textsc{crl}}
\def\GRL{\textsc{grl}}
\def\traffic{\textbf{Traffic}}
\def\squaresign{\textbf{Square-Sign}}
\def\pmair{$\mathbf{PM_{\text{2.5}}}$}
\def\adni{\textbf{ADNI}}
\def\clocks{\textbf{Clocks-LGS}}
\def\hd{\hat{d}}
\def\htheta{\hat{\theta}}
\def\hbeta{\hat{\beta}}
\def\hA{\hat{A}}
\def\hbK{\hat{\bf K}}
\def\hh{\hat{h}}
\def\bK{{\bf K}}
\def\balpha{\boldsymbol{\alpha}}
\def\hPhi{\hat{\Phi}}
\def\hSigma{\hat{\Sigma}}
\def\hZ{\hat{Z}}
\def\hbZ{\hat{\bf Z}}
\def\hcZ{\hat{\mathcal{Z}}}
\def\olR{\overline{R}}
\def\bK{{\bf K}}
\def\bR{{\bf R}}
\def\bX{\mathbf{X}}
\def\bY{\mathbf{Y}}
\def\bZ{{\bf Z}}
\def\bbN{\mathbb{N}}
\def\bbR{\mathbb{R}}
\def\cD{\mathcal{D}}
\def\cF{\mathcal{F}}
\def\cH{\mathcal{H}}
\def\cL{\mathcal{L}}
\def\cN{\mathcal{N}}
\def\cU{\mathcal{U}}
\def\cX{\mathcal{X}}
\def\cY{\mathcal{Y}}
\def\cZ{\mathcal{Z}}
\title{Efficient learning of nonlinear prediction models with time-series privileged information}
\author{%
  Bastian Jung \\
  Chalmers University of Technology \\
  \texttt{mail@bastianjung.com} \\
  \And
  Fredrik D. Johansson \\
  Chalmers University of Technology \\
  \texttt{fredrik.johansson@chalmers.se} \\
}
\begin{document}

\maketitle

\begin{abstract}
In domains where sample sizes are limited, efficient learning algorithms are critical. Learning using privileged information (LuPI) offers increased sample efficiency by allowing prediction models access to auxiliary information at training time which is unavailable when the models are used. In recent work, it was shown that for prediction in linear-Gaussian dynamical systems, a LuPI learner with access to intermediate time series data is never worse and often better in expectation than any unbiased classical learner. We provide new insights into this analysis and generalize it to nonlinear prediction tasks in latent dynamical systems, extending theoretical guarantees to the case where the map connecting latent variables and observations is known up to a linear transform. In addition, we propose algorithms based on random features and representation learning for the case when this map is unknown. A suite of empirical results confirm theoretical findings and show the potential of using privileged time-series information in nonlinear prediction.
\end{abstract}

%
%
\section{Introduction}
\label{sec:introduction}
In data-poor domains, making the best use of all available information is central to efficient and effective machine learning. Despite this, supervised learning is often applied in such a way that informative data is ignored. A good example of this is learning to predict the condition of a patient at a set follow-up time based on information of the first medical examination.
Classical supervised learning makes use only of the initial data to predict the disease status at the follow-up, although in many cases data about medications, laboratory tests or vital signs are routinely collected about patients also at intermediate time points.
This information is \emph{privileged}~\citep{vapnik2009new}, as it is unavailable at the time of prediction, but can be used for training a model.

\emph{Learning using privileged information} (LuPI)~\citep{vapnik2009new}, \emph{generalized distillation}~\citep{LopSchBotVap16} and \emph{multi-view learning}~\citep{rosenberg2007rademacher} have been proposed to increase the sample efficiency by leveraging privileged information in learning. Theoretical results guarantee improved learning rates~\citep{pechyony2010theory,vapnik2015learning} or tighter generalization bounds~\citep{wang2019everything} for large sample sizes under appropriate assumptions. However, privileged information is not always beneficial; it must be related to the task at hand~\citep{jonschkowski2015patterns}. Previous works do not identify such settings. Moreover, existing analyses do not state when learning with privileged information is preferable to classical learning for problems with small sample sizes---which is where efficiency is needed the most.

\citet{karlsson2021using} studied LuPI in the context of predicting an outcome observed at the end of a time series based on variables collected at the first time step. They showed that making use of data from intermediate time steps in particular settings always leads to lower or equal prediction risk---for any sample size---compared to the best unbiased model which does not make use of this privileged information. However, their method  called \emph{learning using privileged time series} (LuPTS) was limited to settings where the outcome function, and estimators of it, are linear functions of baseline features. Moreover, their analysis did not study how variance reduction behaves as a function of increased input dimension. \cite{hayashi2019long} also learned from privileged intermediate time points but their study was limited to empirical results for classification using generalized distillation.

There is an abundance of real-world prediction tasks with fixed follow-up times which can be framed as having access to privileged time-series information. Examples include predicting 90-day patient mortality \citep{karhade2019predicting} or patient readmission in healthcare~\citep{mortazavi2016analysis}, the churn of users of an online service over a fixed period~\citep{huang2012customer} or yearly crop yields from satellite imagery of farms \citep{you2017deep}. In these cases, privileged time-series information comprises daily patient vitals, intermediate user interactions and daily satellite imagery respectively. We are motivated by finding sample-efficient learning algorithms that utilize privileged time-series information to improve upon classical learning alternatives in such settings.

\paragraph{Contributions.} We extend the LuPTS framework to nonlinear models and prediction tasks in latent dynamical systems (Section~\ref{sec:models}). In this setting, we prove that learning with privileged information leads to lower risk when the nonlinear map connecting latent variables and observations is known up to a linear transform (Section~\ref{sec:fixed_rep}). In doing so, we also find that when the  representation dimension grows larger than the number of samples, the benefit of privileged information vanishes. We show that a privileged learner using random feature maps can learn optimal models consistently, even when the relationship between latent and observed variables is unknown, and give a practical algorithm based on this idea (Section~\ref{sec:kernels_and_maps}). However, random feature methods may suffer from bias in small samples. As a remedy, we propose several representation learning algorithms aimed at trading off bias and variance (Section~\ref{sec:learn_rep}). In experiments, we find that privileged time-series learners with either random features or representation learning reduce variance and improve latent state recovery in small-sample settings (Section~\ref{sec:experiments}) on both synthetic and real-world regression tasks.

%
%
\section{Prediction and privileged information in nonlinear time series}
\label{sec:background}%
\begin{figure}
    \centering
    \begin{subfigure}{.42\textwidth}
        \centering
        \vspace{0.35em}
        \includegraphics[width=.9\textwidth]{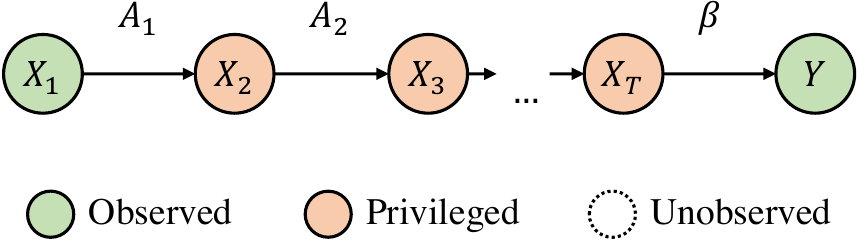}
        \vspace{0.35em}
        \caption{Linear dynamical system, fully observed\label{fig:obs_dgp}}
    \end{subfigure}
    \qquad
    \begin{subfigure}{.47\textwidth}
        \centering
        \includegraphics[width=.9\textwidth]{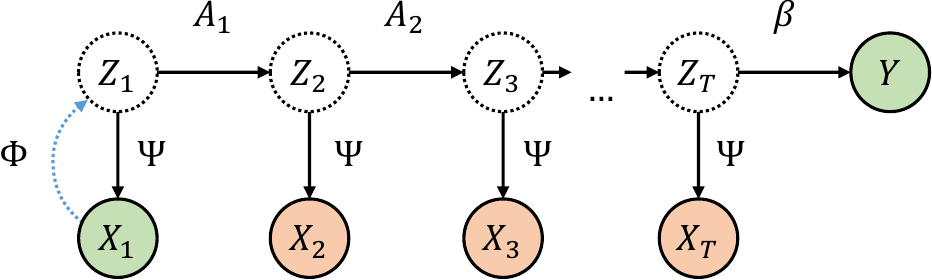}
        \caption{Linear latent dynamics, nonlinear observations\label{fig:latent_dgp}}
    \end{subfigure}
    \caption{Comparison between the linear data generating process and the latent nonlinear generalization in this work. $\Psi$ indicates the observation function of the latent system, with $\Phi$ its left inverse.}
    \label{fig:dgps}
    \vspace{-.5em}
\end{figure}

We aim to predict outcomes $Y$ in $\cY \subseteq \bbR^q$  based on covariates $X_1$ in $\cX \subseteq \bbR^k$. $X_1$ are observed at a baseline time point $t=1$, starting a discrete time series on the form $X_1, X_2, \dots X_T, Y$. Outcomes are assumed to be compositions of a representation $\Phi$, a linear map $\theta$ and  Gaussian noise $\epsilon$,
\begin{equation}\label{eq:nonliny}
Y = h(X_1) +\epsilon,\;\;\mbox{ where }\;\; h(x_1) \coloneqq \theta^\top \Phi(x_1),\;\;\mbox{ and }\;\;\epsilon \sim \cN(0, \tilde{\sigma}_Y^2)~.
\end{equation}
In addition to observations of $X_1$ and $Y$, \emph{privileged information} (PI) is available in the form of samples of random variables, $X_2, ..., X_T$, from intermediate time points between $X_1$ and $Y$, all taking values in $\cX$. Two data generating processes (DGPs) with this structure are illustrated in Figure~\ref{fig:dgps}. Unlike baseline variables $X_1$, privileged information is observed \emph{only at training time, not at test time}. Therefore, it can only benefit learning, and not inference.
Data sets of $m \in \bbN_+$ training samples, $D \coloneqq \{(x_{i,1}, x_{i,2}, ..., x_{i,T}, y_i)\}_{i=1}^m$, are drawn independently and identically distributed from a fixed, unknown distribution $p$ over all random variables in our system. We let $\bX_t = [x_{1,t}, ..., x_{m,t}]^\top$ denote the data matrix of features observed at time $t=1, ..., T$ and $\bY = [y_{1}, ..., y_{m}]^\top$ the vector of all outcomes observed in $D$.
A learning algorithm $\mathscr{A} : \cD \rightarrow \cH$ maps data sets $D \in \cD$ to hypotheses $\hh \in \cH$.
An \emph{efficient} algorithm $\scrA$ has small expected risk $\olR_p(\scrA)$ with respect to a loss $L : \cY \times \cY \rightarrow \bbR$ over training sets of fixed size $m$ drawn from $p$,
\begin{equation}\label{eq:exprisk}
\olR_p(\mathscr{A}) \coloneqq \E_D[R_p(\hh)] \;\;\mbox{ where }\;\; \hh = \mathscr{A}(D) \;\;\mbox{ and }\;\; R_p(\hh) \coloneqq \E_p[L(\hh(X_1), Y)]~.
\end{equation}
We study the regression setting with $L$ the squared error, $L(y,y') = \|y-y'\|_2^2$. In analysis, we focus on univariate outcomes, $q=1$, but all results extend to multivariate outcomes, $q>1$.

Our main goal is to compare \emph{privileged learners} $\scrA_\PI$, which make use of privileged information, to \emph{classical learners} $\scrA_\CL$ which learn without PI. We seek to identify conditions and algorithms for which privileged learning is more efficient, i.e., it leads to lower risk in expectation, $\olR(\mathscr{A}_\PI) \leq \olR(\mathscr{A}_\CL)$ for the same number of training samples. We describe such a setting next.

\subsection{Privileged information in latent dynamical systems}

We study tasks where data are produced by a latent dynamical process. Observations $X_t$ are generated from unobserved latent states $Z_t$ through a nonlinear transformation $\Psi$, see Figure~\ref{fig:latent_dgp}. This means that the target of learning, $h(X_1) = \E[Y\mid X_1]$, is nonlinear in $X_1$.
Latent dynamical systems like these have proven successful at modelling a variety of phenomena as for example fluid dynamics in physics \citep{lee_latent_dynamics} and human brain activity in neuroscience \citep{Kao2015}.

\begin{thmasmp}[Latent linear-Gaussian system]\label{asmp:linlatent}%
Let $Z_1, ..., Z_T$ be latent states related as a  linear-Gaussian dynamical system in a space $\cZ \subseteq \bbR^d$ with $Z_1$ of arbitrary distribution. Further, let the observation function be an injective map $\Psi : \cZ \rightarrow \cX$ with left-inverse (representation) $\Phi : \cX \rightarrow \cZ$. With $A_1, ..., A_{T-1} \in \bbR^{d \times d}$, $\beta \in \bbR^d$ assume that $Z_1, ..., Z_T, X_1, ..., X_T, Y$ are generated as
\begin{align*}
Z_t & = {A_{t-1}^\top} Z_{t-1} + \epsilon_t \;\; \mbox{ for } \;\;  t\geq2,  \;\;\; \mbox{ and } \;\;\;
\forall t : X_t = \Psi(Z_t) \;\;\;\mbox{ and }\;\;\; Y = \beta^\top Z_T + \epsilon_Y~.%
\end{align*}
where $\epsilon_t\sim \cN(0, \sigma_t^2I)$ and $\epsilon_Y \sim \cN(0, \sigma_Y^2)$.
\end{thmasmp}
It is easy to verify that Assumption~\ref{asmp:linlatent} is consistent with \eqref{eq:nonliny}, but stronger: there are more systems that map $X_1$ to $Y$ as $\theta^\top \Phi(X_1)$ than systems where additionally $X_t = \Psi(Z_t)$. Nevertheless, it is much more general than the  results of \cite{karlsson2021using}, limited to the linear setting, i.e. $\Phi(x_1) = x_1$.

Next, we present a generalized version of the LuPTS algorithm of \cite{karlsson2021using}  that is provably preferable to classical learning when data is generated according to Assumption~\ref{asmp:linlatent} and $\Phi$ is known up to a linear transform. Further, we discuss how a privileged learner can be made universally consistent for unknown representations $\Phi$ when combined with random features.

%
%
\section{Efficient learning with time-series privileged information}
\label{sec:models}
We analyze and compare learners from the privileged and classical paradigms which produce estimates of the form $\hat{h}(x_1) = \htheta^\top \hPhi(x_1)$, as motivated by \eqref{eq:nonliny}. We let each use representations $\hPhi : \cX \rightarrow \hat{\cZ} \subseteq \bbR^{\hd}$ from a family $\cF$ \emph{shared by both paradigms}, so that the hypothesis class $\cH \ni \hat{h}$ is shared as well.

\begin{figure}[t!]
\noindent
\begin{minipage}[t]{.55\textwidth}
\SetKwInput{kwParam}{Parameters}
\SetKwInput{kwInput}{Input}
\begin{algorithm}[H]
    \caption{Generalized LuPTS \label{alg:genlupts}}
    \kwInput{Data $D=(\{\bX_t\}, \bY)$; Repr. $\hPhi$ or kernel $\kappa$}
    \uIf{\textnormal{using a fixed representation} $\hPhi$}{
         $\hbZ_t = [\hPhi(x_{1, t}), ..., \hPhi(x_{m, t})]^\top$ for $t=1, ..., T$

         $\htheta_\PI \coloneqq \big[\prod\limits_{t=1}^{T-1} \underbrace{(\hbZ_t^\top\hbZ_t)^\dagger \hbZ_t^\top \hbZ_{t+1}}_{\hA_t}\big] \underbrace{(\hbZ_T^\top\hbZ_T)^\dagger \hbZ_T^\top \bY}_{\hbeta}$

         $\hat{h}_\PI(\cdot) \coloneqq \htheta_\PI^\top \hPhi(\cdot)$
    }
    \vspace{.5em}
    \uElseIf{\textnormal{using kernel} $\kappa$}{
        $\hbK_t = \kappa(\bX_t, \bX_t)$ for $t=1, ..., T$

        $\balpha \coloneqq \hbK_1^\dagger \left[ \prod_{t=2}^{T} \hbK_t \hbK_t^\dagger \right]\bY$

        $\hat{h}_\PI(\cdot) \coloneqq \sum_{i=1}^m \alpha_{i} \kappa(x_{i,1}, \cdot)$
    }
    \Return $\hat{h}_\PI$
\end{algorithm}
\end{minipage}%
\hfill
\begin{minipage}[t]{.44\textwidth}
\centering
\vspace{.01pt}
\includegraphics[width=0.92\textwidth]{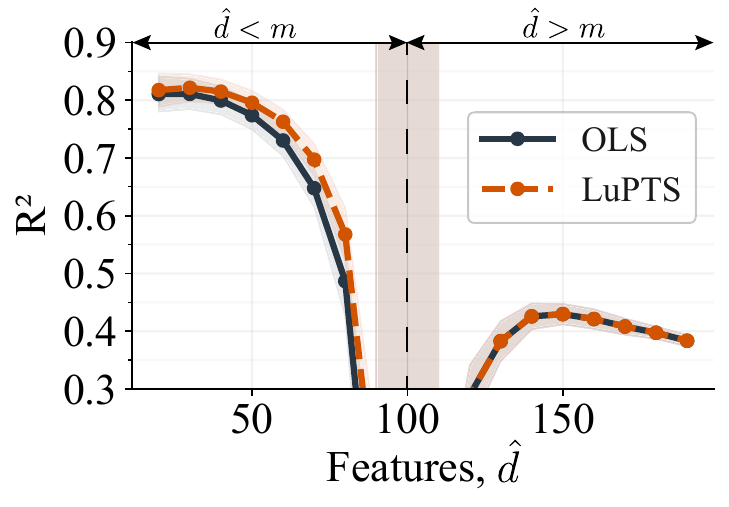}%
\vspace{-.5em}
\captionof{figure}{Two regimes of generalized LuPTS with varying feature dimension $\hat{d}$, and sample size $m=100$. When $m > \hat{d}$, generalized LuPTS is provably never worse than OLS. When $m \leq \hat{d}$, they are equivalent. See Appendix~\ref{app:additional_experiments} for details.}\label{fig:limitations}
\end{minipage}
\end{figure}

\subsection{Learning with true representations known up to a linear transform}
\label{sec:fixed_rep}
In this section, we assume that privileged and classical learners have access to a common, fixed representation function $\hPhi$ which is related to the true representation function through a linear transform, meaning there exists a matrix $B$ such that $\hPhi(\cdot)=B\Phi(\cdot)$.
\paragraph{Classical learning.} As a classical learner to serve as a strong comparison point for our privileged learners, we use the ordinary least-squares (OLS) linear estimator applied to the latent variables at the first time point inferred by $\hPhi$. With $\hbZ_1 = [\hPhi(x_{1, 1}), ..., \hPhi(x_{m, 1})]^\top \in \mathbb{R}^{m \times \hat{d}}$,
\begin{equation} \label{eq:classical_estimate}
\scrA_\CL(D) = \hat{h}_\CL(\cdot) \coloneqq  \htheta_\CL^\top\hPhi(\cdot) \;\;\mbox{ where }\;\; \htheta_\CL \coloneqq  (\hbZ_1^\top\hbZ_1)^\dagger\hbZ_1^\top\bY~.
\end{equation}
When $m \geq \hd$ and $\Phi$ is known up to linear transform, 
$\hh_\CL$ is the minimum-risk unbiased estimator of $h$ which \emph{does not} use privileged information.
To accommodate the underdetermined case where $m < \hd$, the matrix inverse is replaced by the Moore-Penrose pseudo inverse $(\cdot)^\dagger$~\citep{penrose1956best}.

\paragraph{Generalized LuPTS.} For privileged learning, we first compute $\hbZ_t = [\hPhi(x_{1, t}), ..., \hPhi(x_{m, t})]^\top$ for $t=1, ..., T$. Then, we independently fit  parameter estimates $\hA_1, ..., \hA_T, \hbeta$ of the dynamical system shown in Figure~\ref{fig:latent_dgp} by minimizing the squared error in single-step predictions. This equates to a series of OLS estimates in $\hcZ$. 
At test time, baseline variables $x_1$ are embedded with $\hPhi$ and the latent dynamical system is simulated for $T$ time steps to form a prediction $\hh_\PI(x_1) = (\hA_1 \cdots \hA_T\beta)^\top \hPhi(x_1)$. Putting this together, we arrive at Algorithm~\ref{alg:genlupts} which we call \emph{generalized LuPTS}. We may apply a simple matrix identity and replace terms $\hbZ_t \hbZ_t^\top$ in Algorithm~\ref{alg:genlupts} by the Gram matrices $\hbK_t = \kappa(\bX_t, \bX_t)$ of a reproducing kernel $\kappa$ with corresponding (implicit) feature map $\hPhi$, $\kappa(x,x') = \langle \hPhi(x), \hPhi(x')\rangle$~\citep{smola1998learning}. This variant allows for learning with unknown $\Phi$.

We now state a result which says that under Assumption~\ref{asmp:linlatent}, for an appropriate fixed representation $\hPhi$ or kernel $\kappa$, generalized LuPTS is never worse in expectation than the classical learner in \eqref{eq:classical_estimate}.

\begin{thmthm}\label{thm:main}%
Let $D$ be a data set of size $m$ drawn from $p$, consistent with Assumption~\ref{asmp:linlatent}. Assume that the left inverse $\Phi : \cX \rightarrow \cZ$ of the observation function $\Psi$ is known up to a linear transform, explicitly or through a kernel $\kappa(x,x') = \langle \hPhi(x), \hPhi(x')\rangle$, i.e., there exists a matrix $B$ with linearly independent columns such that $\hPhi(x) = B\Phi(x)$ with $\hPhi : \cX \rightarrow \hcZ$ for all $x \in \cX$. Then, it holds for the privileged learner $\scrA_\PI(D) = \hat{h}_\PI$ (Algorithm~\ref{alg:genlupts}) and the classical learner $\scrA_\CL(D) = \hat{h}_\CL$ \eqref{eq:classical_estimate},
\begin{equation}
\olR(\scrA_\PI) = \olR(\scrA_\CL) - \E_{\hat{h}_\PI, X_1}[\V_D(\hat{h}_\CL(X_1) \mid \hat{h}_\PI)]~.%
\end{equation}%
\end{thmthm}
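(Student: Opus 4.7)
The plan is to reduce the theorem to a linear-Gaussian analysis in the $\hPhi$-image, establish unbiasedness of both estimators, and then derive the identity as a Rao--Blackwell-type decomposition via the law of total variance.

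First, I would exploit $\hPhi(\cdot) = B\Phi(\cdot)$ to show that $\hat{Z}_t \coloneqq \hPhi(X_t) = BZ_t$ obeys a (possibly rank-deficient) linear-Gaussian dynamical system in $\bbR^{\hd}$. Using the left inverse $B^\dagger = (B^\top B)^{-1}B^\top$, set $\tilde{A}_t = B^{\dagger\top} A_t B^\top$ and $\tilde{\beta} = B^{\dagger\top}\beta$, so that $\hat{Z}_{t+1} = \tilde{A}_t^\top \hat{Z}_t + B\epsilon_{t+1}$ and $Y = \tilde{\beta}^\top \hat{Z}_T + \epsilon_Y$. Since Algorithm~\ref{alg:genlupts} and the classical OLS estimator both depend on data only through $\hbZ_1,\ldots,\hbZ_T,\bY$, and the kernel variant is equivalent to the explicit-feature one via the matrix identity used in its derivation, the whole analysis can proceed in $\hat{Z}$-space as in the fully-observed linear case of \cite{karlsson2021using}, with the pseudo-inverses in Algorithm~\ref{alg:genlupts} handling the potentially rank-deficient regime.

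Next, I would verify that both estimators are unbiased for $h$, i.e.\ $\E_D[\hat{h}(x_1)] = h(x_1)$ for every fixed $x_1$. For $\hat{h}_\CL$ this follows from standard OLS unbiasedness once the cumulative noise in $Y = (\tilde{A}_1\cdots \tilde{A}_{T-1}\tilde{\beta})^\top \hat{Z}_1 + \xi$ is shown to have mean zero given $\hat{Z}_1$. For $\hat{h}_\PI$ I would argue inductively: conditional on earlier feature matrices, each $\hA_t$ is OLS-unbiased for $\tilde{A}_t$ and $\hbeta$ for $\tilde{\beta}$, and the product telescopes in expectation because the innovations $\epsilon_{t+1}$ are independent of $\hbZ_1,\ldots,\hbZ_t$. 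Given unbiasedness and independence of $D$ from the test point $(X_1,Y)$, the risk simplifies to $\olR(\scrA) = \sigma_Y^2 + \E_{X_1}[\V_D(\hat{h}(X_1))]$, so the theorem reduces to a variance comparison.

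For that comparison, I would apply the law of total variance pointwise in $x_1$,
\begin{equation*}
\V_D(\hat{h}_\CL(x_1)) = \V_D\bigl(\E_D[\hat{h}_\CL(x_1) \mid \hat{h}_\PI]\bigr) + \E_D\bigl[\V_D(\hat{h}_\CL(x_1) \mid \hat{h}_\PI)\bigr],
\end{equation*}
so the result follows if the Rao--Blackwell identity $\E_D[\hat{h}_\CL(x_1)\mid \hat{h}_\PI] = \hat{h}_\PI(x_1)$ holds: the first term on the right then equals $\V_D(\hat{h}_\PI(x_1))$ and the second, after integrating over $X_1$, is exactly the claimed correction. This identity is the main obstacle. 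My strategy is to compute closed forms: with $P_t = \hbZ_t\hbZ_t^\dagger$ denoting the orthogonal projection onto the column span of $\hbZ_t$, one finds $\htheta_\PI = \hbZ_1^\dagger P_2 P_3\cdots P_T \bY$ while $\htheta_\CL = \hbZ_1^\dagger \bY$, so the discrepancy is $\htheta_\CL - \htheta_\PI = \hbZ_1^\dagger(I - P_2\cdots P_T)\bY$. Using the Markov structure of the latent system and the independence of the innovations $\epsilon_{t+1}$ from the feature history, I would show this discrepancy has conditional mean zero and is uncorrelated with $\htheta_\PI$; joint Gaussianity upgrades uncorrelatedness to conditional independence, yielding the required identity. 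A secondary technicality is the regime $m \le \hd$, where the projections $P_t$ become the identity, so $\htheta_\PI = \htheta_\CL$ and the correction vanishes, matching Figure~\ref{fig:limitations}.
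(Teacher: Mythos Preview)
Your overall structure---reduce to a bias--variance decomposition via unbiasedness, then establish the Rao--Blackwell identity $\E_D[\hat{h}_\CL(x_1)\mid \hat{h}_\PI]=\hat{h}_\PI(x_1)$ using the law of total variance---is sound and matches the skeleton of the \citet{karlsson2021using} result that the paper ultimately relies on. The route, however, is genuinely different from the paper's. The paper does \emph{not} work in the $\hat{\Phi}$-image with transformed parameters $\tilde{A}_t,\tilde{\beta}$; instead it proves an invariance lemma showing that both $\hat{h}_\PI$ and $\hat{h}_\CL$ are unchanged when $\hPhi$ is replaced by $\Phi$ itself (exploiting $B^\dagger B=I$ and pseudo-inverse factorization identities), and then applies Theorem~1 of \citet{karlsson2021using} directly in the full-rank $d$-dimensional $\Phi$-space. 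What this buys the paper is that all OLS manipulations happen with invertible $\Sigma_t=\bZ_t^\top\bZ_t$, avoiding the rank-deficient regime your $\hat{Z}$-space argument must navigate; what your route buys is self-containment, since you re-derive the Rao--Blackwell identity rather than citing it.

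Two technical points deserve attention. First, your closing remark that ``in the regime $m\le\hd$ the projections $P_t$ become the identity'' is not correct as stated: $P_t=\hbZ_t\hbZ_t^\dagger$ has rank $\min(m,d)$ because $\hbZ_t=\bZ_t B^\top$, so $P_t=I_m$ only when $m\le d$, not $m\le\hd$. The paper's case split is on $m$ versus $d$, invoking Proposition~\ref{prop:coincide} only when the Gram matrices are genuinely invertible. Second, your appeal to ``joint Gaussianity upgrades uncorrelatedness to conditional independence'' is loose because Assumption~\ref{asmp:linlatent} permits $Z_1$ of arbitrary distribution, so $(\htheta_\CL,\htheta_\PI)$ are not jointly Gaussian in general. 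The Rao--Blackwell identity still holds, but the clean way to obtain it is by nested conditioning on $\hbZ_1,\ldots,\hbZ_T$ (after which only the Gaussian innovations remain) and the tower property, rather than by a global Gaussianity argument.
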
%
\begin{proofsketch}%
First, we show that the predictions made by generalized LuPTS are invariant to a linear transform $B$ applied to $\hPhi(\cdot)$ during training, see Appendix~\ref{app:proof}. Then we consider two cases: (i) When $\hat{d}\leq m$ we may re-purpose the proof of Theorem~1 in \cite{karlsson2021using}, (ii)~When $m < \hat{d}$ Proposition~\ref{prop:coincide} below directly implies $\E_{\hat{h}_\PI, X_1}[\V_D(\hat{h}_\CL(X_1) \mid \hat{h}_\PI)]=0$ and $\olR(\scrA_\PI) = \olR(\scrA_\CL)$.
\end{proofsketch}%
Theorem~\ref{thm:main} implies that the privileged learner is at least as sample efficient as the classical one since $\V(\cdot) \geq 0$ and thus $\olR(\scrA_\PI) \leq \olR(\scrA_\CL)$ for the same number of training samples $m$. The result is a direct generalization of the main result in \citet{karlsson2021using}. We also observe that generalized LuPTS and the classical learner coincide under certain conditions when $m \leq \hat{d}$.
\begin{thmprop}\label{prop:coincide}
Let $\hPhi : \cX \rightarrow \hcZ \subseteq \bbR^{\hd}$ be any map with corresponding kernel $\kappa$. Let $\hbK_t$ be the Gram matrix of $\kappa$ applied to $\bX_t$ and let $\hat{h}_\CL, \hat{h}_\PI$ be classical \eqref{eq:classical_estimate} and privileged (Algorithm~\ref{alg:genlupts}) estimates. Then,
$$
\hbK_t \;\mbox{is invertible for all}\; t \implies \hat{h}_\PI = \hat{h}_\CL.
$$
$\hbK_t$ is noninvertible whenever $m > \hat{d}$, assuming linearly independent features.%
\end{thmprop}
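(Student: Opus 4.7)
The plan is to handle the two claims separately. For the implication, I will express both $\hat{h}_\CL$ and $\hat{h}_\PI$ in a common dual (kernel) form and show that invertibility of every $\hbK_t$ collapses the LuPTS product to the classical solution. For the non-invertibility statement I will use only a rank argument on $\hbK_t = \hbZ_t\hbZ_t^\top$.

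The key algebraic tool for the first claim is the Moore–Penrose identity $(A^\top A)^\dagger A^\top = A^\top (AA^\top)^\dagger$, applied with $A = \hbZ_t$, which gives $(\hbZ_t^\top\hbZ_t)^\dagger \hbZ_t^\top = \hbZ_t^\top \hbK_t^\dagger$. Plugging this into the definitions of $\hA_t$ and $\hbeta$ in Algorithm~\ref{alg:genlupts} and telescoping $\hbZ_{t+1}\hbZ_{t+1}^\top = \hbK_{t+1}$, I obtain
\begin{equation*}
\htheta_\PI \;=\; \hbZ_1^\top\,\hbK_1^\dagger\Bigl[\prod_{t=2}^T \hbK_t\hbK_t^\dagger\Bigr]\bY \;=\; \hbZ_1^\top\balpha,
\end{equation*}
so that $\hat{h}_\PI(x) = \hPhi(x)^\top\hbZ_1^\top\balpha = \sum_i \alpha_i\kappa(x_{i,1},x)$; this simultaneously verifies that the two branches of Algorithm~\ref{alg:genlupts} agree. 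The same identity applied to $\htheta_\CL = (\hbZ_1^\top\hbZ_1)^\dagger\hbZ_1^\top\bY$ gives $\hat{h}_\CL(x) = \sum_i (\hbK_1^\dagger\bY)_i\,\kappa(x_{i,1},x)$. If every $\hbK_t$ is invertible, then $\hbK_t^\dagger = \hbK_t^{-1}$ and each factor $\hbK_t\hbK_t^\dagger = I$, so $\balpha = \hbK_1^{-1}\bY = \hbK_1^\dagger\bY$, giving $\hat{h}_\PI = \hat{h}_\CL$ pointwise on $\cX$.

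For the second claim, $\hbK_t = \hbZ_t\hbZ_t^\top$ is an $m\times m$ Gram matrix whose rank equals $\mathrm{rank}(\hbZ_t) \le \min(m,\hat{d})$. Linear independence of the $\hat{d}$ coordinate features of $\hPhi$ (combined with sufficient diversity of the training inputs) makes $\mathrm{rank}(\hbZ_t) = \hat{d}$; in any case the upper bound $\hat{d}$ holds. Hence whenever $m > \hat{d}$ we have $\mathrm{rank}(\hbK_t) \le \hat{d} < m$, so $\hbK_t$ is singular.

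The only step with any real content is the Moore–Penrose identity, which is standard. The mildly delicate bookkeeping is making sure the pseudo-inverse identity telescopes correctly even when the inner $\hbK_t$ are singular—this is why I write out the product explicitly rather than appealing to inverses directly; it also conveniently shows that the two cases of Algorithm~\ref{alg:genlupts} define the same estimator, which is used implicitly by Theorem~\ref{thm:main}.
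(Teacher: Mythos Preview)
Your proposal is correct and follows essentially the same route as the paper's proof sketch: write the estimator in kernel form so that the factors $\hbK_t\hbK_t^\dagger$ appear, then cancel them to the identity when each $\hbK_t$ is invertible, and use a rank bound for the non-invertibility claim. Your version is more detailed---the explicit use of the Moore--Penrose identity $(\hbZ_t^\top\hbZ_t)^\dagger\hbZ_t^\top=\hbZ_t^\top\hbK_t^\dagger$ and the telescoping derivation nicely show, as a by-product, that the representation and kernel branches of Algorithm~\ref{alg:genlupts} define the same predictor---but the underlying argument is the same.
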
%
\begin{proofsketch}%
When the Gram matrices $\bK_t$ are invertible, the pseudo-inverse coincides with the inverse, and factors $\bK_t \bK_t^{\dagger}$ in the LuPTS estimators cancel, making LuPTS and CL equal.
\end{proofsketch}%
\begin{thmrems*}
Theorem~\ref{thm:main} extends the applicability of LuPTS to a) nonlinear prediction through a fixed feature map or b) kernel estimation and c) to the underdetermined case of $m < \hd$. While previous work is restricted to observed linear systems our result considers the case of latent linear dynamics which only need to be identified up to a linear transform (see Figure~\ref{fig:dgps}).
Proposition~\ref{prop:coincide} does not claim that no preferable privileged learner exists; it is a statement only about generalized LuPTS. We may relate the result to the double descent characteristic previously observed for other linear estimators for fixed $m$ and varying $\hat{d}$ \citep{prehistory_double_descent}. After a phase transition around $\hat{d}=m$ LuPTS's variance reduces for a second time when it becomes equivalent to the classical learner (see Figure~\ref{fig:limitations}).
\end{thmrems*}

\subsection{Random feature maps for unknown representations}
\label{sec:kernels_and_maps}

When the true $\Phi$ is entirely unknown, as is often the case in practice, using a poor representation $\hPhi$ may yield biased results for both classical and privileged learners. A common solution in nonlinear prediction is to use a universal kernel, such as the Gaussian-RBF kernel. These have dense reproducing-kernel Hilbert spaces which allow approximation of any continuous function. However, universal  kernels also have positive-definite and thus invertible Gram matrices \citep{Hofmann_2008}, which according to Proposition~\ref{prop:coincide} eliminates any gain in sample efficiency of generalized LuPTS.

Instead, we combine our algorithm with an approximation of universal kernels---random feature maps. These methods project inputs $x$ onto $\hd$ features by a random linear map $W \in \bbR^{k \times \hd}$, and a nonlinear element-wise activation function. By choosing $\hd < m$, we can benefit from the function approximation properties of universal kernels (see discussion below) and the variance reduction of generalized LuPTS.
Popular random features include random Fourier features (RFF) \citep{rahimi_recht} and random ReLU features (RRF) \citep{random_relu},\footnote{For RFF, $[W_\cN]_{ij} \sim \cN(0,1)$, and for RRF, $[W_\cU]_{ij} \sim \cU(-1,1)$ and $b_i \sim \cU(0,2\pi)$.}
$$
\hPhi_{\mbox{\sc rff}}^\gamma(x) = \sqrt{2/\hd}\left[\cos(\sqrt{2\gamma} W_\cN
^\top x + b) \right]
\;\;\mbox{ and }\;\;
\hPhi_{\mbox{\sc rrf}}^\gamma (x)= f_+(\gamma W_\cU^\top [x; 1])~.
$$
where $f_+(z) = \max(0, z)$ is the rectifier (ReLU) function and $\gamma > 0$ is a bandwidth hyperparameter.

For large enough numbers of random features and training samples, any continuous function $h$ can be approximated up to arbitrary precision by a linear map $\omega$ applied to the random features, e.g., $\hat{h}(x) = \omega^\top \hPhi_{\mathrm{RRF}}^\gamma(x)$, see~\citet{sun2019approximation,rudi2017generalization}. Applying the same argument to the step-wise estimators of LuPTS we can justify using random features in Algorithm~\ref{alg:genlupts} by the following observation: \emph{Under appropriate assumptions, we can construct a privileged learner using random feature maps which is a universally consistent estimator of $h(x_1) = \E[Y|x_1]$.} The precise construction deviates somewhat from Algorithm~\ref{alg:genlupts}, but follows the same structure. We refer to Appendix~\ref{app:universality} for a precise statement. As universal consistency describes the asymptotic behaviour in the limit of infinite samples and random features, this offers only limited insight into the benefits of privileged information in small sample settings, where performance will be a bias-variance trade-off.

\paragraph{Variance reduction \& bias amplification.} Generalized LuPTS is only guaranteed lower variance compared to the classical estimator under Assumption~\ref{asmp:linlatent}, although our empirical results (Section~\ref{sec:experiments}) suggest this applies more widely.
When $\hPhi$ is a bad approximation of $\Phi$, generalized LuPTS may amplify bias, increasing with the number of privileged time points, compared to classical learning.
We show this theoretically in Appendix~\ref{app:compounding_bias} and also empirically in Appendix~\ref{app:additional_experiments}. Whether generalized LuPTS is still preferable to classical learning in terms of prediction risk appears to depend on the amount of bias that gets amplified. Our experiments imply the variance reduction mostly dominates when using random features, whereas this is not always the case for linear LuPTS. The phenomenon of bias amplification is familiar from e.g., model-based and model-free reinforcement learning~\citep{kober_RL}. As the bias with random features may still be high for small sample settings, we next present privileged representation learning algorithms to trade off bias and variance more efficiently.

\subsection{Privileged time-series representation learning}
\label{sec:learn_rep}
Up until now the representation $\hPhi$ was considered fixed, either because $\Phi$ was known up to a linear transform (explicitly or implicitly) or because of the use of random feature methods.
Generalized LuPTS (Algorithm~\ref{alg:genlupts}) produces minimizers $\{\hA_t\}, \hbeta$ of the following objective  for fixed $\hPhi$,
\begin{equation}\label{eq:SRL}
    \mathcal{L}_{\SRL}(\hPhi, \{\hA_t\}, \hbeta) := \frac{1}{NT}\sum\limits_{i=1}^N \bigg{[}\sum\limits_{t=1}^{T-1} \frac{1}{\hat{d}}\big{\Vert}\hA_t^\top\hPhi(x_{i,t}) - \hPhi(x_{i,t+1})\big{\Vert}_2^2 + \frac{1}{q}\big{\Vert}\hbeta^\top\hPhi(x_{i,T})-y_i\big{\Vert}_2^2 \bigg{]}~.%
\end{equation}
Objective~\eqref{eq:SRL} and the systems described by Assumption~\ref{asmp:linlatent} lend themselves to methods which also learn the representation $\Phi$ in addition to the latent dynamics $\{A_t, \beta\}$. Next, we present three algorithms which combine the ideas of generalized LuPTS with the expressiveness of deep representation learning. All learners 
use equivalent encoders to represent $\hPhi(\cdot)$ and linear layers to model the relations between the latent variables $\{Z_t\}$ and the outcome $Y$. The classical learner predicts the outcome linearly from $\hat{Z}_1$. All architectures under consideration are visualized jointly in Figure~\ref{fig:rep_learners}.

\begin{figure}
    \centering
    \begin{subfigure}{.7\textwidth}
        \centering
            \includegraphics[width=0.93\textwidth]{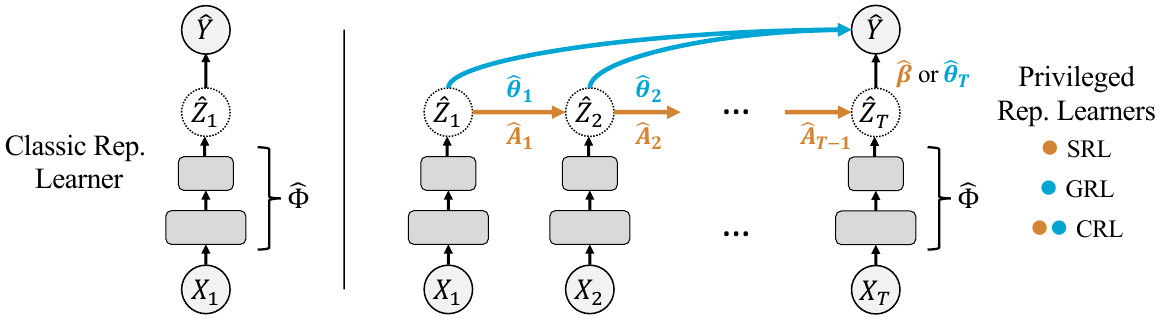}
        \caption{Classical (left) and privileged (right) representation learners. $\hPhi$ is shared across all time steps. GRL models the direct maps $\htheta_t$ to the outcome; SRL models the single steps $\hA_t$ and $\hbeta$. CRL combines the two.}
        \label{fig:rep_learners}
    \end{subfigure}
    \hfill
    \begin{subfigure}{.28\textwidth}
    \centering
        \includegraphics[width=.75\textwidth]{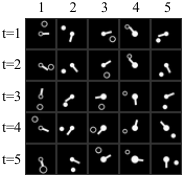}
        \caption{Five example sequences from \clocks{} image data. }
        \label{fig:clock_data_examples}
    \end{subfigure}
    \caption{Representation learning architectures (left) and samples from \clocks{} (right).}
    \vspace{-0.5em}
\end{figure}

\paragraph{SRL.}
The first privileged representation learner directly optimizes objective \eqref{eq:SRL}, just like generalized LuPTS, but now also fitting the representation $\hPhi$, parameterized by a neural network. We refer to this model as \emph{stepwise representation learner} (SRL).
As we will see in experiments, a drawback of this approach is that representations may favor predicting transitions $\hat{z}_{i,t}\rightarrow \hat{z}_{i,t+1}$ with small error, while losing information relevant for the target outcome in the process. At test time, for a new input $x_1$, SRL composes the stepwise dynamics to output   $\hat{h}_{\PI}(x_1)=\hat{\beta}^\top\hat{A}_T^\top\dots\hat{A}_1^\top\hPhi(x_1)$.

\paragraph{CRL and GRL.}
To make sure that the learned representation $\hPhi$ retains information about  $Y$, we add linear outcome supervision to the representation $\hPhi(X_t)$ at each time step $t$. Recall that, by Assumption~\ref{asmp:linlatent}, the expected outcome is linear in the latent state at \emph{any} time step. We introduce a hyperparameter $\lambda \in [0,1]$ to trade off the two types of losses and arrive at the \emph{combined representation learner} (CRL). With $\alpha = (\hPhi, \{\hA_t\}, \{\htheta_t\})$ the entire parameter vector, CRL minimizes the objective
\begin{equation}\label{eq:CRL}
    \mathcal{L}_\CRL(\alpha) := \frac{\lambda}{NTq}\sum\limits_{i,t}^{} \big{\Vert}\htheta_t^\top\hPhi(x_{i,t}) - y_i\big{\Vert}_2^2+
    \frac{1-\lambda}{N(T-1)\hd}\sum\limits_{i,t}^{} \big{\Vert}\hA_t^\top\hPhi(x_{i,t}) - \hPhi(x_{i,t+1})\big{\Vert}_2^2.
\end{equation}
We make test-time predictions using $\hat{h}_\PI(x_1)=\htheta_1^\top\hPhi(x_1)$. In experiments, we highlight the case $\lambda=1$ where only outcome supervision is used as \emph{greedy representation learner} (GRL). For a precise definition of the GRL objective, see Appendix~\ref{app:GRL}. GRL is related to multi-view learning, in which prediction of the same quantity is made from multiple ``views'' (cf. time points)~\citep{multi-view_survey}.

\section{Experiments}
\label{sec:experiments}
We compare classical learning to variants of generalized LuPTS (Algorithm~\ref{alg:genlupts}) and the privileged representation learners of Section~\ref{sec:learn_rep} on two synthetic and two real-world time-series data sets.
We (i) verify our theoretical findings by analyzing the sample efficiency and bias-variance characteristics of the given algorithms; (ii) demonstrate that generalized LuPTS with random features succeeds in settings where linear LuPTS suffers from large bias; (iii) point out that privileged representation learners offer even greater sample efficiency in practice and (iv) study how well these algorithms recover the true latent variables $\{Z_t\}$ and how this relates to  predictive accuracy.

\paragraph{Experimental setup.}
We report the mean coefficient of determination ($R^2$), proportional to the squared-error risk $\olR$, for varying sample sizes, sequence lengths and prediction horizons. Experiments are repeated and averaged over different random seeds. In each repetition, a given model performs hyperparameter tuning on the training data using random search and five-fold cross-validation before being re-trained on all training data.
The test set size is 1000 samples for synthetic data and 20\% of all data available for real-world data sets.
We consider six privileged learners of two groups. The first group comprises generalized LuPTS with the linear kernel (LuPTS) and the two random feature maps shown in Section~\ref{sec:kernels_and_maps}: Random Fourier features (Fourier RF) and random ReLU features (ReLU RF). The classical learners for this group are OLS estimators used with the same kernel or feature map.
The second group consists of the representation learners SRL, CRL and GRL. For tabular data, their encoder is a multi-layer perceptron with three hidden layers of 25 neurons each. For the image data they use LeNet-5 \citep{LeNet5}. The classical learner (Classic Rep.) uses the same encoder with a linear output layer.
The results presented were found to be robust to small changes in training parameters such as learning rate. For details on the training process we refer to Appendix~\ref{app:experiments_and_processing}.
All experiments required less than 3000 GPU-h to complete using NVIDIA Tesla T4 GPUs.\footnote{Code to reproduce all results is available at \href{https://github.com/Healthy-AI/glupts}{https://github.com/Healthy-AI/glupts}.}

\paragraph{Data sets.}
\label{sec:data_sets}
We briefly describe evaluation data sets and refer to Appendix \ref{app:experiments_and_processing} for further details. First, we create two synthetic data sets in which latent states and outcomes are generated from linear-Gaussian systems as in Assumption~\ref{asmp:linlatent}. To produce the observations $\{X_t\}$ we use a deterministic nonlinear function $\Psi:\cZ \xrightarrow[]{} \cX$. In the first synthetic data set, which we call \textbf{Square-Sign}, the nonlinear transformation $\Psi:\bbR^d \xrightarrow[]{} \bbR^{2d}$ maps each latent feature $Z_{(t,k)}$ to a two dimensional vector such that
\begin{equation}
    X_t \coloneqq \Psi(Z_{(t)})= [Z_{(t,1)}^2, \mathrm{sgn}(Z_{(t,1)}), \dots , Z_{(t,d)}^2, \mathrm{sgn}(Z_{(t,d)})]^\top.
    \nonumber
\end{equation}
The second synthetic data set uses the same latent linear system with $Z_t \in \bbR^2$ and produces square images (28$\times$28 pixels) as observations. As the images are reminiscent of clocks we refer to this data set as \clocks{}. Example sequences of these observations are presented in Figure~\ref{fig:clock_data_examples}. The angle, size and fill of the two clock hands encode the value of the corresponding latent variable. The outcome $Y$ is a linear function of $Z_T$ with $q=1$. For \squaresign{}, $q=3$.

The Metro Interstate traffic volume data set  (\traffic{})~\citep{MNTraffic} contains hourly records of the traffic volume on the interstate 94 between Minneapolis and St. Paul, MN. In addition, the data contains weather features and a holiday indication. We predict the traffic volume for a fixed time horizon given the present observations. Privileged information is observed every four hours.

We also predict the progression of Alzheimer's disease (AD) as measured by the outcome of the Mini Mental State Examination (MMSE)~\citep{MMSE}. The anonymized data were obtained through the Alzheimer's Disease Neuroimaging Initiative  (\adni{})~\citep{adni}. 
The outcome of interest is the MMSE score 48 months after the first examination. Privileged information are the measurements at 12, 24 and 36 months. In addition we tested our algorithms on the \pmair{} data set \citep{pm2.5}, where we predict the air quality in five Chinese cities (see Appendix~\ref{app:additional_experiments}).

\subsection{Sample efficiency, bias and variance}
\begin{figure}[t!]
\centering

\begin{subfigure}[t]{\textwidth}
\centering
    \includegraphics[width=\textwidth]{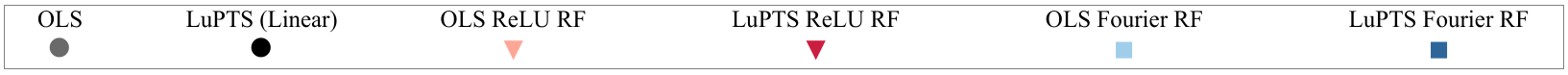}
\end{subfigure}

\begin{subfigure}[t]{0.3\textwidth}
        \includegraphics[width=\textwidth]{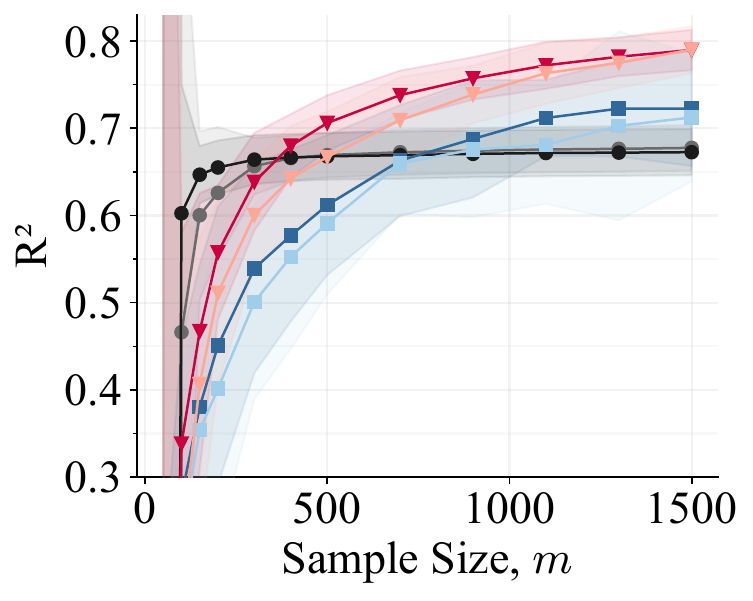}
        \caption{\traffic{}, $T=3$.}
        \label{fig:RF_Traffic_R2}
\end{subfigure}
\hfill%
\begin{subfigure}[t]{0.3\textwidth}
        \includegraphics[width=\textwidth]{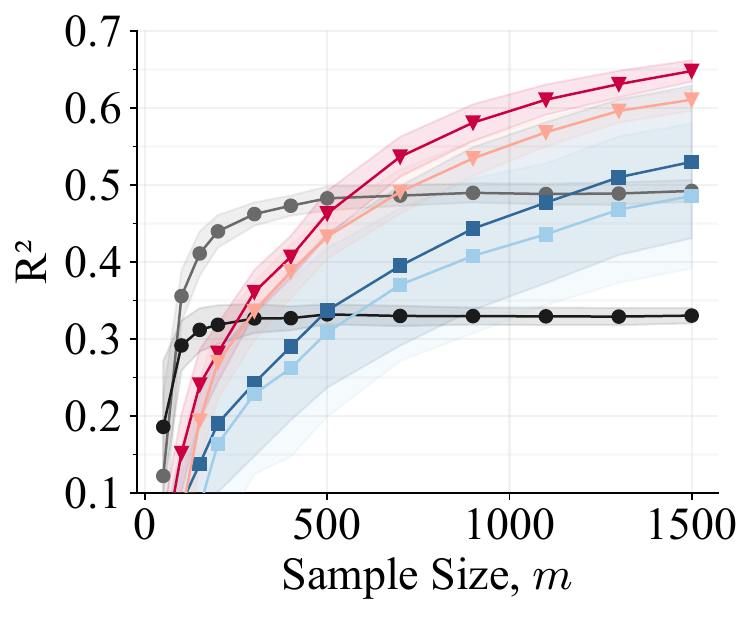}
        \caption{\squaresign{}, $T=5$, $d=10$.}
        \label{fig:RF_SS_R2}
\end{subfigure}
\hfill
\begin{subfigure}[t]{0.3\textwidth}
    \includegraphics[width=\textwidth]{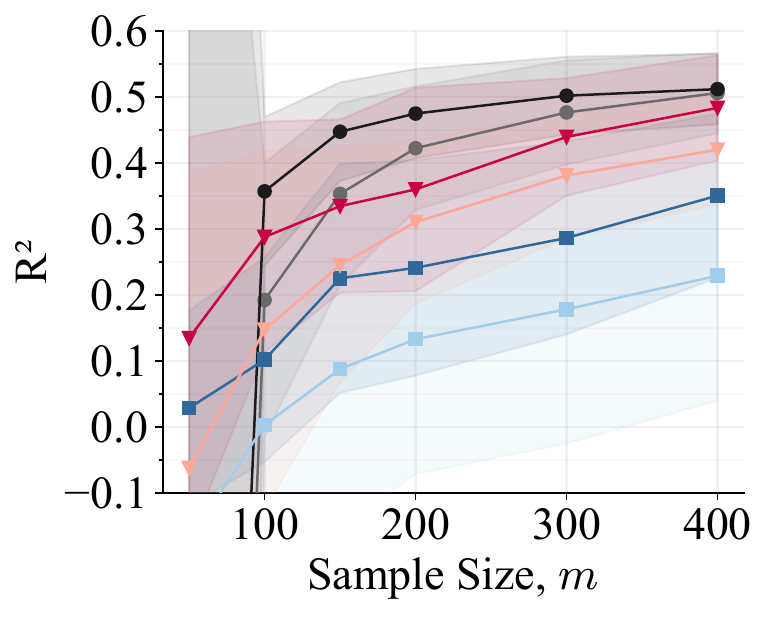}
    \caption{\adni{}, $T=4$.}
    \label{fig:RF_ADNI_R2}
\end{subfigure}
\caption{Predictive accuracy of generalized LuPTS on three data sets, over 60 repetitions. The shaded area represents one standard deviation above and below the mean over repetitions. }
\label{fig:RF_R2}
\end{figure}

\begin{figure}[t!]
    \centering
\begin{subfigure}[t]{\textwidth}
\centering
    \includegraphics[width=\textwidth]{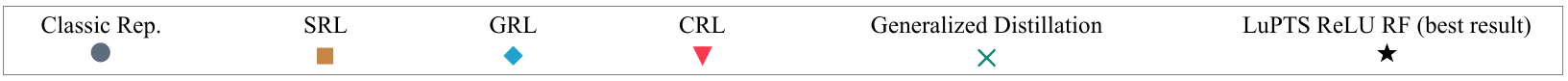}
\end{subfigure}
\begin{subfigure}[t]{0.3\textwidth}
        \includegraphics[width=\textwidth]{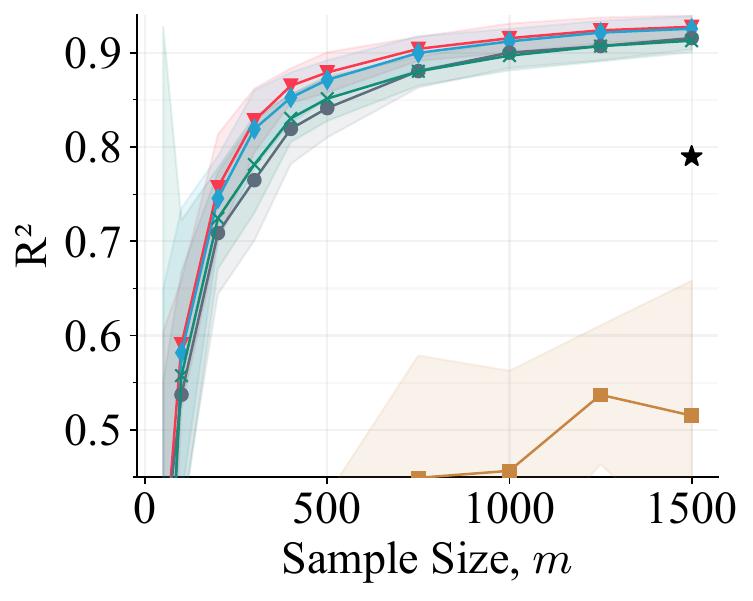}
        \caption{\traffic{}, $T=3$.}
        \label{fig:RL_TRAFFIC_R2}
\end{subfigure}
\hfill%
\begin{subfigure}[t]{0.3\textwidth}
        \includegraphics[width=\textwidth]{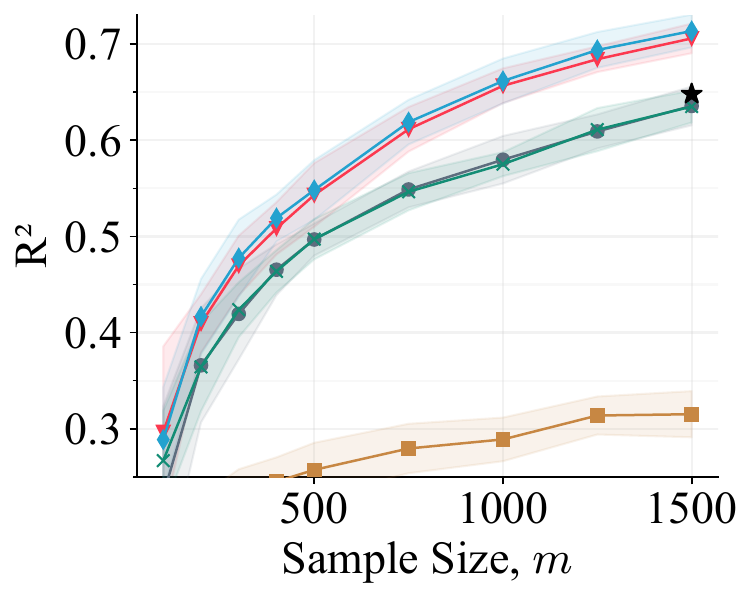}
        \caption{\squaresign{}, $T=5$, $d=10$.}
        \label{fig:RL_SS_R2}
\end{subfigure}
\hfill
\begin{subfigure}[t]{0.3\textwidth}
    \includegraphics[width=\textwidth]{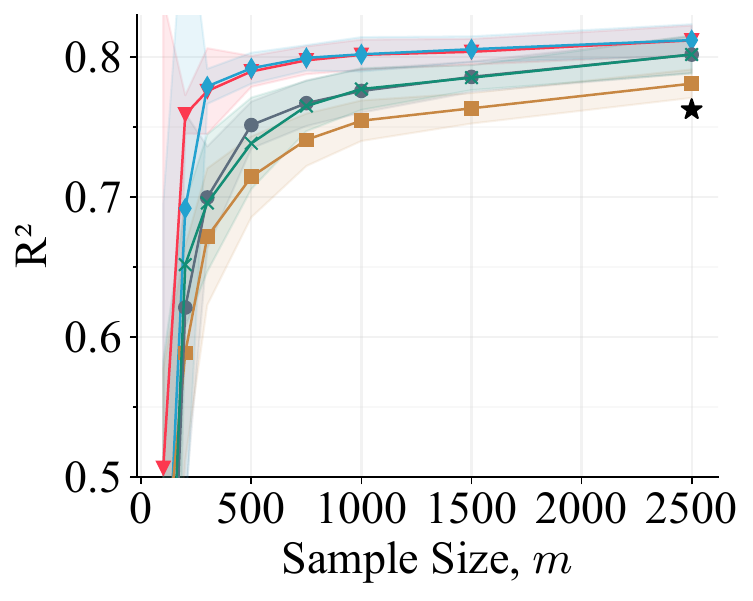}
    \caption{\clocks{}, $T=6$, $q=1$.}
    \label{fig:RL_CLOCK_R2}
\end{subfigure}
    \caption{Predictive accuracy of the representation learners over 25 repetitions. Generalized distillation is used as proposed by \citet{hayashi2019long}. For details, see Appendix~\ref{app:experiments_and_processing}.}
    \label{fig:RL_R2}
    \vspace{-1em}
\end{figure}

The main goal of our work is to improve learning efficiency by incorporating privileged time-series information. Across almost all prediction tasks and sample sizes, nonlinear variants of generalized LuPTS outperform their classical counterpart in terms of sample efficiency as can be seen in Figure~\ref{fig:RF_R2}. On the \traffic{} prediction task, LuPTS ReLU RF outperforms linear LuPTS as the former appears to exhibit less bias. On the synthetic data of \textbf{Square-Sign}, linear models reach their best accuracy quickly, while they are limited by their lack of expressiveness. Generalized LuPTS amplifies this bias, making linear LuPTS worse than OLS. Random feature methods attain higher accuracy here but are generally less sample efficient. Generalized LuPTS combined with random features manages to decrease this gap significantly.
On \adni{}, we don't see a benefit of using nonlinear models in general. We make similar observations on the \pmair{} air quality data set, see Appendix~\ref{app:additional_experiments}.

The representation learners proposed in Section~\ref{sec:learn_rep} are evaluated on the same data sets and on the image prediction task \clocks{}. Example results are displayed in Figure~\ref{fig:RL_R2}. These demonstrate that directly transferring the LuPTS objective to neural networks in the form of SRL results in subpar performance. 
SRL does not seem to have a strong enough incentive to learn representations which accurately predict the outcome. Generalized distillation for privileged time series as suggested by \cite{hayashi2019long}, does not improve upon classical learning in our tasks.
On all experiments displayed in Figure~\ref{fig:RL_R2}, CRL and GRL outperform the classical learner. The predictive accuracy of these models is similar on most tasks, as may be explained by the fact that CRL may reduce to GRL when choosing $\lambda=1$ in objective~\ref{eq:CRL}. Noticeably, the general observation of GRL and CRL being more sample efficient than the classic model, neither appears to depend on the neural architecture used for the encoder, nor does the modality of the data play an important role, as the image prediction task \clocks{} (Figure~\ref{fig:RL_CLOCK_R2}) demonstrates. For additional empirical results, we refer to Appendix~\ref{app:additional_experiments}.

\begin{figure}[t!]
    \centering
    \includegraphics[width=\textwidth]{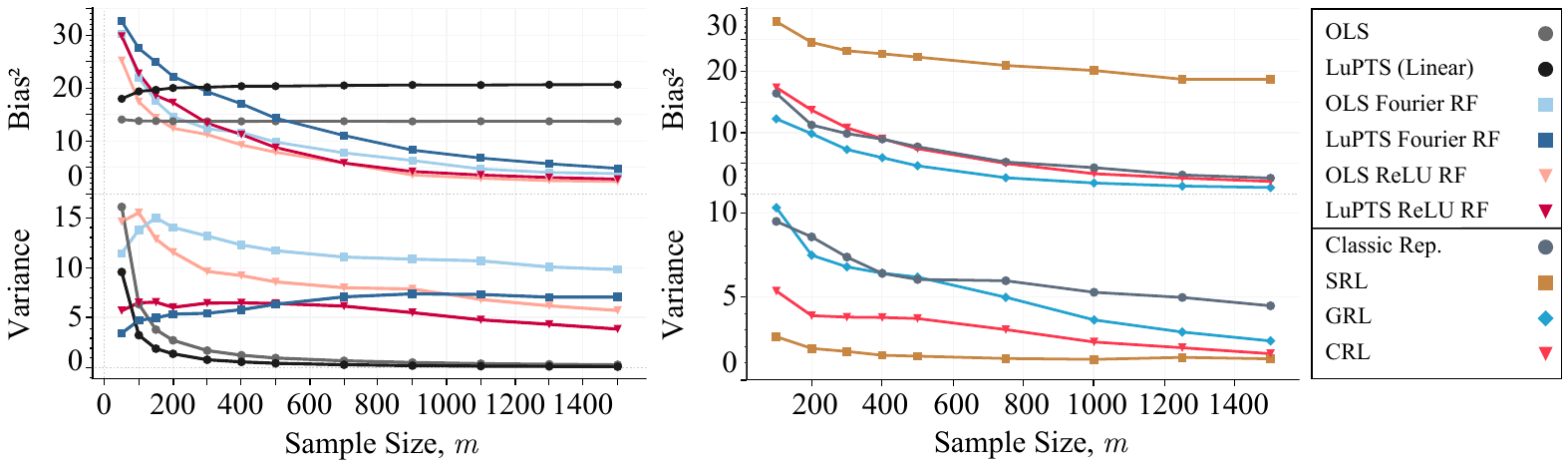}

    \caption{Estimated squared bias and variance of models trained on \squaresign{} ($T=5$, $d=10$, $q=3$) over random training sets (60 for left group, 25 for the right), evaluate on 1000 test points.}
    \label{fig:BIAS_VAR}
    \vspace{-1.em}
\end{figure}

To analyze the bias and variance characteristics of our algorithms, we can estimate the expected squared prediction \emph{bias}, $\E_{X_1}[(\E_D[\scrA(D)](X_1) - \E[Y|X_1])^2]$, by computing $\E_Y[Y|X_1]$ in synthetic DGPs in addition to the variance of the different estimators. 
Figure~\ref{fig:BIAS_VAR} depicts bias and variance for all models on the \squaresign{} data. On the left panel, all variants of generalized LuPTS exhibit lower variance than classical learning, despite being biased.
This holds generally: \emph{Across all experiments, we never encountered an example where the use of privileged information has not resulted in lower variance compared to classical learning.} On the contrary, the privileged learners suffer higher bias than the comparable classical learners because of the bias compounding over the individual prediction steps as shown in Appendix~\ref{app:compounding_bias}. For the random feature variants however, the bias decreases with the number of samples.
The representation learners display similar characteristics on the right panel of Figure~\ref{fig:BIAS_VAR}. 
Learning the transitions between latent variables $Z_t$ appears to be associated with low variance and high bias as demonstrated by the results of SRL. GRL however, which does not model these transitions, exhibits the lowest bias and the largest variance of all privileged learners. As CRL is able to trade off between these two objectives, the estimates for its variance and bias lie in between the corresponding values of the other two privileged learners.

\subsection{Latent variable recovery}
\label{sec:latent_var_recovery}

\begin{figure}[t!]
\begin{subfigure}[t]{0.39\textwidth}
    \includegraphics[width=1.\textwidth]{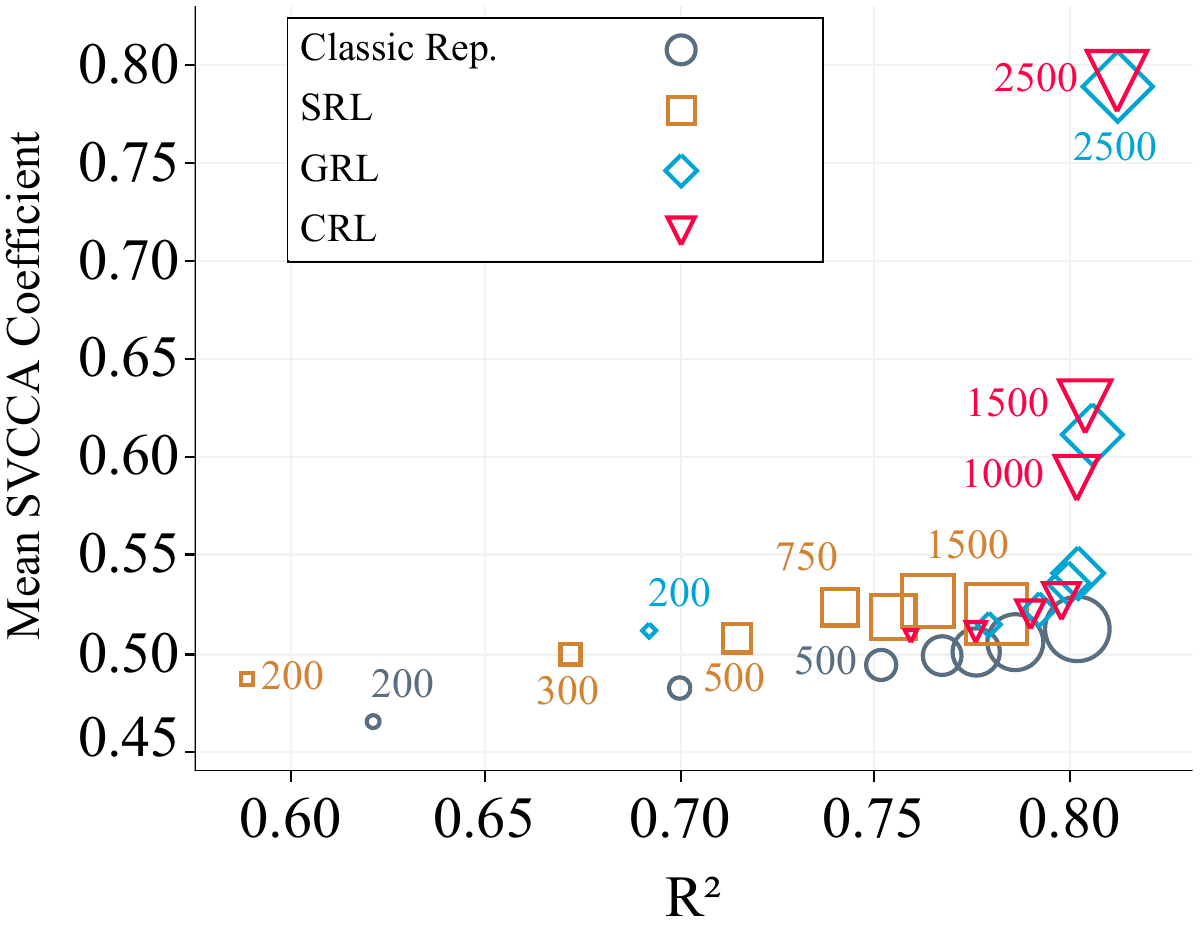}
    \caption{Mean $R^2$ and SVVCA coefficient over 25 training runs. Markers represent average results over repeated experiments with size indicating the sample size, $m$.}
    \label{fig:clock_svcca_scatter_y1}
\end{subfigure}
\hfill
\begin{subfigure}[t]{0.59\textwidth}
    \centering
    \includegraphics[width=\textwidth]{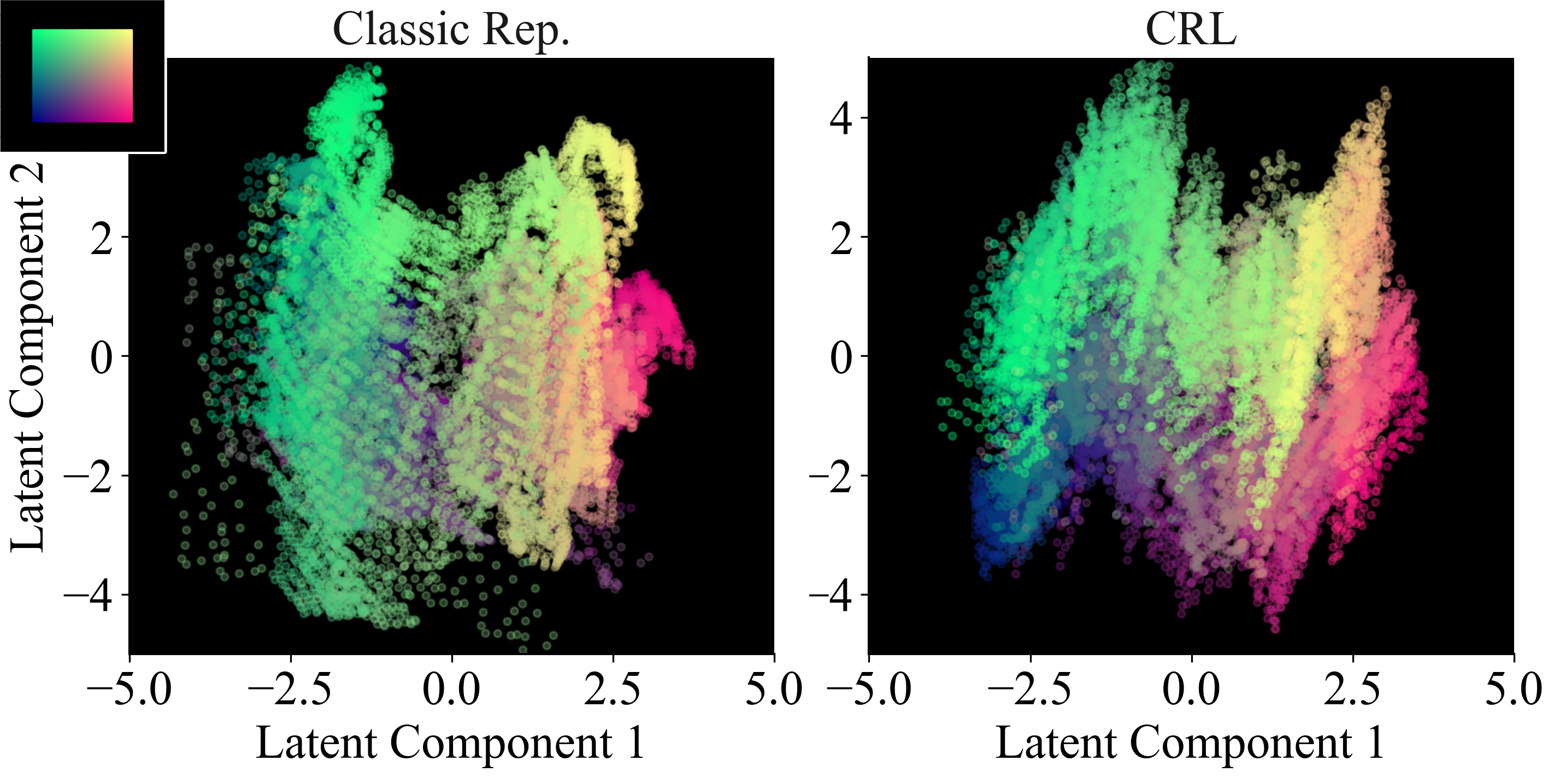}
    \caption{Visualizing the representations learned by Classic Rep. and CRL after applying SVCCA. The recovery target is shown in the top left corner. Both estimators are best in class and trained on 2500 samples.}
    \label{fig:svcca_rainbow_clock_y1}
\end{subfigure}
\caption{Analyzing the representations learned by the models of Section~\ref{sec:learn_rep} using SVVCA on the \clocks{} image regression data set (same setup as in Figure~\ref{fig:RL_CLOCK_R2}).}
\vspace{-1em}
\end{figure}
Under Assumption~\ref{asmp:linlatent}, it is sufficient to identify the representation function $\Phi$ up to a linear transform $B$ to have provable gains from privileged information over a classical learner. In synthetic data, we can assess to what extent a representation $\hPhi$ with this property has been found. As a proxy for the existence of such a transform, we can compute a measure of correlation between $\hPhi(X)$ and $\Phi(X)$, such as the Canonical Correlation Analysis (CCA)~\citep{cca_hotelling}. \cite{SVCCA} introduced a modified version called Singular Vector Canonical Correlation Analysis (SVCCA) to compute correlations when dealing with noisy dimensions in neural network representations.

The mean SVCCA coefficients $\Bar{\rho}$ and predictive accuracy of the representation learners are visualized in Figure~\ref{fig:clock_svcca_scatter_y1}. One notices that GRL and CRL produce higher correlation coefficients than the classical learner while also predicting the outcome on the \clocks{} task more accurately.
Further comparing the representations learned by privileged and classical models, Figure~\ref{fig:svcca_rainbow_clock_y1} shows a visual example of the improved latent recovery of CRL on the same task. Concluding, the use of privileged time-series information does not only increase the sample efficiency of existing algorithms but can also aid the recovery of latent variables in latent dynamical systems.

\section{Related work}
\label{sec:related}
Existing analyses for learning with privileged information guarantee improvements in asymptotic sample efficiency under strong assumptions~\citep{vapnik2009new} but are insufficient to establish a clear preference for LuPI learners for a fixed sample size. For example, \citet{pechyony2010theory} showed that for a specialized problem construction, empirical risk minimization (ERM) using privileged information can achieve fast learning rates, $\mathcal{O}(1/n)$, while classical (non-privileged) ERM can only achieve slow rates, $\mathcal{O}(1/\sqrt{n})$. We are not aware of any generalization theory tight enough to establish a lower bound on the risk of a classical learner larger than an upper bound for a PI learner. Our problem is also related to multi-task (representation) learning~\citep{maurer2016benefit}, see especially \eqref{eq:SRL}--\eqref{eq:CRL}. However, our goal is different in that only a single task is of interest after learning.

In estimation of causal effects, learning from \emph{surrogate outcomes}~\citep{prentice1989surrogate} has been proposed as a way to increase sample efficiency. Surrogates are variables related to the outcome which may be available even when the outcome is not~\citep{athey2019surrogate}. We can view these as privileged information. While the problem shares structure with ours, the goal is to compensate for \emph{missing} outcomes and analytical results give no guarantees for improved efficiency when both surrogates outcomes are always observed~\citep{chen2008improving,kallus2020role}. \citet{guo2022efficient} showed that, in the context of a linear-Gaussian system on a directed acyclic graph, a recursive least-squares estimator is the asymptotically most efficient estimator of causal effects using only the sample covariance. In the linear case on a path graph, their estimator coincides with ours. However, no analysis is provided for the nonlinear case or for fixed sample sizes.


\section{Discussion}
\label{sec:conclusion}

We have presented learning algorithms for predicting nonlinear outcomes by utilizing time-series privileged information during training. We prove that our estimator is preferable to classical learning when data is generated from a latent-variable dynamical system with partially known components. The proof holds for the case when the latent dynamical system is recovered by the representation function up to a linear transform, assuming that a left inverse of the true observation generating function exists. However, this assumption does not appear to be necessary, as our empirical results demonstrate that privileged learning is preferable to classical learning even when these conditions cannot be guaranteed. Consequently, a more general theoretical result where less is known about the latent system seems attainable. For example, one might consider a case in which only a few independent components of the latent variables are recovered by the representation used by the learning algorithm, while other components are treated as noise.

When the latent dynamical system is entirely unknown, we create practical estimators using random feature embeddings which outperform the corresponding classical learner across experiments. We show that a universally consistent learner can be constructed based on this idea, with slightly different form. As a further alternative, we propose representation learning methods of related form using neural networks and demonstrate the empirical benefits also of this estimator over classical learning. In experiments, we analyze how the gap in risk between privileged and classical learning changes for different prediction horizons, as displayed in Figure~\ref{fig:ss_svcca_scatter_app} in Appendix~\ref{app:additional_experiments}. The results suggest that the risk advantage of privileged learners grows with the sequence length of the prediction task, despite the fact that the task becomes more difficult at the same time.

Our work focuses on the setting where data is observed as a time series. This setting is chosen for its simple causal structure given by (latent) linear-Gaussian systems and because it can be motivated from many different applications. However, the ideas presented are not specifically tied to time and also apply in the case when all variables are observed simultaneously as long as the causal structure remains sequential. Moreover, we believe that the theory presented here is not limited to sequential settings and generalizes to other causal structures, in particular directed acyclic graphs that connect the baseline covariates to the outcome. In either case, one might only have access to the baseline variable at test time. For example, the timely collection of data for all covariates at test time might be very expensive or even impossible.

As pointed out before, Theorem~\ref{thm:main} requires the recovery of the latent variables up to a linear transformation. Nonlinear independent component analysis (ICA) \citep{hyvarinen2016unsupervised} aims to solve precisely this problem and has been applied to time series via time-contrastive learning. This makes for an interesting connection between learning using privileged information and nonlinear ICA, as the experiments of Section~\ref{sec:latent_var_recovery} suggest that privileged time-series information aids the recovery of latent variables.
Other remaining challenges include providing risk guarantees for learning with biased representations (including deep neural networks), with regularized estimators, and for more general data generating processes with weaker structural assumptions. We are hopeful that the utility demonstrated in this work will inspire future research to overcome these limitations.

\begin{ack}
We would like to thank Anton Matsson and Rickard Karlsson for insightful feedback and the Alzheimer’s Neuroimaging Initiative (ADNI) for collecting and providing the data on Alzheimer's disease used in this project. The present work was funded in part by the Wallenberg AI, Autonomous Systems and Software Program (WASP) funded by the Knut and Alice Wallenberg Foundation.
\end{ack}

\bibliographystyle{plainnat}
\bibliography{main}

\section*{Checklist}

\begin{enumerate}

\item For all authors...
\begin{enumerate}
  \item Do the main claims made in the abstract and introduction accurately reflect the paper's contributions and scope?
    \answerYes{The introduction refers to the section where each contribution is made.}
  \item Did you describe the limitations of your work?
    \answerYes{See e.g., the discussion in Section~\ref{sec:conclusion} and on the applicability of Theorem~\ref{thm:main} in Section~\ref{sec:fixed_rep} and \ref{sec:kernels_and_maps}.}
  \item Did you discuss any potential negative societal impacts of your work?
    \answerNA{We have not identified any potential negative societal impact directly related to our work.}
  \item Have you read the ethics review guidelines and ensured that your paper conforms to them?
    \answerYes{}
\end{enumerate}

\item If you are including theoretical results...
\begin{enumerate}
  \item Did you state the full set of assumptions of all theoretical results?
    \answerYes{See Assumption~\ref{asmp:linlatent} and Theorem~\ref{thm:main}.}
        \item Did you include complete proofs of all theoretical results?
    \answerYes{Yes, see the Appendix.}
\end{enumerate}

\item If you ran experiments...
\begin{enumerate}
  \item Did you include the code, data, and instructions needed to reproduce the main experimental results (either in the supplemental material or as a URL)?
    \answerYes{A URL to a code repository is included in the paper.}
  \item Did you specify all the training details (e.g., data splits, hyperparameters, how they were chosen)?
    \answerYes{See Section~\ref{sec:experiments} and Appendix~\ref{app:experiments_and_processing}}
        \item Did you report error bars (e.g., with respect to the random seed after running experiments multiple times)?
    \answerYes{Each result figure includes error regions.}
        \item Did you include the total amount of compute and the type of resources used (e.g., type of GPUs, internal cluster, or cloud provider)?
    \answerYes{}{See Section~\ref{sec:experiments}}
\end{enumerate}

\item If you are using existing assets (e.g., code, data, models) or curating/releasing new assets...
\begin{enumerate}
  \item If your work uses existing assets, did you cite the creators?
    \answerYes{See Appendix~\ref{app:adni}}
  \item Did you mention the license of the assets?
    \answerYes{See Appendix~\ref{app:adni}}
  \item Did you include any new assets either in the supplemental material or as a URL?
    \answerYes{Only original code.}
  \item Did you discuss whether and how consent was obtained from people whose data you're using/curating?
    \answerNA{}
  \item Did you discuss whether the data you are using/curating contains personally identifiable information or offensive content?
    \answerYes{ADNI data is anonymized.}
\end{enumerate}

\item If you used crowdsourcing or conducted research with human subjects...
\begin{enumerate}
  \item Did you include the full text of instructions given to participants and screenshots, if applicable?
    \answerNA{}
  \item Did you describe any potential participant risks, with links to Institutional Review Board (IRB) approvals, if applicable?
    \answerNA{}
  \item Did you include the estimated hourly wage paid to participants and the total amount spent on participant compensation?
    \answerNA{}
\end{enumerate}

\end{enumerate}


\clearpage
\appendix

\section*{Appendix}

\section{Proof of Theorem~\ref{thm:main}}
\label{app:proof}
Our proof requires an additional technical assumption: that the matrix of true latent states ${\bZ_t=[\Phi(x_{1,t}), ..., \Phi(x_{m,t})]^\top}$, for all $t$, for a random data set $D$ has independent columns with probability~1. This implies that $\mbox{rank}(\bZ_t) = d$ and $m \geq d$. We consider this a minor restriction since it would only be violated if either a) two or more components of $Z_t$ were perfectly correlated---in this case, a smaller system with same distributions over observations could always be constructed---or b) if we observe fewer samples than necesary to determine the system ($m < d$). Note that this \emph{does not} require that the dimension $\hd$ of the estimated representation $\hPhi$ is smaller than $m$.
We begin by proving that both classical and LuPTS estimators are invariant to a particular form of linear transformation of the representation $\hPhi$.

\begin{thmlem}{}\label{lem:linear_invariant}
Assume we have a latent linear Gaussian system as defined in Assumption~\ref{asmp:linlatent} such that for a data set $D$ of $m$ samples, the matrix of true latent states $\bZ_t = [\Phi(X_{1,t}); ...; \Phi(X_{m,t})]$ has linearly independent columns with probability 1. Let $\mathscr{A}_{\PI}^{{\Phi}}$ be the LuPTS algorithm using the system's true map $\Phi(\cdot)$ with $\Phi(\Psi(z))=z~\forall z \in \cZ$. Let $\mathscr{A}_{\PI}^{\hat{\Phi}}$ be the same algorithm using a different map $\hat{\Phi}(\cdot)$. We assume that $\exists
B : \hat{\Phi}(x)=B\Phi(x) \forall x\in\cX$.  Analogously, we denote the classical learners $\mathscr{A}_{\CL}^{\hat{\Phi}}$ and $\mathscr{A}_{\CL}^{\Phi}$. If $B\in\bbR^{\hat{d}\times d}$ has linearly independent columns we have
    \begin{flalign}
        \hat{h}_{\PI}^{\hat{\Phi}}(x) &=   \hat{h}_{\PI}^{\Phi}(x) \nonumber\\
        \text{and}~~\hat{h}_{\CL}^{\hat{\Phi}}(x) &=   \hat{h}_{\CL}^{\Phi}(x). \nonumber
    \end{flalign}
\end{thmlem}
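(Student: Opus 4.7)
The plan is to reduce everything to manipulations involving $\hat{\bZ}_t = \bZ_t B^\top$ (which follows directly from $\hat\Phi(x) = B\Phi(x)$ applied row-wise) together with a clean formula for the pseudo-inverse of matrices of the form $B M B^\top$. Under the technical assumption that $\bZ_t$ has linearly independent columns a.s., each $M_t := \bZ_t^\top \bZ_t$ is invertible, so the OLS step-estimators built from $\Phi$, namely $A_t := M_t^{-1} \bZ_t^\top \bZ_{t+1}$ and $\beta := M_T^{-1} \bZ_T^\top \bY$, exist in the usual sense. Because $B$ has linearly independent columns, its thin SVD $B = U\Sigma V^\top$ has $U \in \bbR^{\hat d \times d}$ with orthonormal columns, $V\in\bbR^{d\times d}$ orthogonal, and $\Sigma \in \bbR^{d\times d}$ invertible.

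The first key step is a lemma-style identity: for any invertible $M \in \bbR^{d\times d}$,
\begin{equation*}
(BMB^\top)^\dagger B \;=\; (B^\top)^\dagger M^{-1}, \qquad B^\top (B^\top)^\dagger \;=\; I_d, \qquad B^\dagger B \;=\; I_d.
\end{equation*}
All three follow from substituting the thin SVD and simplifying using $U^\top U = I_d$ and $V V^\top = I_d$; in particular $(BMB^\top)^\dagger = U(\Sigma V^\top M V \Sigma)^{-1} U^\top$, and then multiplication by $B$ on the right collapses it to $(B^\top)^\dagger M^{-1}$. This is the only nontrivial linear-algebra calculation.

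Armed with this identity, the second step is to express each estimated step-map and terminal map in the transformed representation as a conjugation of the corresponding estimator under $\Phi$. Namely,
\begin{equation*}
\hat A_t^{\hat\Phi} \;=\; (B M_t B^\top)^\dagger (B C_t B^\top) \;=\; (B^\top)^\dagger A_t B^\top, \qquad \hat\beta^{\hat\Phi} \;=\; (B^\top)^\dagger \beta,
\end{equation*}
where $C_t := \bZ_t^\top \bZ_{t+1}$. Then the telescoping product $\hat A_1^{\hat\Phi} \cdots \hat A_{T-1}^{\hat\Phi} \hat\beta^{\hat\Phi}$ collapses because every internal pair $B^\top (B^\top)^\dagger = I_d$, yielding $\hat\theta_\PI^{\hat\Phi} = (B^\top)^\dagger A_1 \cdots A_{T-1} \beta = (B^\top)^\dagger \theta_\PI^{\Phi}$. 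Taking inner products with $\hat\Phi(x) = B\Phi(x)$ and using $(B^\top)^{\dagger\top} B = B^\dagger B = I_d$ gives $\hat h_\PI^{\hat\Phi}(x) = (\theta_\PI^\Phi)^\top \Phi(x) = \hat h_\PI^\Phi(x)$.

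Finally, the classical case is the $T = 1$ specialization of the same argument: $\hat\theta_\CL^{\hat\Phi} = (B\bZ_1^\top\bZ_1 B^\top)^\dagger B \bZ_1^\top \bY = (B^\top)^\dagger (\bZ_1^\top\bZ_1)^{-1} \bZ_1^\top \bY = (B^\top)^\dagger \theta_\CL^{\Phi}$, so the same $B^\dagger B = I_d$ cancellation gives $\hat h_\CL^{\hat\Phi}(x) = \hat h_\CL^{\Phi}(x)$. The only place the assumption $m \ge d$ is used is in making $\bZ_t^\top \bZ_t$ invertible (so that $A_t, \beta$ are well defined and the SVD calculation is the whole story); the dimension $\hat d$ of the transformed representation can be arbitrary, which is what later lets the lemma be applied both above and below the $m = \hat d$ phase transition. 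I expect the only subtle step to be keeping track of which pseudo-inverses reduce to ordinary inverses, which the SVD makes bookkeeping-level.
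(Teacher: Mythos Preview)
Your proposal is correct and follows essentially the same approach as the paper: both arguments reduce to showing that the $B$-factors cancel via the pseudo-inverse identity $B^\dagger B = I_d$ (valid because $B$ has linearly independent columns), together with the full-column-rank assumption on $\bZ_t$. The paper arrives there by repeatedly applying the factorization $(AB)^\dagger = B^\dagger A^\dagger$ under the appropriate rank conditions, whereas you package the same cancellation as the single SVD-derived identity $(BMB^\top)^\dagger B = (B^\top)^\dagger M^{-1}$ and the resulting conjugation $\hat A_t^{\hat\Phi} = (B^\top)^\dagger A_t B^\top$; the telescoping is then identical in spirit to the paper's inline chain.
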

\begin{proof}
Let $\bZ_t \in \bbR^{m \times d}$ be made up of the rows $\bZ_{t(i,:)} = \Phi(\bX_{t(i,:)})$ when $\bX_t\in \bbR^{m\times k}$ is the design matrix belonging to data set $D$.  In the same fashion we define $\hat{\bZ}_t\in\bbR^{m\times\hat{d}}$ using the map $\hat{\Phi}$ instead. By assumption, $\bZ_t$ and $ B\in\bbR^{\hat{d}\times d}$ have independent columns such that $ B^\dagger B=I$. These assumptions are used for matrix identities involving the Moore-Penrose inverse below. We compute the prediction on a new test point $x$ for the classical learner:
\begin{flalign*}
    \hat{h}_{\CL}^{\hat{\Phi}}(x) &= \big{(}(\hat{ \bZ}_1^\top\hat{ \bZ}_1)^\dagger \hat{ \bZ}_1^\top  Y\big{)}^\top\hat{\Phi}(x)
    \\&=\bigg{(}\big{(}( \bZ_1  B^\top)^\top( \bZ_1  B^\top)\big{)}^\dagger( \bZ_1  B^\top)^\top  \bY \bigg{)}^\top  B \Phi(x)\\
    &=\bigg{(}(\bZ_1 B^\top)^\dagger (B \bZ_1^\top)^\dagger ( \bZ_1  B^\top)^\top  \bY \bigg{)}^\top  B \Phi(x)\\
    &=\bigg{(}(B^\top)^\dagger \bZ_1^\dagger (\bZ_1^\top)^\dagger B^\dagger ( \bZ_1  B^\top)^\top  \bY \bigg{)}^\top  B \Phi(x)\\
    &= \bigg{(} ( B^\top)^\dagger \big{(} \bZ_1^\top  \bZ_1\big{)}^\dagger \underbrace{ B^\dagger  B}_{=I}  \bZ_1^\top  \bY\bigg{)}^\top  B \Phi(x)\\
    & = \bigg{(} \big{(} \bZ_1^\top  \bZ_1\big{)}^\dagger  \bZ_1^\top  \bY\bigg{)}^\top \underbrace{ B^\dagger  B}_{=I} \Phi(x)\\
    &= \hat{h}_{\CL}^{{\Phi}}(x)
\end{flalign*}
The arguments for the privileged learners are analogous:
\begin{flalign*}
     \hat{h}_{\PI}^{\hat{\Phi}}(x)
     &= \bigg{[} (\hat{\bZ}_1^\top \hat{\bZ}_1)^\dagger \hat{\bZ}_1^\top\hat{\bZ}_2 ~(\hat{\bZ}_2^\top \hat{\bZ}_2)^\dagger \hat{\bZ}_2^\top \hat{\bZ}_3~\dots~  (\hat{\bZ}_T^\top \hat{\bZ}_T)^\dagger \hat{\bZ}_T^\top \bY  \bigg{]}^\top \hat{\Phi}(x)\\
     &= \bigg{[} ( B^\top)^\dagger (\bZ_1^\top\bZ_1)^\dagger  B^\dagger  B \bZ_1^\top\bZ_2  B^\top( B^\top)^\dagger (\bZ_2^\top\bZ_2)^\dagger  B^\dagger  B \bZ_2^\top \bZ_3 B^\top
\end{flalign*}
\begin{flalign*}
     &~~~~\dots~ ( B^\top)^\dagger (\bZ_T^\top\bZ_T)^\dagger  B^\dagger  B \bZ_T^\top  \bY \bigg{]}^\top  B\Phi(x)\\
     &=\bigg{[} (\bZ_1^\top\bZ_1)^\dagger \bZ_1^\top\bZ_2  (\bZ_2^\top\bZ_2)^\dagger \bZ_2^\top \bZ_3 (\bZ_T^\top\bZ_T)^\dagger  \bZ_T^\top  \bY \bigg{]}^\top  B^\dagger B\Phi(x)\\
     &= \hat{h}_{\PI}^{\Phi}(x)
\end{flalign*}
\end{proof}

\paragraph{Proof of Theorem~\ref{thm:main}.}
We consider the generalized LuPTS estimator $\hat{h}_{\PI}^{\hPhi}(\cdot)$ treating different cases for the number of samples $m$ and latent state dimension $d$ in turn.  By the added technical assumption, that the true latent state $\bZ_t = [\Phi(x_{1,t}), ..., \Phi(x_{m,t})]^\top$ has rank $d$ for all $t$ with probability 1, and the assumption that $\hPhi$ is linearly close to $\Phi$, by a matrix $B\in\bbR^{\hat{d} \times d}$ of rank $d$ such that $\hbZ_t = \bZ_t B^\top$, we get that $\mbox{rank}(\hbZ_t) = d$ for all $t$. This also implies $d\leq\hat{d}$.

(i) $m = d$: In this case, the Gram matrix $\bK_t = \hbZ_t \hbZ_t^\top$ has full rank and thus is invertible for all $t$. By Proposition~1, LuPTS coincides with the classical learner. Hence, $\E_{\hat{h}_\PI, X_1}[\V_D(\hat{h}_\CL(X_1) \mid \hat{h}_\PI)]=0$ and $\olR(\scrA_\PI) = \olR(\scrA_\CL)$.

(ii) $m < d$:  In this case, there does not exists a linearly close, as defined above, representation $\hPhi$ to $\Phi$ since the rank of $\hbZ_t$ must be smaller than $d$. This contradicts that $\mbox{rank}(\hbZ_t) = d$. Independently, if the conditions of Proposition~\ref{prop:coincide} hold, the same equivalence holds as in the case $m=d$.

(iii) $m > d$: In this case, the kernel Gram matrix $\bK_t = \hbZ_t \hbZ_t^\top$ has rank $d < m$ and is never invertible.

Three sub-cases remain: a) When $\hd = d$, the matrix $B$ is invertible and square, the covariance matrix $\hat{\Sigma}_t = \hbZ_t^\top \hbZ_t$ is invertible for all $t$ and our estimator coincides with linear LuPTS~\citep{karlsson2021using} in the space implied by $\hPhi$. To see this, note that  Lemma~\ref{lem:linear_invariant} implies that $\hat{h}_{\PI}^{\hPhi}(\cdot)$ makes the same predictions as a different generalized LuPTS estimator $\hat{h}_{\PI}^{\Phi}(\cdot)$ using the true map $\Phi$ when the two representation functions are related through $B$, as defined in the Theorem statement. Consequently, we may analyze the latter estimator instead of the first. It uses the parameter
$$\htheta_\PI \coloneqq \Bigg[\prod\limits_{t=1}^{T-1} \underbrace{(\bZ_t^\top\bZ_t)^\dagger \bZ_t^\top \bZ_{t+1}}_{\hA_t}\Bigg] \underbrace{(\bZ_T^\top\bZ_T)^\dagger \bZ_T^\top \bY}_{\hbeta}~.$$ We know by assumption that the covariance matrices $\Sigma_t = (\bZ_t^\top\bZ_t)\in\bbR^{d\times d}$ have full rank for all $t$. This implies that the Moore-Penrose pseudoinverse $(\cdot)^\dagger$ may be replaced by the regular matrix inverse $(\cdot)^{-1}$ in the expression above, yielding
$$\htheta_\PI = \Bigg[\prod\limits_{t=1}^{T-1} \underbrace{(\bZ_t^\top\bZ_t)^{-1} \bZ_t^\top \bZ_{t+1}}_{\hA_t}\Bigg] \underbrace{(\bZ_T^\top\bZ_T)^{-1} \bZ_T^\top \bY}_{\hbeta}$$
which is equivalent to the LuPTS estimator of \citet{karlsson2021using} used on a linear-Gaussian system in the space $\cZ$ rather than in $\cX$. In this case, Theorem~1 from \citet{karlsson2021using} yields the desired result.
b) If $\hd > d$, $\hat{\Sigma}$ is not invertible but, due to Lemma~1, we can instead study a representation which is an appropriate linear transform $B \in \bbR^{\hd \times d}$ away from $\hbZ$, and apply the \citet{karlsson2021using} result as described for the case $\hd=d$. Note that in this case $B$ is non-square but has linearly independent columns as required. c) If $\hd < d$, the assumed matrix $B$ cannot exist with the stated conditions (the assumptions of Theorem~\ref{thm:main} are not satisfied).

\qed



\section{Universality of random features}
\label{app:universality}
A learning algorithm $\scrA$ is said to be universally consistent if, for any continuous function $h$, the output of $\scrA$ converges in probability to $h$. That is, for a random dataset $D_m$ of $m$ i.i.d. samples drawn from a distribution $p$, and any $\epsilon > 0$,
$$
\lim_{m\rightarrow \infty} \mbox{Pr}[\|\scrA(D_m) - h\|_{L^2(p)} > \epsilon] = 0~.
$$
\citet{sun2019approximation} prove that (norm-bounded) linear regression applied to random ReLU features (RRF) is universally consistent. Specifically, for any $\epsilon, \delta > 0$, there is a finite number of random features $\hd$ and samples $m$, such that the estimator
$$
\hh_{\mathrm{RRF}}(x) = \htheta^\top \hPhi^{\gamma, \hd}_{\mathrm{RRF}}(x) \;\;\mbox{ with }\;\; \htheta = \argmin_{\theta : \|\theta\|^2_2 < R^2} \frac{1}{m}\sum_{i=1}^m (\theta^\top \hPhi^{\gamma, \hd}_{\mathrm{RRF}}(x_i) - y_i)^2
$$
achieves an error of at most $\epsilon$ with probability $\geq 1-\delta$ for univariate continuous functions of $x$.  Universal consistency (as $\hat{d}\rightarrow \infty, m>\hat{d})$) follows as a result. We can apply the same idea to a version of generalized LuPTS by considering each parameter estimate of the latent dynamical system given $\hPhi$, with norms restricted by $R$. We drop the subscript $\mathrm{RRF}$ from $\hPhi$ moving forward, and continue to use random ReLU features. For the final prediction step of $Y$, we let
\begin{align}
\hbeta & = \argmin_{b : \|b\|^2_2 < R^2} \frac{1}{m}\sum_{i=1}^m (b^\top \hPhi^{\gamma, \hd_T}(x_{i,T}) - y_i)^2 \label{eq:normB}
\end{align}

Progressing recursively backward from $t=T$, we let
\begin{align}
[\hA_t]_{j,:} & = \argmin_{a : \|a\|^2_2 < R^2} \frac{1}{m}\sum_{i=1}^m (a^\top \hPhi^{\gamma, \hd_{t}}(x_{i,t}) - \hPhi^{\gamma, \hd_{t+1}}(x_{i,t+1})_j)^2 \;\mbox{ for }\; j\in [\hd_{t+1}], t\in [T-1] \label{eq:normAt}
\end{align}
where $\hPhi^{\gamma, \hd_{t}}$ are random features specific to $t$. Since the target of each prediction at $t+1$ is fixed with respect to the features used as input at $t$, the result from \citet{sun2019approximation} can be used to give a learning guarantee for each step. The construction differs from the standard generalized LuPTS formulation as $\hPhi$ is not shared between time steps, and so $\hA_t$ will be non-square in general. We believe that this is merely a limitation of the proof technique and that the results hold for shared random features and square transitions.

Let $\cH_{\mathrm{ReLU}}$ denote the set of linear functions applied to $\hd$ random ReLU features, with uniform random projection coefficients $\omega_j \sim \cU(\mathbb{S}^d)$, $j = 1, ..., \hd$.
$$
\cH_{\mathrm{ReLU}, \hd} = \left\{ h:\cX\xrightarrow[]{} \bbR, h(x) = \sum_{j=1}^{\hd} a_j \sigma( \omega_j^\top [x; 1])   \right\}~.
$$

\begin{thmcol}[Follows from Proposition~5 in \citet{sun2019approximation}]\label{corr:cons}
Let Assumption~\ref{asmp:linlatent} hold with noiseless transitions and outcomes, $\epsilon_t = 0$ for $t=2, ..., T$, and $\epsilon_Y = 0$.
Define $g^*_t(x_t) \coloneqq \Psi(A_t^\top \Phi(x_t)) = x_{t+1}$ and assume that for any fixed RRF representation $\hPhi^{\gamma,\hd_{t+1}}$, each component $j=1, ..., \hd_{t+1}$ of the transition target satisfies $\hPhi^{\gamma,\hd_{t+1}}(g^*_t( \cdot ))(j) \in \cH_{\mathrm{ReLU}, \hd_{t}}$. Let $\hA_t$ be the minimizer of the single-step transitions as defined in \eqref{eq:normAt}. Then, for any $\delta>0, \epsilon>0$ there is a number of random features $\hd = \hd(\epsilon, \delta)$ and samples $m = m(\epsilon, \delta)$, such that with probability $\geq 1-\delta$,
$$
\|\hA_t^\top \hPhi^{\gamma,\hd_t}(x_t) - \hPhi^{\gamma,\hd_{t+1}}(x_{t+1})\| \leq \epsilon~.
$$
\end{thmcol}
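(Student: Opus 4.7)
The plan is to reduce the corollary to a component-wise application of Proposition~5 from \citet{sun2019approximation}, exploiting the fact that the estimator $\hA_t$ in \eqref{eq:normAt} decomposes into $\hd_{t+1}$ independent norm-bounded scalar least-squares problems. Concretely, the $j$-th row $[\hA_t]_{j,:}$ is precisely the constrained linear regressor predicting the scalar target $\hPhi^{\gamma,\hd_{t+1}}(x_{i,t+1})(j)$ from the random feature vector $\hPhi^{\gamma,\hd_{t}}(x_{i,t})$, which is exactly the setting of Proposition~5.

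First I would fix the RRF draws defining $\hPhi^{\gamma,\hd_{t+1}}$ and observe that, because transitions are noiseless ($\epsilon_t = 0$), the regression target for component $j$ equals the deterministic function $f_j(x_t) := \hPhi^{\gamma,\hd_{t+1}}(g^*_t(x_t))(j)$ evaluated at $x_{i,t}$. By hypothesis, $f_j \in \cH_{\mathrm{ReLU},\hd_t}$, so there exists a bounded-norm linear combination of $\hd_t$ random ReLU features in the time-$t$ input space that represents $f_j$ exactly; we take $R$ in \eqref{eq:normAt} to be a common upper bound on these representer norms across $j$. The pairs $(x_{i,t}, f_j(x_{i,t}))$ are i.i.d.\ across $i$ (distinct trajectories), so Proposition~5 applies and yields, for any $\epsilon',\delta'>0$, numbers $m(\epsilon',\delta')$ and $\hd_t(\epsilon',\delta')$ such that with probability at least $1-\delta'$ the scalar error $\E_{x_t}[([\hA_t]_{j,:}^\top \hPhi^{\gamma,\hd_t}(x_t) - f_j(x_t))^2]$ is at most $\epsilon'$.

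Next I would aggregate over the $\hd_{t+1}$ output coordinates via a union bound. Setting $\epsilon' = \epsilon^2/\hd_{t+1}$ and $\delta' = \delta/\hd_{t+1}$, and taking the maximum required sample size and feature count across the coordinates, one obtains with probability at least $1-\delta$ that $\E_{x_t}\bigl[\,\|\hA_t^\top \hPhi^{\gamma,\hd_t}(x_t) - \hPhi^{\gamma,\hd_{t+1}}(x_{t+1})\|_2^2\bigr] \leq \epsilon^2$, which is the claimed bound (interpreted in $L^2(p)$; a pointwise version would require an extra concentration step using boundedness of ReLU features on a bounded input domain).

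The main obstacle is matching the regularity assumptions of Proposition~5 uniformly across $j$ and $t$. The i.i.d.\ requirement is fine, since although the intra-trajectory variables $(x_{i,1},\dots,x_{i,T})$ are temporally dependent, Proposition~5 only sees the marginal pairs at a single time. The delicate points are (i) choosing a single norm bound $R$ large enough to contain every representer of every $f_j$ at every step, which requires strengthening the stated assumption to a uniform-in-$j$ norm bound, and (ii) ensuring that the scalar targets $f_j$ are uniformly bounded, which follows from boundedness of the random ReLU features on a bounded input domain but is implicit in the statement; once these are taken as given, the component-wise argument and union bound deliver the corollary.
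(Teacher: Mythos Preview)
Your proposal is correct and follows the same approach as the paper, which simply states that the result follows from Proposition~5 in \citet{sun2019approximation} applied to the transition functions. Your component-wise decomposition followed by a union bound, along with the careful discussion of the uniform norm bound $R$ and target boundedness, spells out details the paper leaves entirely implicit in its one-sentence justification.
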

The result follows from Proposition 5 in  \citet{sun2019approximation} applied to the transition functions in our problem.
Putting this together for all time-steps, we get the following result.
\begin{thmprop}[Universal consistency of RRF privileged learner]\label{prop:consistency}
Let Assumption~\ref{asmp:linlatent} hold with $\epsilon_t = 0, \epsilon_Y = 0$. By Corollary\ref{corr:cons} and \citet{sun2019approximation}, we have for any $\delta>0$, $\epsilon>0, \gamma>0$ and a sequence $(\hd_1, ..., \hd_T)$ of sufficiently large numbers of random features and samples $m$, that with probability at least $1-\delta/T$, for the privileged estimator defined in \eqref{eq:normAt}, \eqref{eq:normB},
$$
\|\hbeta^\top \hPhi^{\gamma, \hd_T}(X_T) - Y \|_{L^2(p)} \leq \epsilon
$$
and
$$
\forall t=1, ..., T-1 : \|\hA_t^\top \hPhi^{\gamma, \hd_t}(X_t) - \hPhi^{\gamma, \hd_{t+1}}(X_{t+1}) \|_{L^2(p)} \leq \epsilon,
$$
Then, further assume that the largest eigenvalue $\lambda_{\mathrm{max}}(\hA_t) \leq C$ for any $t$ and $\|\hbeta\| \leq C$. Then, with probability at least $1-\delta$,
$$
\|(\hA_1 \cdots \hA_{T-1}\hbeta)^\top \hPhi^{\gamma, \hd_1}(X_1) - Y \|_{L^2(p)} \leq TC^T\epsilon
$$
\end{thmprop}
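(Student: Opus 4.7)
The plan is to control each single-step prediction error via Corollary~\ref{corr:cons}, take a union bound across the $T$ steps, and then propagate the resulting residuals through the composition using a telescoping decomposition together with the spectral-norm hypothesis on $\hA_t$ and $\hbeta$.

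First, I would invoke Corollary~\ref{corr:cons} for each of the $T-1$ transition regressions, and apply Proposition~5 of \citet{sun2019approximation} once more to the final outcome regression of $Y$ on $\hPhi^{\gamma,\hd_T}(X_T)$. Taking per-step accuracy $\epsilon$ and per-step confidence $\delta/T$ forces $\hd_t$ and $m$ to be sufficiently large as functions of $(\epsilon,\delta/T,\gamma)$, and a union bound yields, with probability at least $1-\delta$, that all residuals
\[
r_t := \hA_t^\top \hPhi^{\gamma,\hd_t}(X_t) - \hPhi^{\gamma,\hd_{t+1}}(X_{t+1}) \;\; (t<T), \qquad r_T := \hbeta^\top \hPhi^{\gamma,\hd_T}(X_T) - Y
\]
satisfy $\|r_t\|_{L^2(p)} \leq \epsilon$ simultaneously.

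Second, iteratively substituting $\hPhi^{\gamma,\hd_{t+1}}(X_{t+1}) = \hA_t^\top \hPhi^{\gamma,\hd_t}(X_t) - r_t$ for $t=T-1,\dots,1$ and finally $Y = \hbeta^\top \hPhi^{\gamma,\hd_T}(X_T) - r_T$ produces the telescoping identity
\[
(\hA_1 \cdots \hA_{T-1} \hbeta)^\top \hPhi^{\gamma, \hd_1}(X_1) - Y \;=\; r_T + \sum_{t=1}^{T-1} \hbeta^\top \hA_{T-1}^\top \cdots \hA_{t+1}^\top \, r_t,
\]
where the product $\hA_{T-1}^\top \cdots \hA_{t+1}^\top$ is interpreted as the identity for $t=T-1$. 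Reading the hypothesis $\lambda_{\max}(\hA_t)\leq C$ as the operator-norm bound $\|\hA_t\|_{\mathrm{op}} \leq C$ and using $\|\hbeta\|\leq C$, submultiplicativity of operator norms bounds the $t$-th summand in $L^2(p)$ by $C^{T-t}\epsilon$, so the triangle inequality gives
\[
\bigl\|(\hA_1 \cdots \hA_{T-1}\hbeta)^\top \hPhi^{\gamma,\hd_1}(X_1) - Y\bigr\|_{L^2(p)} \;\leq\; \epsilon + \sum_{t=1}^{T-1} C^{T-t}\epsilon \;\leq\; T C^T \epsilon,
\]
where the last step uses $C\geq 1$ (otherwise the bound is tighter still).

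The main obstacle, and the reason the constant $C$ is introduced in the hypothesis, is multiplicative error amplification across the composition: without a spectral bound on the $\hA_t$, single-step $L^2(p)$ accuracy cannot be transferred to the composed predictor, since an error incurred at time $t$ is propagated through every subsequent operator $\hA_{t+1},\dots,\hA_{T-1},\hbeta$ and could in principle blow up. A subtlety worth tracking in a full write-up is that Corollary~\ref{corr:cons} delivers $L^2$ bounds with respect to the marginal of $X_t$ under the same joint $p$ that governs the composed prediction, so no change of measure is required between the per-step statements and the composed one.
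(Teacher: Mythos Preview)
Your proposal is correct and matches the paper's own proof essentially line for line: both take a union bound over the $T$ per-step guarantees, then telescope by repeatedly adding and subtracting the intermediate predictions $\hbeta^\top\hA_{T-1}^\top\cdots\hA_{t+1}^\top\hPhi^{t}(X_t)$, and bound each resulting summand via the operator-norm hypothesis on $\hA_t,\hbeta$ to arrive at $TC^T\epsilon$. Your explicit residual notation $r_t$ and closed-form telescoping sum are in fact a cleaner presentation of the same argument the paper carries out step by step.
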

\begin{proof}
Let $\| \cdot \|= \| \cdot \|_{L^2(p)}$. Then, letting $\hPhi^t = \hPhi^{\gamma, \hd_t}$, and applying a union bound to each of the $T$ $(\epsilon, \delta)$-assumptions, and a series of Cauchy-Schwarz inequalities,
\begin{align*}
& \|(\hA_1 \cdots \hA_{T-1}\hbeta)^\top \hPhi^1(X_1) - Y \| \\
& = \|(\hA_1 \cdots \hA_{T-1}\hbeta)^\top \hPhi^1(X_1) - \hbeta^\top \hPhi^T(X_T) + \hbeta^\top \hPhi^T(X_T) - Y \| \\
& \leq \|(\hA_1 \cdots \hA_{T-1}\hbeta)^\top \hPhi^1(X_1) - \hbeta^\top \hPhi^T(X_T)\| + \underbrace{\|\hbeta^\top \hPhi^T(X_T) - Y \|}_{\leq \epsilon} \\
& \leq \|(\hA_1 \cdots \hA_{T-1}\hbeta)^\top \hPhi^1(X_1) - \hbeta^\top \hA_{T-1}^\top \hPhi^{T-1}(X_{T-1}) \\
& + \hbeta^\top (\hA_{T-1}^\top \hPhi^{T-1}(X_{T-1}) - \hPhi^T(X_T))\| + \epsilon \\
& \leq \|(\hA_1 \cdots \hA_{T-1}\hbeta)^\top \hPhi^1(X_1) - (\hA_{T-1} \hbeta)^\top \hPhi^{T-1}(X_{T-1})\| \\
& + \underbrace{\|\hbeta^\top (\hA_{T-1}^\top \hPhi^{T-1}(X_{T-1}) - \hPhi^T(X_T))\|}_{\leq C\epsilon} + \epsilon \\
& ... \\
& \leq TC^T \epsilon~.
\end{align*}
\end{proof}
\begin{thmrem}
Proposition~\ref{prop:consistency} shows that a privileged learner with random ReLU features can be turned into a universally consistent estimator of any (noiseless) continuous function of $X_1$ by letting each time step have its own random feature representation of appropriate size and adding a norm constraint to each linear transformation.
The construction in \eqref{eq:normAt}, \eqref{eq:normB} deviates from Algorithm~\ref{alg:genlupts} primarily in that the random feature representations used at each time step are different, but the overall structure is maintained: At training, predictions of $Y$ are  made from an embedding $\hZ_T=\hPhi^T(X_T)$ of $X_T$, predictions of $\hZ_T$ are made from an embedding $\hZ_{T-1}=\hPhi^{T-1}(X_{T-1})$, and so on. At test time, the transition matrices $\hA_1, ..., \hA_T, \hbeta$ are multiplied and applied to $\hZ_1 =\hPhi^1(X_1)$. We conjecture that a similar argument can be applied to the construction in Algorithm~\ref{alg:genlupts} by letting the dimension of each time step approach $\infty$.
\end{thmrem}

\section{Compounding bias}
\label{app:compounding_bias}
We can describe the compounding bias of the LuPTS estimator due to a biased representation $\hPhi$, in comparison with the standard OLS estimator, by propagating the error in $\hPhi$ through the estimates. Assume that
$$
Y = \theta^\top \Phi(X_1) + \epsilon = (A_1 \cdots A_{T-1}\beta)^\top \Phi(X_1) + \epsilon'
$$
Then, let $Z_t = \Phi(X_t)$ and for an estimate $\hPhi$, assumed for simplicity to have the same dimension, $\hd = d$,
$$
\hZ_t = \hPhi(X_t) = \Phi(X_t) + R_t
$$
where $R_t$ is the residual w.r.t. $\Phi$. Let bold-face variables indicate multi-sample equivalents of all variables. Further, define $\Sigma_t = \bZ_t^\top \bZ_t$ and $\hSigma_t = \hbZ_t^\top \hbZ_t$.

Fitting $\htheta$ to $\hPhi$ using the classical learner (OLS) yields an estimate
$$
\htheta_c = \hSigma_1^{-1}\hbZ_1^\top \bY
$$
Now, define $\Omega_t = \bR_t^\top\hbZ_t + \bZ_t^\top \bR_t$ and we have
\begin{align*}
\htheta_c & = (\Sigma_1 + \Omega_1)^{-1}(\bZ_1 + \bR_1)^\top \bY \\
& = (\Sigma_1^{-1} + \Delta_1)(\bZ_1 + \bR_1)^\top \bY \\
& = \htheta_c^* + (\Delta_1\hbZ_1^\top + \Sigma_1^{-1}\bR_1^\top) \bY
\end{align*}
where $\Delta_1 = -\Sigma_1^{-1}\Omega_1(\Sigma_1  + \Omega_1)^{-1}$, $\htheta_c^*$ is the OLS estimate of $\theta$ for the true $\Phi$ and the second line follows from the Woodbury matrix identity. The norm of $\Delta_1$ is related to the condition number of $\Sigma_1$.  The expectation of the first term is $\theta$, and the expectation of the remaining terms is the bias.

Now, we can do the same thing for the privileged estimator. Let's start with $T=2$.
\begin{align*}
\htheta_p & = \hSigma_1^{-1}\hbZ_1^\top \hbZ_2 \hSigma_2^{-1}\hbZ_2^\top Y \\
& = (\Sigma_1^{-1} + \Delta_1)(\bZ_1 + \bR_1)^\top (\bZ_2 + \bR_2) (\Sigma_2^{-1} + \Delta_2)(\bZ_2 + \bR_2)^\top \bY \\
& = \htheta_p^* + \hA_1(\Sigma_2^{-1}\bR_2^\top + \Delta_2\hbZ_2^\top)\bY + (\Sigma_1^{-1} \hbZ_1^\top \bR_2 + \Sigma_1^{-1} \bR_1^\top \hbZ_2 + \Delta_1\hbZ_1^\top \hbZ_2)\hbeta~.
\end{align*}
Thus, the difference in bias between the two estimators is
\begin{align*}
\E[\htheta_c - \htheta_p]  = &\underbrace{\E[\htheta_c^* - \htheta_p^*]}_{= 0} \\
& + \E[(\Delta_1\hbZ_1^\top + \hSigma_1^{-1}\bR_1^\top) \bY - \hA_1(\Delta_2\hbZ_2^\top + \hSigma_2^{-1}\bR_2^\top )\bY \\
& - (\Sigma_1^{-1} \hbZ_1^\top \bR_2 + \Sigma_1^{-1} \bR_1^\top \hbZ_2 + \Delta_1\hbZ_1^\top \hbZ_2)\hbeta]
\end{align*}
More generally, we can express this difference recursively as below.


\begin{thmprop}
Let $\htheta_p \coloneqq \hA_1 \cdots \hA_T \hbeta$ be a privileged estimator using a linearly biased representation $\hPhi$, and let $\hA_t^*$ be the same estimator using an unbiased representation $\Phi^*$. Then, the bias of $\htheta_p$ is
$$
\E[\htheta_p - \theta] = \E[\htheta_p - \htheta_p^*] =  \E[E_T \hbeta^* + (\hA_1 \cdots \hA_T)(\hbeta - \hbeta^*)]~,
$$
where $E_t$ is the compounded error in transition dynamics, computed recursively as follows
\begin{align*}
E_t \coloneqq (\hA_1 \cdots \hA_t) - (\hA^*_1 \cdots \hA^*_t)  = E_{t-1} \hA_{t}^* + (\hA_1 \cdots \hA_{t-1}) (\hA_{t} - \hA^*_{t})~,
\end{align*}
with $E_0 = 0$. In the worst case, the bias of $\htheta_p$ grows exponentially with $T$.
\end{thmprop}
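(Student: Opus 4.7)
The strategy is to obtain both identities via add-and-subtract manipulations and then exhibit a worst-case example demonstrating exponential growth.

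First, I would argue that $\E[\htheta_p^*] = \theta$ so that the first equality $\E[\htheta_p - \theta] = \E[\htheta_p - \htheta_p^*]$ reduces to the standard unbiasedness of LuPTS in the true latent space. Under Assumption~\ref{asmp:linlatent} one has $\theta = A_1 \cdots A_{T-1}\beta$, and the single-step OLS estimators in the true latent space are unbiased with the same conditional-independence structure exploited in the proof of Theorem~1 of \citet{karlsson2021using}; that argument transfers verbatim to yield $\E[\htheta_p^*] = \theta$.

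Second, the decomposition $\htheta_p - \htheta_p^* = E_T\hbeta^* + (\hA_1 \cdots \hA_T)(\hbeta - \hbeta^*)$ is pure algebra: add and subtract $(\hA_1 \cdots \hA_T)\hbeta^*$ inside $\htheta_p - \htheta_p^* = (\hA_1 \cdots \hA_T)\hbeta - (\hA_1^* \cdots \hA_T^*)\hbeta^*$ and collect terms. The same add-and-subtract trick, applied with $(\hA_1 \cdots \hA_{t-1})\hA_t^*$ to the definition of $E_t$, yields the stated recursion $E_t = E_{t-1}\hA_t^* + (\hA_1 \cdots \hA_{t-1})(\hA_t - \hA_t^*)$ with base case $E_0 = 0$.

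Third, for the worst-case exponential growth claim, I would unroll the recursion to obtain
\begin{equation*}
E_T \;=\; \sum_{t=1}^{T} (\hA_1 \cdots \hA_{t-1})\,(\hA_t - \hA_t^*)\,(\hA_{t+1}^* \cdots \hA_T^*),
\end{equation*}
which under uniform spectral-norm bounds $\|\hA_t\|, \|\hA_t^*\| \leq C$ with $C > 1$ and per-step errors $\|\hA_t - \hA_t^*\| \leq \epsilon$ satisfies $\|E_T\| \leq T\, C^{T-1}\epsilon$. Tightness is witnessed by a one-dimensional construction: take $d = \hd = 1$ with $\hA_t^* = c$ and $\hA_t = rc$ deterministically for every $t$ and some $r > 1$; then $E_T = c^T(r^T - 1)$ grows exponentially in $T$, and via the $E_T\hbeta^*$ term so does $\E[\htheta_p - \theta]$.

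The main obstacle is the first step: invoking the unbiasedness of the LuPTS estimator on the true latent space requires carefully propagating the conditional-expectation argument of \citet{karlsson2021using} through products of fitted transition matrices, which is where the Gaussianity and independence assumptions do the work. The remaining steps are routine algebra and a single worst-case construction.
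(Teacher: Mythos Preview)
Your proposal is correct and follows essentially the same add-and-subtract algebra that the paper uses; the paper only works out the $T=2$ case explicitly (via the Woodbury identity and the residuals $R_t$, $\Omega_t$, $\Delta_t$) and then states the general recursive form without further proof. Your treatment is in fact more complete: you give the general telescoping argument directly, unroll $E_T$ into the sum $\sum_{t}(\hA_1\cdots\hA_{t-1})(\hA_t-\hA_t^*)(\hA_{t+1}^*\cdots\hA_T^*)$, and supply an explicit one-dimensional construction witnessing exponential growth, none of which the paper spells out.
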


\section{Privileged time series representation learners}
\label{app:GRL}

We expand on the description of the greedy representation learner (GRL) described as a special case of CRL in Section~\ref{sec:learn_rep}. To avoid the information loss of SRL, we consider its conceptual opposite, using privileged time series information \emph{only} to predict the outcome. To do this, a linear output layer $\htheta_t^\top \hZ_t$ is used to predict $Y$ at every time step $t$. Recall that, by Assumption~\ref{asmp:linlatent}, the expected outcome is linear in the latent state at \emph{any} time step. The method is related to multi-view learning, in which prediction of the same quantity is made from multiple ``views''~\citep{multi-view_survey}. We dub the model \emph{greedy privileged representation learner} (GRL), which minimizes the objective
\begin{equation}
    \mathcal{L}_\GRL(\hPhi, \{\htheta_t\}) := \frac{1}{NT} \sum\limits_{i=1}^N \sum\limits_{t=1}^{T} w_t\big{\Vert} \htheta_t^\top\hPhi(x_{i,t}) - y_i\big{\Vert}_2^2.
    \label{eq:GRL}
\end{equation}
During inference this algorithm returns $\hat{h}_\PI(\cdot)=\htheta_1^\top\hPhi(\cdot)$. Compared to the objective of CRL in \ref{eq:CRL}, we introduce an additional hyperparameter $\lambda\in (0,1)$ to place more weight on the loss term that is relevant at inference time. As a consequence, we choose $w_t =\lambda$ for $t=1$ and $w_t=1-\lambda$ otherwise. We expect GRL to have less bias than SRL, but higher variance since less structure is imposed on the representation $\hPhi$.

\section{Experiment setup \& data processing}
\label{app:experiments_and_processing}
\subsection{Detailed experiment setup}

In the following we give a detailed description of the experimental setup used to obtain the results presented in Section~\ref{sec:experiments} as well as the additional results that are part of this section. For a given data set, we select a combination of training set sizes and sequence length. For each unique combination of these parameters the models of interest are trained repeatedly with different random sampling. For each repetition the data is split into a train and a test set randomly before hyperparameter tuning and model training are performed. At last each model's predictions on the test set are scored by computing the coefficient of determination $R^2$. On synthetic data the test set contains 1000 samples. In the case of real-world data, where samples are limited, we test on 20\% of all available data.

The preprocessing used for real-world data and the generation procedure of synthetic data is unique to each data set. We refer to the data set specific subsections for detailed descriptions of how each data set is processed. During the experiments each model uses standardized data for training and inference. To perform the data rescaling we use the StandardScaler implementation that is part of scikit-learn \citep{sklearn}.

\paragraph{Hyperparameter tuning.} The tuning of hyperparameters is carried out for each repetition and is implemented using random search and five-fold cross-validation. Each hyperparameter is sampled from a fixed interval of possible values. An overview of the ranges of different hyperparameters determined through random-search is provided in Table~\ref{tab:hyperparameters}. For the experiments with variants of generalized LuPTS we sample ten sets of hyperparameters before retraining on all training data using the best set of parameters. For the representation learners we merely sample five values for $\lambda$.

\begin{table}[t!]
    \centering
    \begin{tabular}{lllc}
          \toprule
          Hyperparameter & Description &  Used in Algorithm & Value Range \\
          \midrule
          $n_{RF}$ & \small{number of random features} & \small{all random feature methods} & $[0.05m, 0.8m]$\\
          $\gamma_{RRF}$ & \small{bandwith parameter} & \small{Random ReLU methods} & $[0.01,10]$\\
          $\gamma_{RFF}$ & \small{bandwith parameter} & \small{Random Fourier methods} & $[0.001,0.1]$\\
          $\lambda$ & \small{loss function parameter} & \small{GRL \& CRL} & $[0,1]$\\
          \bottomrule
    \end{tabular}
    \vspace{1em}
    \caption{Overview of all hyperparameter determined by hyperparameter tuning. $m$ denotes the number of samples, meaning that $n_{RF}$ is chosen from different ranges depending on the size of the training set.}
    \label{tab:hyperparameters}
\end{table}

\paragraph{Neural network training.} The training of neural networks involves many choices and hyperparameters. We choose PyTorch's implementation \citep{pytorch} of the Adam optimizer \citep{kingma2017adam} to train the representation learning models. If not specified otherwise the results shown in this project are obtained using a learning rate of $0.0001$, a batch size of $30$, leaky ReLU activations and a maximum of $1500$ training epochs. In the case of neural network models, the sample sizes reported as part of the experiments denote the combined size of the training and validation set, where the validation set contains 20\% of those samples. We use early stopping during the training process by keeping track of the validation loss. If a model does not improve the validation loss over a waiting period of $200$ epochs we stop training early and set the network parameters to the values that obtained the lowest validation loss up until that point. In order to make sure that results are not dependent on the specific choice of the parameters just described, we performed additional experiments with different parameter choices. The results were found to be robust to small changes in these parameters.

\paragraph{Generalized distillation.} In order to compare our algorithms to the alternative of using generalized distillation for privileged time series as presented by \cite{hayashi2019long}, we implemented a model that (i) produces hypotheses of the same class as our other algorithms and (ii) that adopts the learning paradigm of a student model incorporating soft targets produced by a teacher model into its loss function. For tabular data our teacher model is a multi-layer-perceptron (MLP) with $T*k$ input neurons such that all $\{X_t\}$ are concatenated and then used as input. The teacher MLP makes use of five hidden layers, each consisting of 100 neurons. In the case of the image data generated by \clocks{}, the teacher uses an implementation of LeNet-5 on all variables $X_t$ (while sharing the encoder parameters) before concatenating the 25-dimensional output of this encoder from different time steps. This combined representation is then processed by an MLP with a single hidden layer with 25 neurons. The loss function used for the student model producing the estimate $\hat{h}_{\CL}$ is architecturally identical to the classic representation learner used in each of the experiments. When training the student model the mean squared error on the data and the error corresponding to the soft targets of the teacher model are combined via a hyperparameter $\lambda$:

\begin{equation*}
    \cL_{GD} \coloneqq \lambda \frac{1}{m}\sum\limits_{i=1}^m \Vert \hat{h}_{\CL}(x_i) - y_i \Vert_2^2  + (1-\lambda) \frac{1}{m}\sum\limits_{i=1}^m \Vert \hat{h}_{\CL}(x_i) - \hat{h}_{\text{teacher}}(x_i) \Vert_2^2
\end{equation*}

The hyperparameter is determined via hyperparameter tuning in all repetitions as described for other hyperparameters in this section. All other training procedures follow the same logic as described above.

\paragraph{Resources. } For the training of the representation learning algorithms we use a cluster of graphics processing units (GPUs) in order to reach the number of experiment repetitions required for our work. A single experiment like shown in Figure~\ref{fig:RL_CLOCK_R2} takes several hours on 100 NVIDIA~Tesla~T4 GPUs. While the random feature methods do not require GPU training, they still require hyperparameter tuning which is why we compute results such as presented in Figure~\ref{fig:RF_R2} on many CPU cores in parallel. While the experiments on neural networks cannot reasonably be reproduced on a single desktop machine, this is still possible within a few days for the random feature methods.

\paragraph{Latent variable recovery and SVCCA.}
In order to assess to what extent a representation has been found that is linearly related to the true latent variables we use SVCCA as described by \citet{SVCCA}, meaning we first use PCA retaining at least 99\% of variation before then applying CCA. For the visualization shown in Figure~\ref{fig:svcca_rainbow_clock_y1} we construct a grid of points (150 $\times$ 150) around the origin and assign each point a unique color. Then we compute an image using the observation generating function $\Psi:\cZ\xrightarrow[]{}\cX$ of the \clocks{} data set for each point. The observations are passed to the encoder $\hPhi$ of the two representation learners producing estimates of the latent variables. We then map the estimates to the ground truth linearly using SVCCA before plotting the result.

\subsection{Alzheimer progression}
\label{app:adni}
To test our algorithms on the task of predicting the progression of Alzheimer's disease (AD) we use an anonymized data set obtained through the Alzheimer's Disease Neuroimaging Initiative (\adni{})~\citep{adni} under the LONI Research License. The initiative is large multi-site research study on the brains of over 2000 AD patients which collects many features such as genetic, imaging and biospecimen biomarkers.  The data consists of measurements taken every 3 months with some observations missing. The outcome of interest in our experiments is the Mini Mental State Experiment (MMSE) score 48 months after the first measurement\citep{MMSE}. Privileged information are the measurements taken between those time points, at 12, 24 and 36 months into the program.

\paragraph{Data processing.} The processing procedure used in this project is borrowed directly from the work of \cite{karlsson2021using}. There is a large amount of missing information in the ADNI data set. The missingness varies with the time of when measurements were taken. Further some subjects were not present at some of the follow-up examinations. To deal with the missingness patients without an observation for the final follow up (the outcome $Y$) are excluded from our experiments. Further, we also require that patients are present at all intermediate time steps (12, 24 and 36 months after the first measurement) which we use as privileged information. We one-hot encode categorical features and exclude features for which more than 70\% of the observations are missing. To deal with the remaining missing values, mean imputation is used. Due to the filtering that we apply as a result of the missing data we only obtain 502 suitable sequences that we can use for our experiments.

\subsection{Traffic data}

The \traffic{} data set \citep{MNTraffic} obtained through the UCI machine learning repository \citep{Dua:2019} contains hourly measurements of the traffic volume as well as weather features and holiday information. The raw data contains 48.204 records. An overview of all available features is given in Table~\ref{tab:traffic features}.

\begin{table}[!ht]
    \centering
    \begin{tabular}{lll}
    \toprule
         Feature & Type & Description  \\ \midrule
         Date Time & Timestamp &  date and time (CST)\\
         Holiday & String & name of holiday if applicable \\
         Weather Description & String & brief free text description of the weather \\
         Weather Main & Categorial & contains categories like clear, clouds, or rain\\
         Rain\_1h & Numerical & rain in $\frac{L}{h~m^2}$ \\
         Snow\_1h & Numerical & snow in $\frac{L}{h~m^2}$ \\
         Temp & Numerical & temperature in Kelvin \\
         Traffic Volume & Numerical & hourly reported westbound traffic volume \\ \bottomrule
    \end{tabular}
    \caption{Features available in the \traffic{} data set.}
    \label{tab:traffic features}
\end{table}

\paragraph{Data processing.} We noticed extreme outliers in the data set as well as implausible numerical values for the temperature and rain features. Further, records for some of the hours of the timeframe (2012 - 2018) covered by this data set are missing. To deal with the extreme outliers we calculate the mean and standard deviation of each feature and remove records which contain values that are further than six standard deviations from the mean of a particular feature. We also remove a feature entirely if there is no variation left after this filtering. This is the case for the snowfall feature as snow is very rare in Minneapolis. From the date and time of each record we calculate the weekday which we add as a one-hot encoded feature and also represent the hour of the day $h\in\{0, 1, \dots 23\}$ as two separate periodic features given by
\begin{equation}
    t_{\text{periodic}} =  \bigg{[} \sin{(\frac{2\pi \cdot h}{24})},~   \cos{(\frac{2\pi\cdot h}{24})}\bigg{]}~.
    \label{eq:periodic_time_feature}
\end{equation}
This ensures that a timestamp just before midnight produces similar features compared to just after midnight. We one-hot encode the holiday information, making no difference between different types of holidays, and make this feature persist over a full calendar day. In the original data set the holiday information is only specified for the first hour of the day. The column Weather\_Main contains some weather conditions that are very rare, such as smoke and squall. As a consequence we group the different conditions before encoding them as binary variables. In particular we make drizzle, rain and squall one single feature while also grouping together fog, haze, mist and smoke as they all affect visibility.

\paragraph{Time series selection.}After this preprocessing, that leaves only numerical values and one-hot encoded categorical values, we group the data together as time series used for the experiments. In order to do so we specify a desired sequence length $T+1$ and a sequence step size in hours. With this information we iterate through the data set assembling time series with i) no values missing ii) the correct length and step size and iii) at least a seven hour gap between each pair of sequences. The third condition is introduced to make sure one does not end up with very similar cases (for short sequences in particular) in training and test set.

\subsection{Square-Sign}
\label{sec:square-sign-appendix}
The \squaresign{} data set serves as a test environment for learning from privileged time series information where one can assure the conditions necessary for Assumption~\ref{asmp:linlatent} to hold. In particular this means creating a linear-Gaussian system which remains unobserved and combining it with an observation generating function $\psi:\cZ\xrightarrow[]{}\cX$.

\paragraph{Latent linear-Gaussian system.} The first component that makes up the generation process for \squaresign{} (and \clocks{}) is the linear-Gaussian system which is latent, just as depicted on the right side of Figure~\ref{fig:latent_dgp} with $Z_t\in\bbR^d$.
The first step in the data creation process is sampling  each of the $d$ components of $Z_0$ from $\cN(0,5)$. Then the subsequent latent variables $Z_{t+1}$ are computed as
\begin{equation*}
    Z_{t+1} \coloneqq A_{t}^\top Z_{t} + \epsilon,~\epsilon\in\bbR^d,~\epsilon_j~\cN(0,1)~.
\end{equation*}
For the outcome we use the same form but with different dimensionality:
\begin{equation*}
    Y \coloneqq \beta^\top Z_T + \epsilon_y,~\epsilon_y\in\bbR^{q}
\end{equation*}
Off-diagonal elements of the transition matrices $A_t \in \bbR^{d\times d}$ are sampled from a Normal distribution $\cN(0, 0.2)$ while the diagonal elements are set to one. In a second step we compute the spectral radius of the randomly created matrices $A_t$ via eigenvalue decomposition, obtaining the components $U\Lambda U^\top$. We then set the spectral radius to a predefined value $\lambda_{max} = 1.3$ and reassemble the matrix as
\begin{equation*}
    A_t \xleftarrow[]{} U \frac{\lambda_{max}}{\lambda_s}\Lambda U^\top~.
\end{equation*}
The coefficients of $\beta$ are drawn from the same normal distribution as the ones of $A_t$ but undergo no further changes.

\paragraph{Observation generating function.} As the dimensionality $d$ of the latent space $\cZ$ is not fixed we use an observation generating function that is not restricted to a specific value of $d$. For each element in $Z_t\in\cZ=\bbR^d$ we create two elements in $X_t\in\cX=\bbR^{2d}$ by denoting its sign separately from its square. This gives the following nonlinear observation generating function:

\begin{equation}
    X_t \coloneqq \psi(Z_{(t)})= [Z_{(t,1)}^2, \mathrm{sgn}(Z_{(t,1)}), \dots , Z_{(t,d)}^2, \mathrm{sgn}(Z_{(t,d)})]^\top
    \nonumber
\end{equation}

\subsection{Clocks-LGS}

This data set serves the purpose of testing our algorithms on a different modality with high dimensional data. In particular the idea was to use image data as this is a domain where neural networks have been very successful. For this reason we combine a latent dynamical system with an image generation process which we explain in detail in this section.

\paragraph{Latent linear-Gaussian system.} We use exactly the same setup as we do for the \squaresign{} latent dynamical system as described in Section~\ref{sec:square-sign-appendix}. The only difference here is the dimensionality of the latent variables, transition matrices and the outcome. For \clocks{} we generally have $d=2$ and $q\in\{1,2\}$.

\paragraph{Image generation.} The second part of \clocks{} is creating images from two dimensional latent vectors $Z_t=[Z_t^{(1)}, Z_t^{(2)}]^\top$. The goal was to keep it the process simple while using small black and white images of 28$\times$28 pixels. In addition we wanted each image to have no ambiguity with respect to the latent state it represents. We  represent the first component by a clock hand mounted at the image center. One can think of $Z_t^{(1)}$ as the angle in radian, meaning the hand points straight up for $Z_t^{(1)}=0$ or $Z_t^{(1)}=2\pi$ and straight down for $Z_t^{(1)}=\pi$. To visualize a full rotation we increase the size of the cirle around the image center in discrete steps for each mutliple of $2\pi$. For negative values the circle is empty (black) while it is filled (white) for positive values. For the second component we make use of the same logic but instead of a clock hand, we only use a circle that orbits the image center. The two hands cannot obscure each other as the orbiting cirle uses a larger radius. Figure~\ref{fig:clock-number-pairs} shows three examples of pairs of corresponding latent vectors and generated images.

\begin{figure}[t!]
    \centering
    \includegraphics[width=0.5\textwidth]{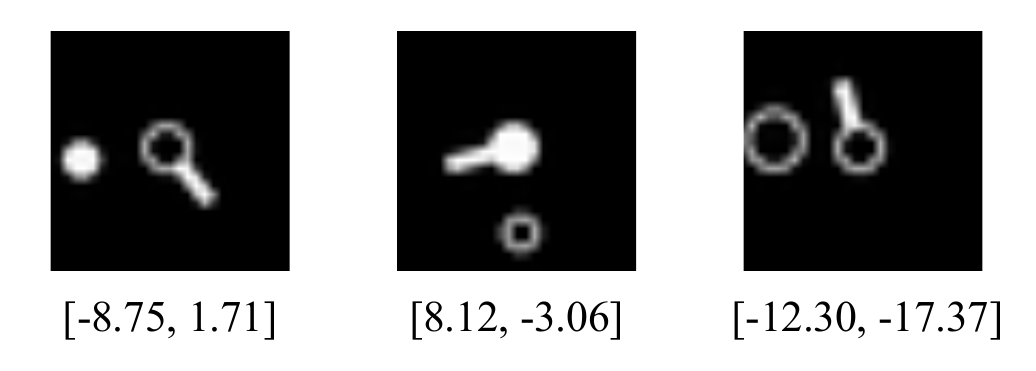}
    \caption{Pairs of corresponding latent vectors $Z_t$ and images as generated in \clocks{}.}
    \label{fig:clock-number-pairs}
\end{figure}

\subsection{$\textbf{PM}_{2.5}$ air quality}

Due to health concerns the air quality in Chinese cities has become an important topic. The \pmair{} data set contains hourly meteorologic information and the concentration of small particles ($\text{PM}_{2.5}$) for the cities Beijing, Shanghai, Guangzhou, Chengdu and Shenyang \citep{pm2.5}. The individual features available for all cities are listed in Table~\ref{tab:pm2.5_features}. In addition to the features listed, the data includes the date and time of each record. Just like in the preprocessing of \traffic{} we compute a periodic time feature using expression  \ref{eq:periodic_time_feature} to represent the time of day of each record. For each numerical feature we calculate the mean and standard deviation and remove rows with values that are more extreme than six standard deviations from the mean. We also remove rows with missing categorical features, which are then represented as one-hot vectors. Apart from differences in the preprocessing we consider the same prediction task as \cite{karlsson2021using} which is predicting the future particle concentration for a fixed time horizon given current observations.

\begin{table}[t!]
    \centering
    \begin{tabular}{lll}
    \toprule
    Feature & Type & Description\\
    \midrule
    season & Numerical & season (1 to 4) of the data in this row\\
    PM & Numerical & $\text{PM}_{2.5}$ particle concentration in $\mu g  / m^3$\\
    DEWP & Numerical & dew point in $^{\circ}$C     \\
    TEMP & Numerical & air temperature in $^{\circ}$C     \\
    HUMI & Numerical & humidity in \% \\
    PRES & Numerical & atmospheric pressure in hPa \\
    cbwd & Categorical & combined wind direction in \{N,W,S,E, NW, SW, NE, SE\} \\
    Iws & Numerical & cumulated wind speed in $m/s$ \\
    precipitation & Numerical & hourly precipitation in $mm$ \\
    Iprec & Numerical & cumulated precipitation in $mm$ \\ \bottomrule
    \end{tabular}
    \vspace{1em}
    \caption{Features available as part of the \pmair{} data set on the air quality of five large Chinese cities.}
    \label{tab:pm2.5_features}
\end{table}

\section{Additional experiment results}
\label{app:additional_experiments}
In the course of this section we present a larger scope of our experimental results.
We demonstrate the predictive accuracy of the algorithms introduced in Sections~\ref{sec:kernels_and_maps} and \ref{sec:learn_rep} in terms of the mean coefficient of determination $R^2$ over different settings on five data sets.
The variation of the results over repetitions is represented by the shaded areas in the visualizations, which denotes one standard deviation above and below the mean value. We also show empirically that the bias of generalized LuPTS increases with the number of privileged time steps when a poor representation function is used. This can be seen in Figure~\ref{fig:r2_sequence_length_square_sign} where we test privileged and classical learners over 500 different systems of the \squaresign{} type over different sequence lengths. In addition we further illustrate how privileged information can improve the recovery of latent variables of the data generating process by providing more visualizations in the style of Figure~\ref{fig:clock_svcca_scatter_y1} on the \clocks{} and \squaresign{} data set. These can be found in Figures~\ref{fig:clocks_svcca_app_y1y2} and \ref{fig:ss_svcca_scatter_app}.

In the following we first provide the experiment details for visualizing the phase transition of generalized LuPTS as seen in Figure~\ref{fig:limitations}. In the subsections thereafter the material is organized by data set.

\subsection{Two regimes of generalized LuPTS}

As seen in Figure~\ref{fig:limitations} and demonstrated by Proposition~\ref{prop:coincide}, generalized LuPTS becomes equivalent to the corresponding classical learner when the number of features $\hat{d}$ is larger than the number of samples $m$. In the following we provide the experiment details that led to Figure~\ref{fig:limitations}.

In order to evaluate the dependency on the number of features we used linear LuPTS and OLS on a synthetically generated linear-Gaussian system as displayed in Figure~\ref{fig:obs_dgp}. We use the same setup as for the latent dynamics in \squaresign{} but without a nonlinear observation generating function. For each number of features $k=\hat{d}=d$ we sample 50 such systems with different dynamics, each producing a training set with $m=100$ and test set of 1000 samples. The systems are all configured with $T=3$ and $q=10$. We train and score both estimators on all of the data generating systems before computing the mean coefficient of determination over all systems with the same number of features.

\subsection{Alzheimer progression}

\begin{figure}[H]
\centering
\begin{subfigure}[t]{\textwidth}
\centering
    \includegraphics[width=\textwidth]{fig/legends/rf.pdf}
\end{subfigure}
\begin{subfigure}[t]{\textwidth}
\centering
    \includegraphics[width=\textwidth]{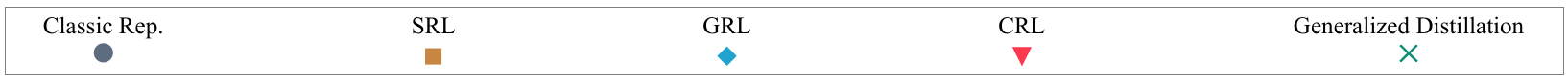}
\end{subfigure}

\begin{subfigure}[t]{0.3\textwidth}
        \includegraphics[width=\textwidth]{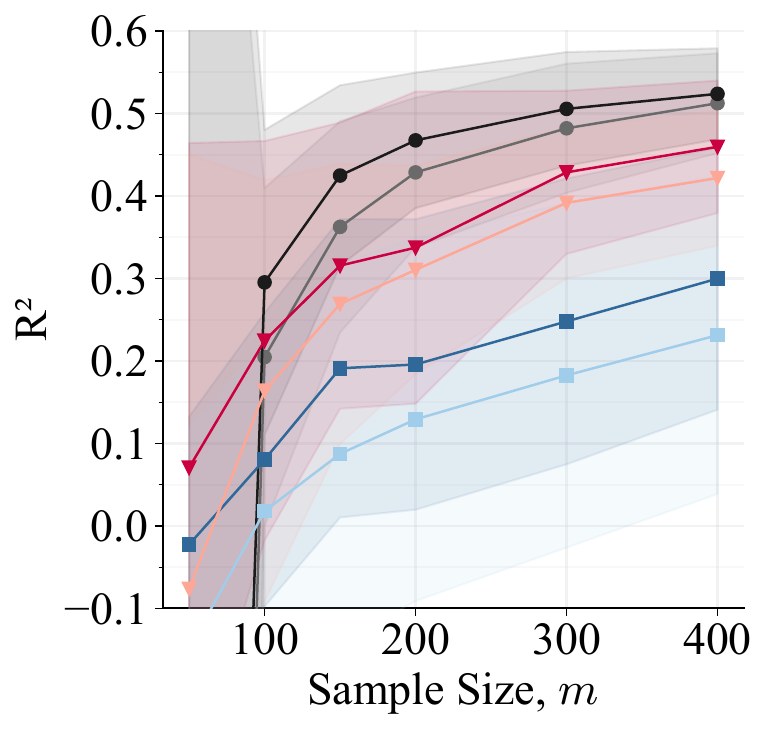}
            \caption{Generalized LuPTS on the reduced setting with a single privileged time point, $T=2$.}
\end{subfigure}
\hfill%
\begin{subfigure}[t]{0.3\textwidth}
        \includegraphics[width=\textwidth]{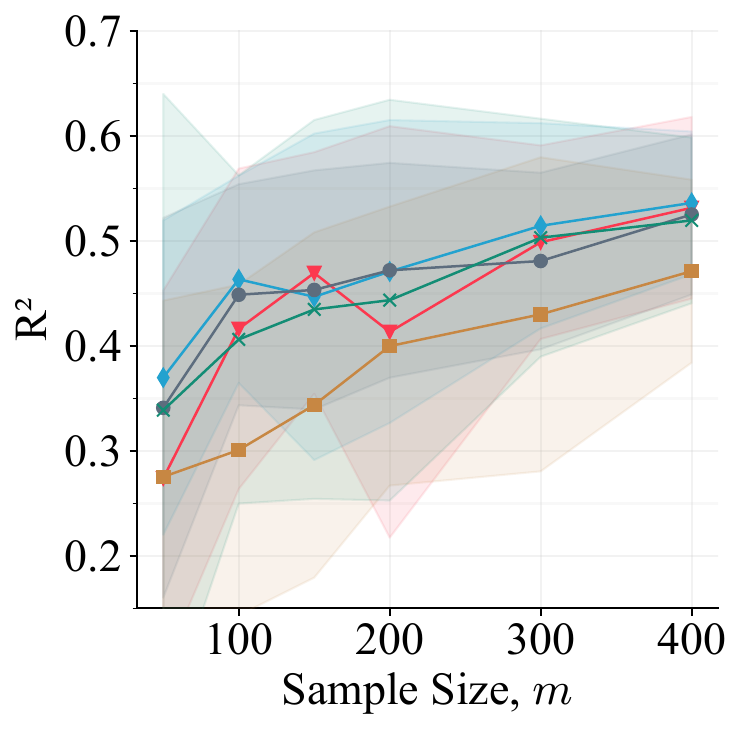}
            \caption{Representation learners and generalized distillation on the reduced setting with one privileged time step, $T=2$.}
\end{subfigure}
\hfill
\begin{subfigure}[t]{0.3\textwidth}
    \includegraphics[width=\textwidth]{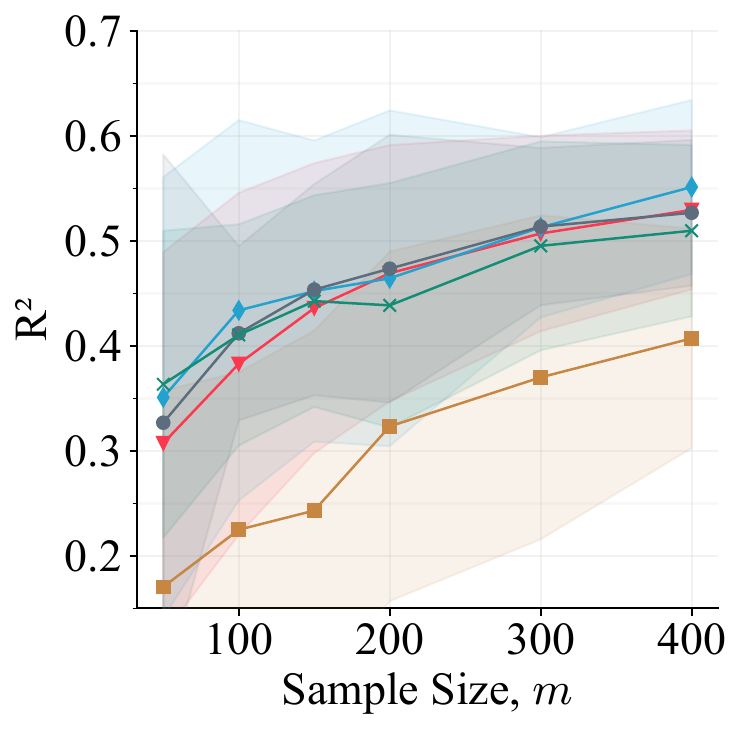}
    \caption{Representation learners and generalized distillation on the full setting with three privileged time steps, $T=4$.}
\end{subfigure}

\caption{Generalized LuPTS and the representation learning algorithms tested in terms of their predictive accuracy on different settings of the \adni{} prediction task. Each experiments are based on 25 repetitions while the random feature experiment consists of 60 repetitions.}
\label{fig:RF_R2_App}
\end{figure}

\subsection{Traffic data}

\begin{figure}[H]
\begin{subfigure}[t]{\textwidth}
\centering
    \includegraphics[width=\textwidth]{fig/legends/rf.pdf}
\end{subfigure}

\begin{subfigure}[t]{0.3\textwidth}
        \includegraphics[width=\textwidth]{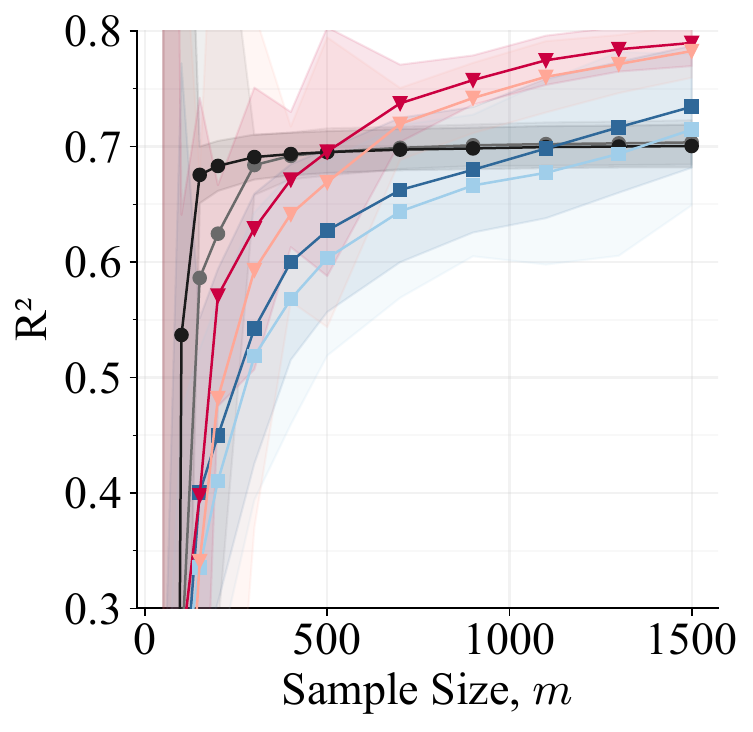}
\caption{$T=2$. }
\end{subfigure}
\hfill%
\begin{subfigure}[t]{0.3\textwidth}
\includegraphics[width=\textwidth]{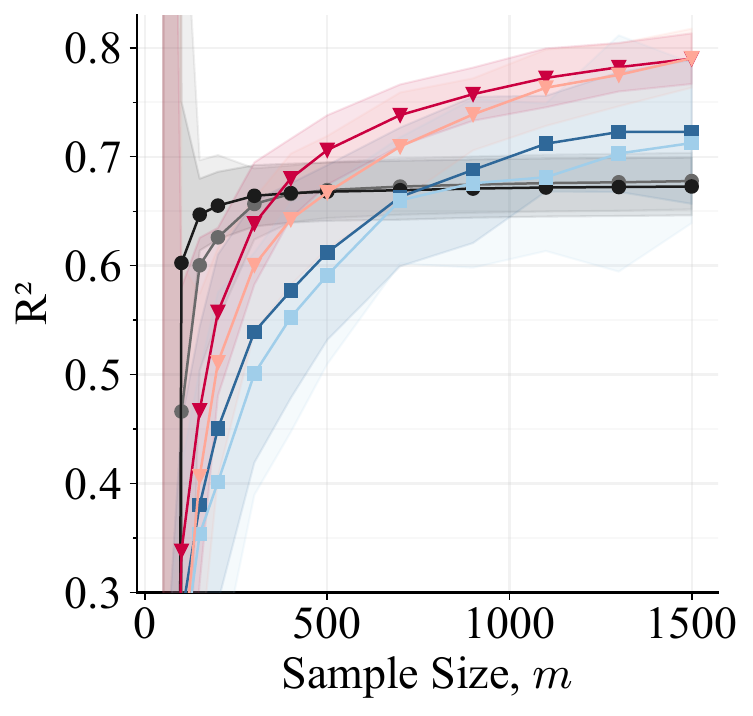}
\caption{$T=3$. }
\end{subfigure}
\hfill
\begin{subfigure}[t]{0.3\textwidth}
\includegraphics[width=\textwidth]{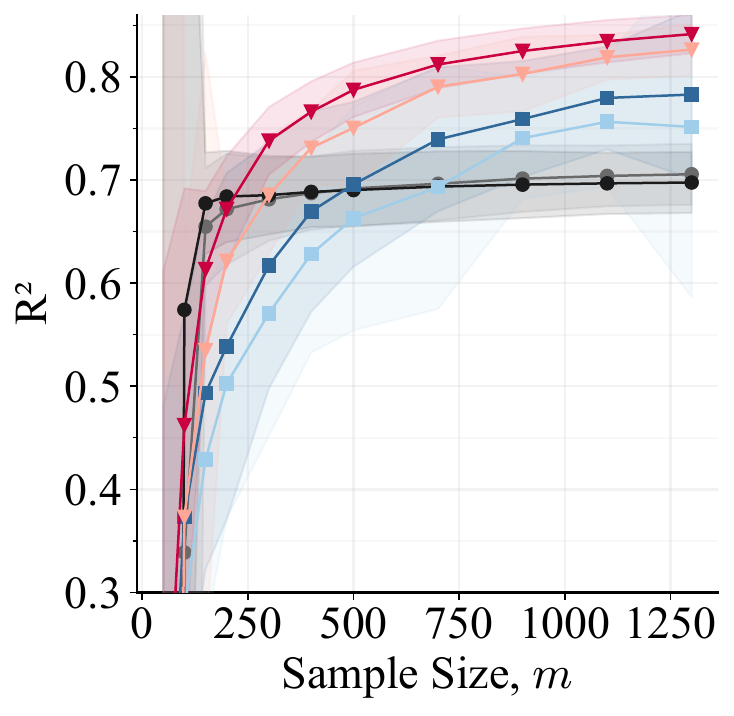}
\caption{$T=4$. }
\end{subfigure}
    \caption{Prediction accuracy of the different variants of generalized LuPTS on \traffic{} with varying sequence lengths. Based on 60 repetitions and four hour steps in each time series.}
\end{figure}

\begin{figure}[H]
\begin{subfigure}[t]{\textwidth}
\centering
    \includegraphics[width=\textwidth]{fig/legends/replearn_gd.pdf}
\end{subfigure}

\begin{subfigure}[t]{0.3\textwidth}
        \includegraphics[width=\textwidth]{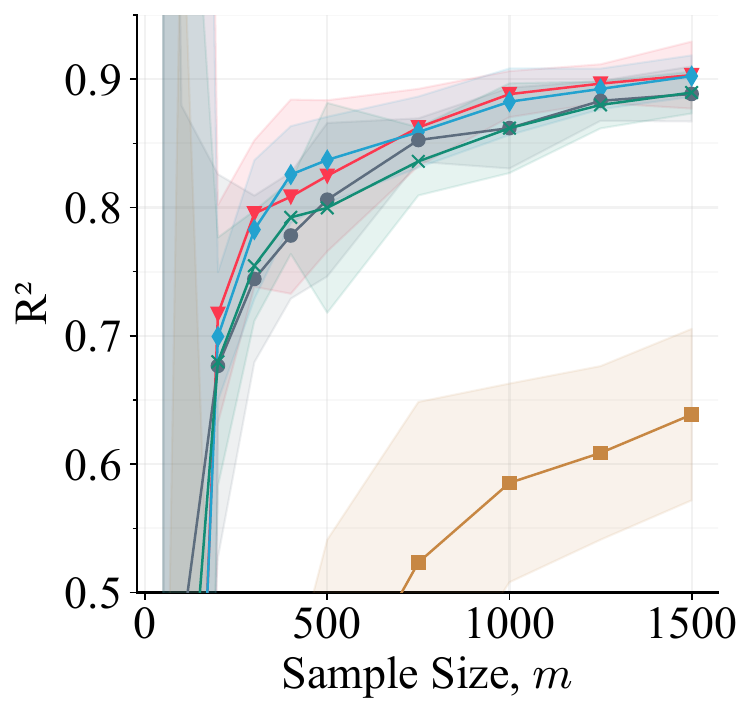}
\caption{$T=2$. }
\end{subfigure}
\hfill%
\begin{subfigure}[t]{0.3\textwidth}
\includegraphics[width=\textwidth]{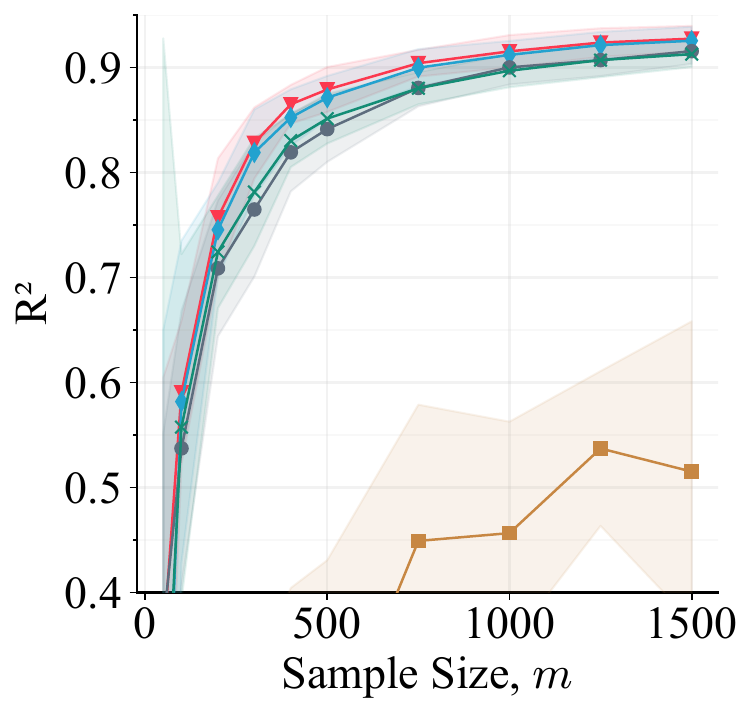}
\caption{$T=3$. }
\end{subfigure}
\hfill
\begin{subfigure}[t]{0.3\textwidth}
\includegraphics[width=\textwidth]{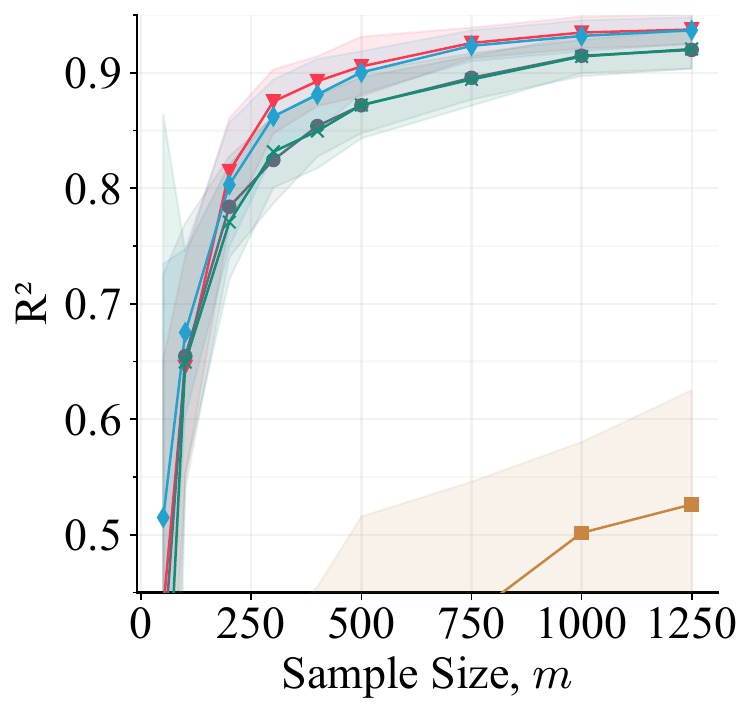}
\caption{$T=4$. }
\end{subfigure}\vspace{-0.2em}
    \caption{Prediction accuracy of the different representation learning algorithms and generalized distillation on \traffic{} with varying sequence lengths. The results are based on 25 repetitions and four hour steps in each time series.}
\end{figure}\vspace{-0.4em}

\subsection{Clock\_LGS}

\begin{figure}[H]
\begin{subfigure}[t]{\textwidth}
\centering
    \includegraphics[width=\textwidth]{fig/legends/replearn_gd.pdf}
\end{subfigure}

\begin{subfigure}[t]{0.3\textwidth}
        \includegraphics[width=\textwidth]{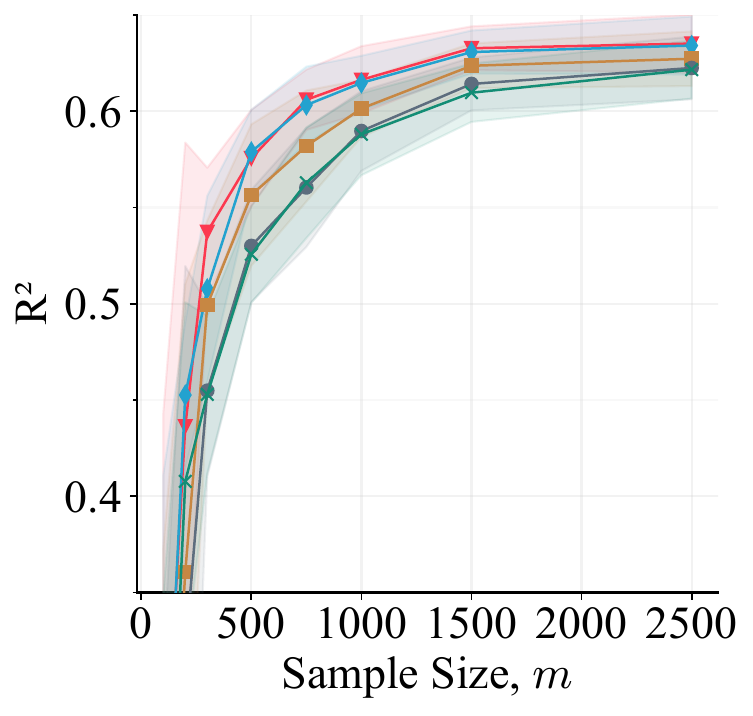}
            \caption{$T=2$.}
\end{subfigure}
\hfill%
\begin{subfigure}[t]{0.3\textwidth}
\includegraphics[width=\textwidth]{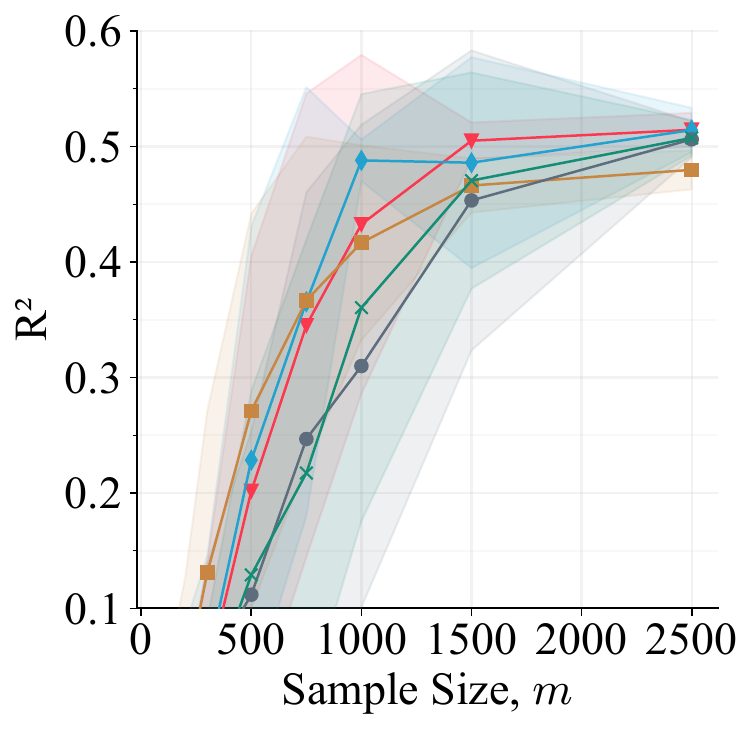}
    \caption{$T=3$.}
\end{subfigure}
\hfill
\begin{subfigure}[t]{0.3\textwidth}
\includegraphics[width=\textwidth]{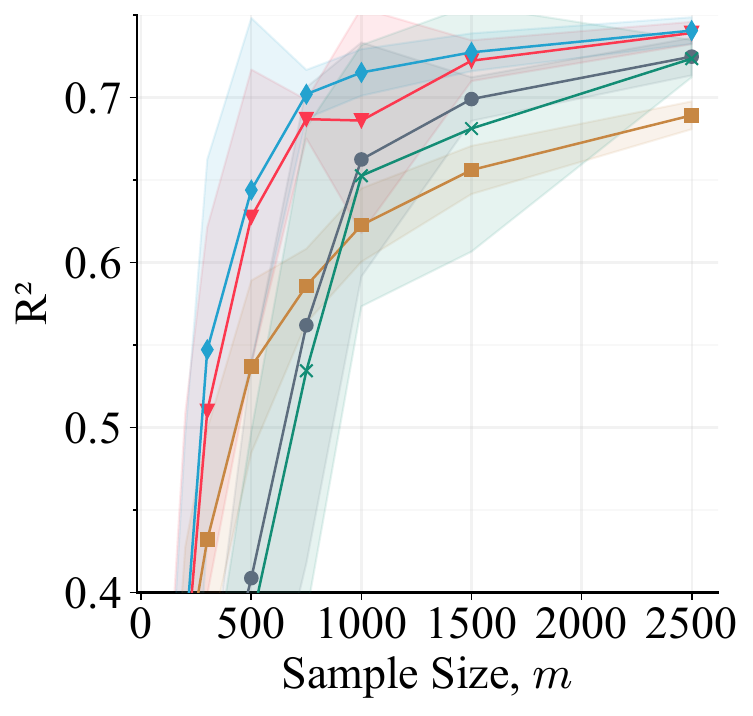}
    \caption{$T=5$.}
\end{subfigure}\vspace{-0.2em}
    \caption{Prediction accuracy of the representation learning algorithms and generalized distillation on \clocks{} with a two dimensional outcome $q=2$ and varying sequence lengths, based on 25 repetitions.}
\end{figure}\vspace{-0.4em}

\begin{figure}[H]
    \centering
\includegraphics[width=\textwidth]{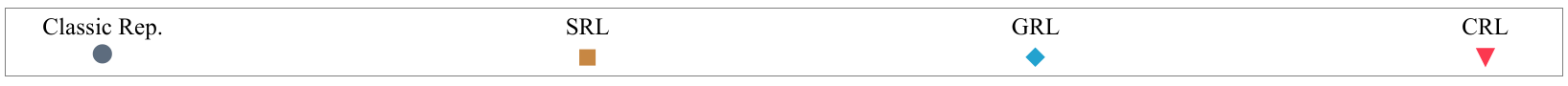}
\includegraphics[width=\textwidth]{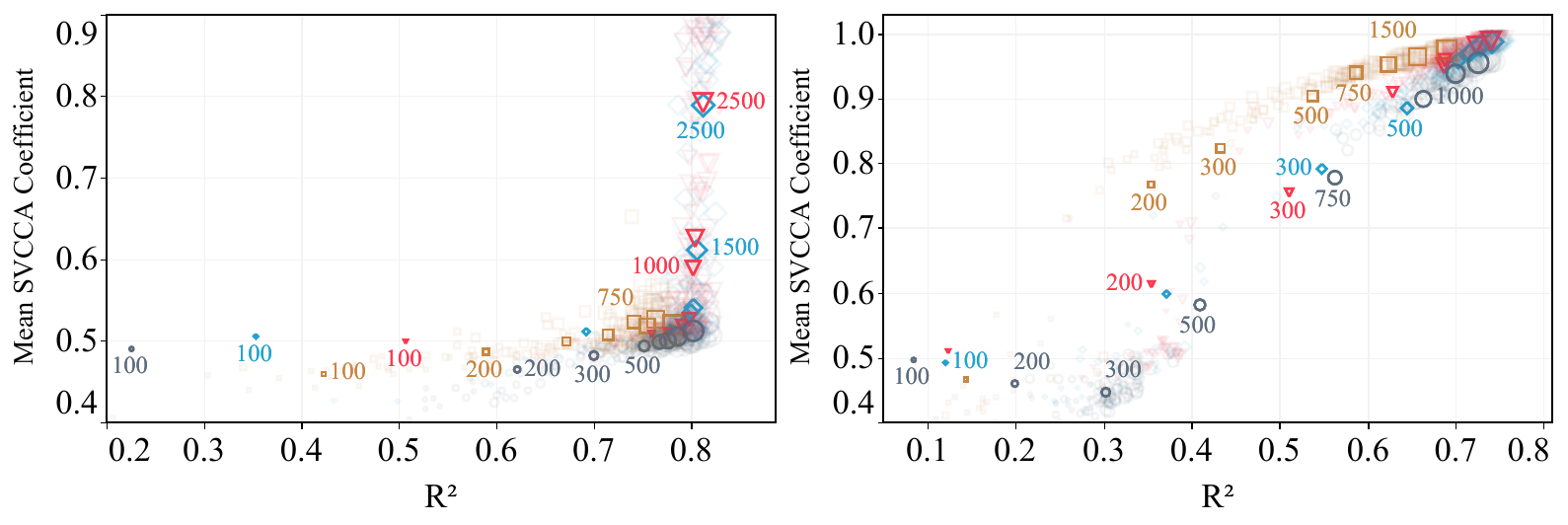}\vspace{-0.5em}
    \caption{Mean SVCCA coefficients and $R^2$ for the representation learners on the \clocks{} prediction task. The experiment was set up for sequences of length six ($T=5$), with outcomes of different dimensionality:  $q=1$ on the left and $q=2$ on the right. Solid marks represent the mean over 25 repetitions, while the faded marks denote the individual training runs. The annotations refer to the size of the training sets.}
    \label{fig:clocks_svcca_app_y1y2}
\end{figure}

\subsection{Square-Sign}

\begin{figure}[H]
\begin{subfigure}[t]{\textwidth}
\centering
    \includegraphics[width=\textwidth]{fig/legends/rf.pdf}
\end{subfigure}

\begin{subfigure}[t]{0.3\textwidth}
        \includegraphics[width=\textwidth]{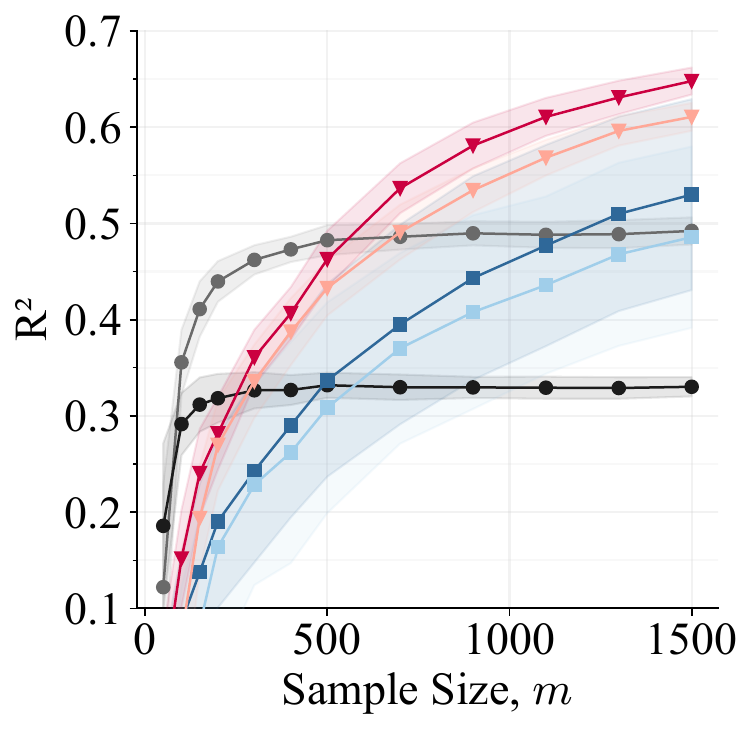}
            \caption{Sequences of length six, four privileged time points, $T=5$.}
\end{subfigure}
\hfill%
\begin{subfigure}[t]{0.3\textwidth}
\includegraphics[width=\textwidth]{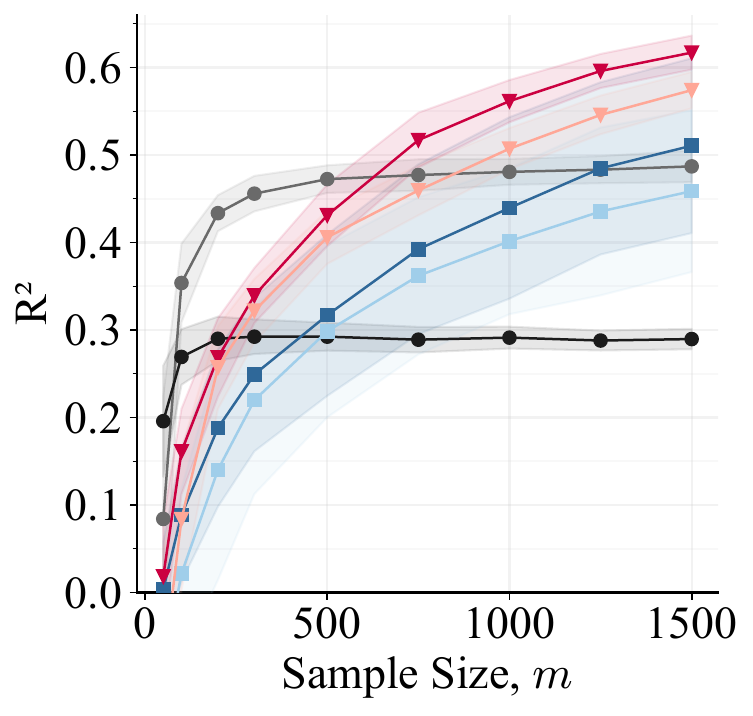}
    \caption{Sequences of length seven, five privileged time points, $T=6$.}
\end{subfigure}
\hfill
\begin{subfigure}[t]{0.3\textwidth}
\includegraphics[width=\textwidth]{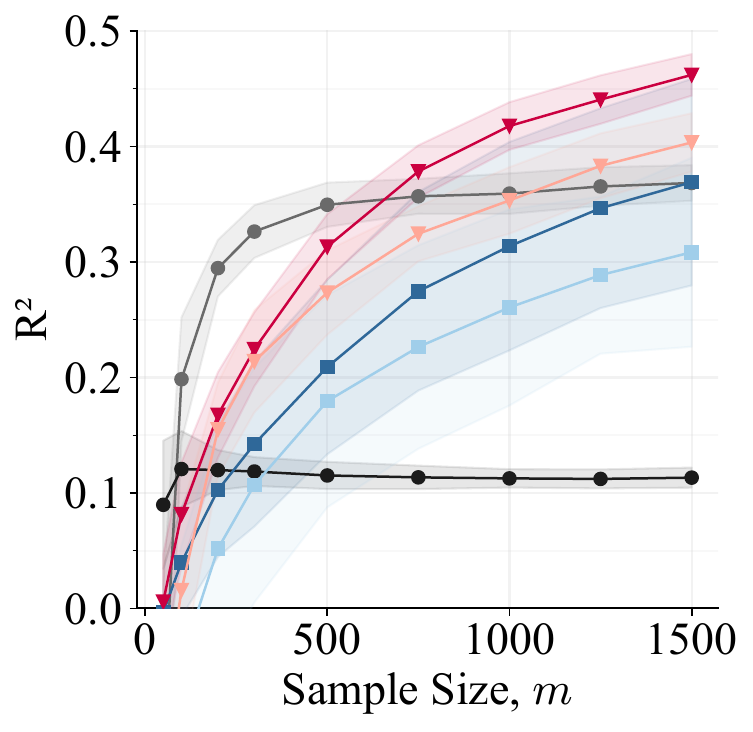}
    \caption{Sequences of length eight, six privileged time points, $T=7$.}
\end{subfigure}
    \caption{Predictive accuracy ($R^2$) of the different variants of generalized LuPTS applied to the prediction task offered by \squaresign{}. The DGP was configured with different sequence lengths and the experiments represent 60 repetitions.}
\end{figure}

\begin{figure}[H]
\begin{subfigure}[t!]{\textwidth}
\centering
    \includegraphics[width=\textwidth]{fig/legends/replearn_gd.pdf}
\end{subfigure}
\begin{subfigure}[t!]{0.3\textwidth}
        \includegraphics[width=\textwidth]{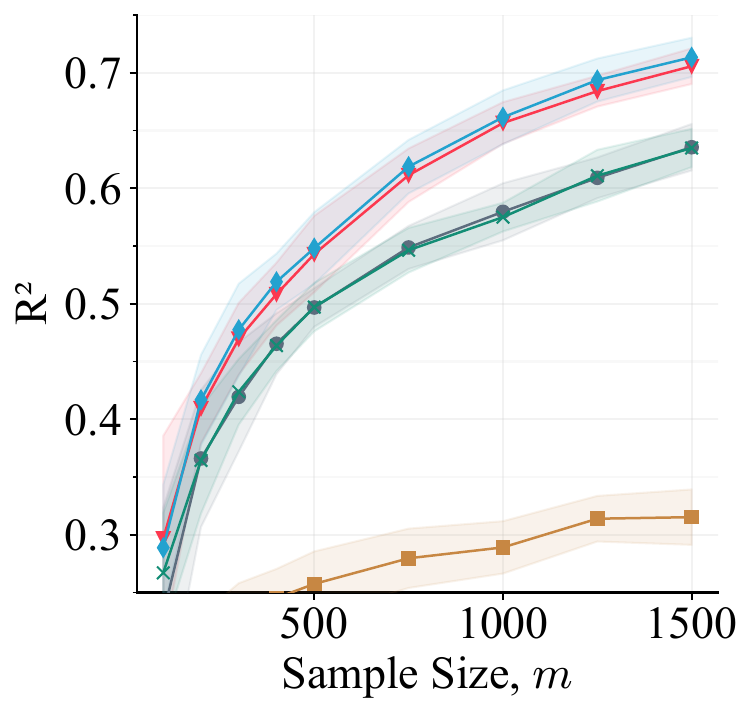}
            \caption{Sequences of length six, four privileged time points, $T=5$.}
\end{subfigure}
\hfill%
\begin{subfigure}[t!]{0.3\textwidth}
\includegraphics[width=\textwidth]{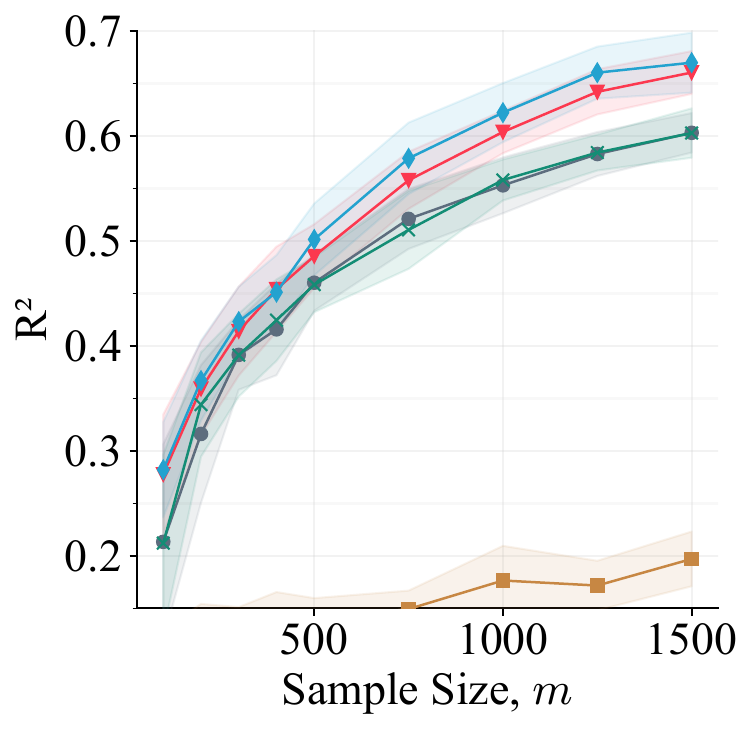}
    \caption{Sequences of length seven, five privileged time points, $T=6$.}
\end{subfigure}
\hfill
\begin{subfigure}[t!]{0.3\textwidth}
\includegraphics[width=\textwidth]{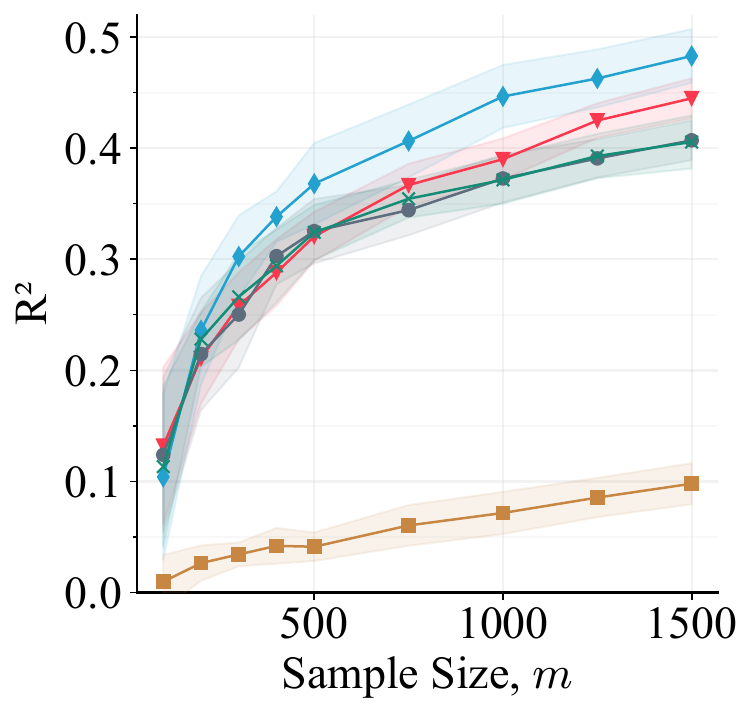}
    \caption{Sequences of length eight, six privileged time points, $T=7$.}
\end{subfigure}
    \caption{Predictive accuracy of the different representation learners introduced in Section~\ref{sec:learn_rep} when applied to the \squaresign{} data set for different sequence lengths. The experiments are based on 25 repetitions.}
\end{figure}
\begin{figure}[!h]
\begin{subfigure}[t!]{0.48\textwidth}
\centering
     \includegraphics[width=\textwidth]{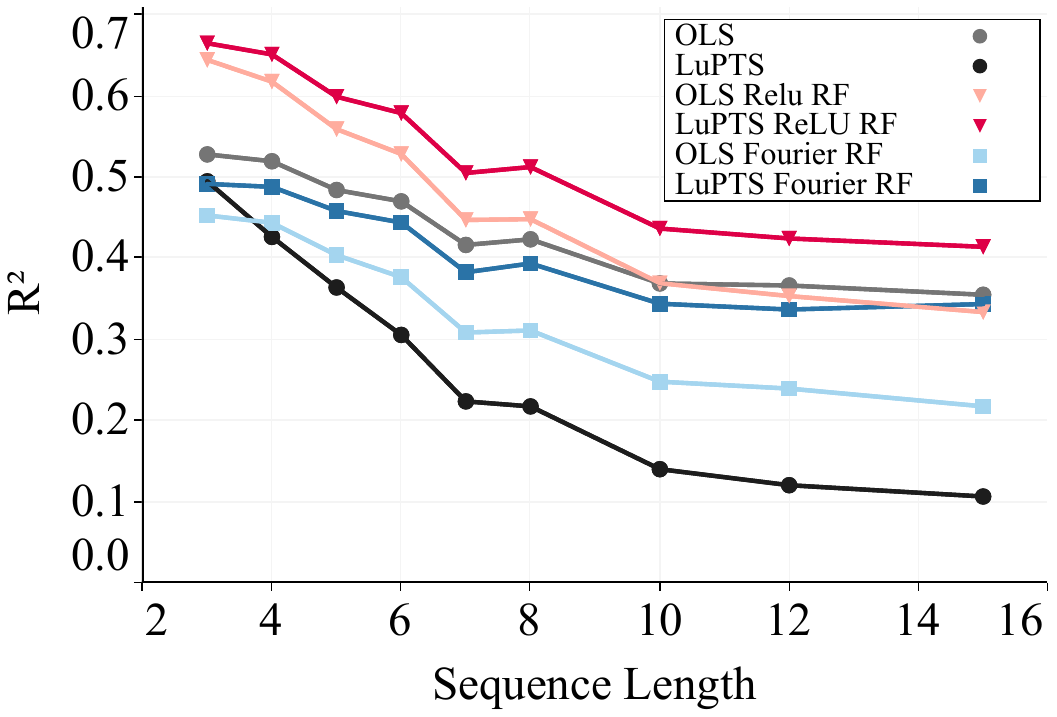}
     \fbox{\begin{minipage}[t][3cm][t]{1.0\linewidth}
    \caption{Mean $R^2$ over 500 repetitions for different sequence lengths on \squaresign{} ($d=10$, $q=3$) with a sample size of $m=1000$. Each run is performed on a different set of randomly sampled transition dynamics.}
      \label{fig:r2_sequence_length_square_sign}
  \end{minipage}}
\end{subfigure}
\hfill
\begin{subfigure}[t!]{0.48\textwidth}
\centering
    \includegraphics[width=\textwidth]{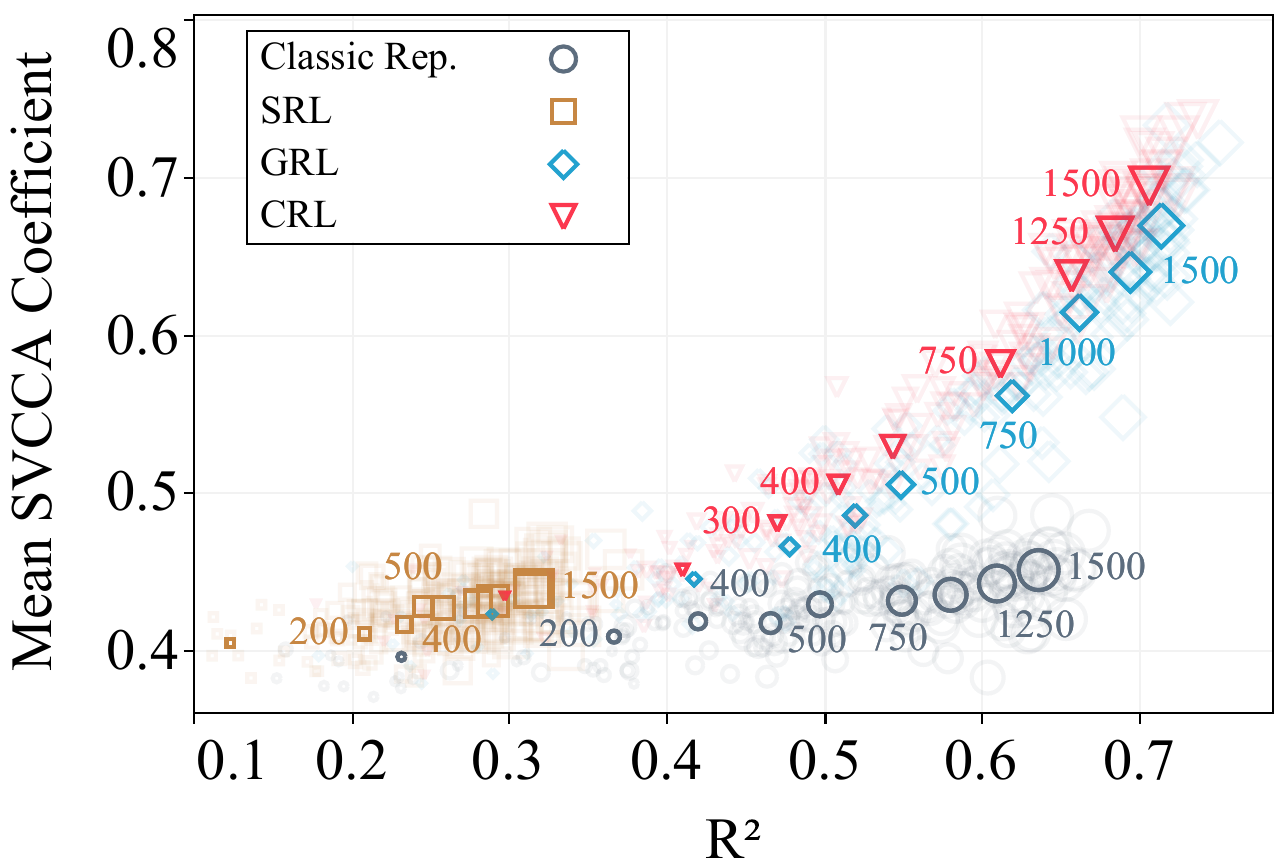}
     \fbox{\begin{minipage}[t][3cm][t]{1.0\linewidth}
    \caption{Mean SVCCA coefficients and $R^2$ for the representation learners on the \squaresign{} data set configured with ($T=5, d=10, q=3$). Solid marks represent the mean over 25 repetitions for a fixed  training set size while the faded marks denote individual runs. The annotations refer to the number of samples in the training data.}
  \end{minipage}}
\end{subfigure}
\caption{Generalized LuPTS applied on the \squaresign{} task for different sequence lengths (left panel), reporting the mean coefficient of determination $R^2$ over different systems. The right panel shows analysis of the predictive accuracy and latent recovery of the representation learners in the style of Figure~\ref{fig:clock_svcca_scatter_y1}.}
\label{fig:ss_svcca_scatter_app}
\end{figure}

\subsection{$\textbf{PM}_{2.5}$ air quality}

\begin{figure}[H]
\begin{subfigure}[t]{\textwidth}
\centering
    \includegraphics[width=\textwidth]{fig/legends/rf.pdf}
\end{subfigure}

\begin{subfigure}[t]{0.3\textwidth}
        \includegraphics[width=0.9\textwidth]{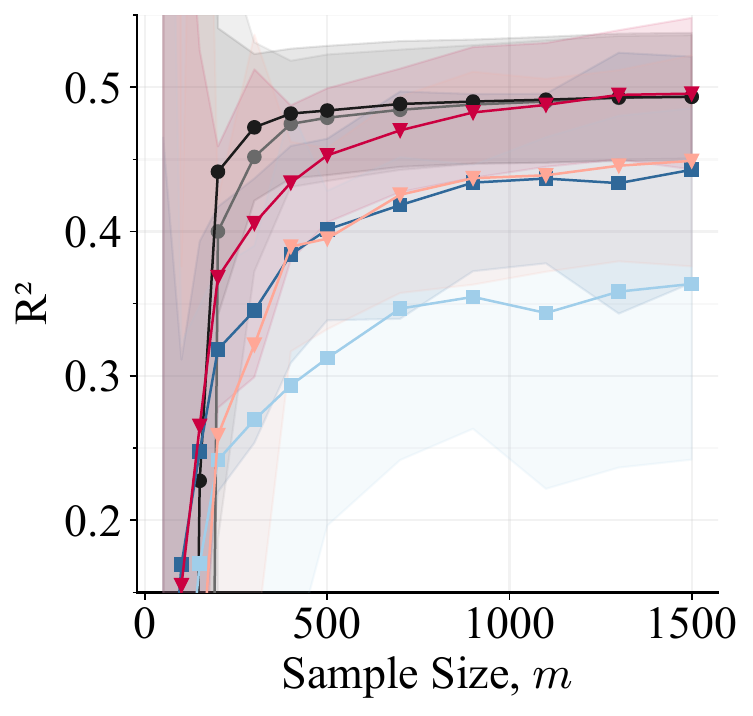}
            \caption{City of Beijing}
\end{subfigure}
\hfill%
\begin{subfigure}[t]{0.3\textwidth}
\includegraphics[width=0.9\textwidth]{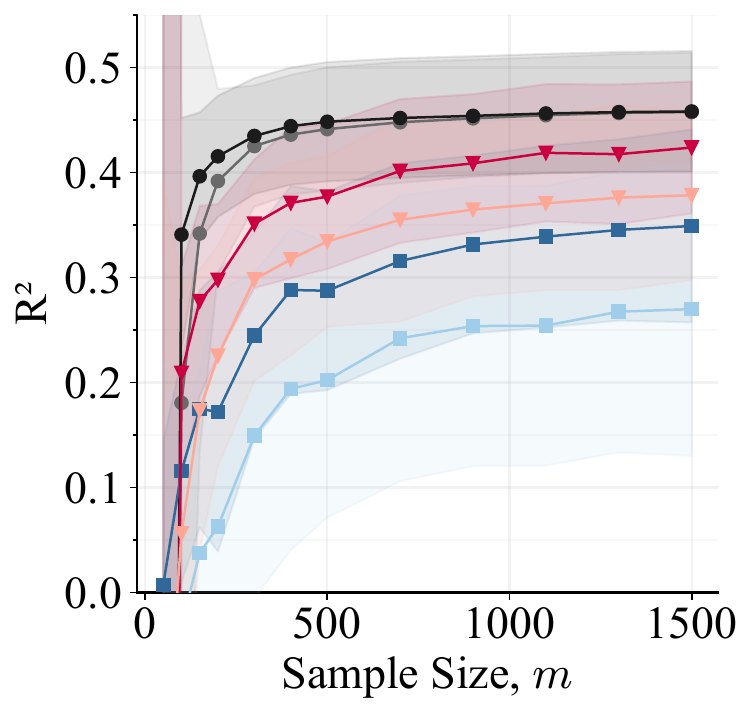}
    \caption{City of Shanghai}
\end{subfigure}
\hfill
\begin{subfigure}[t]{0.3\textwidth}
\includegraphics[width=0.9\textwidth]{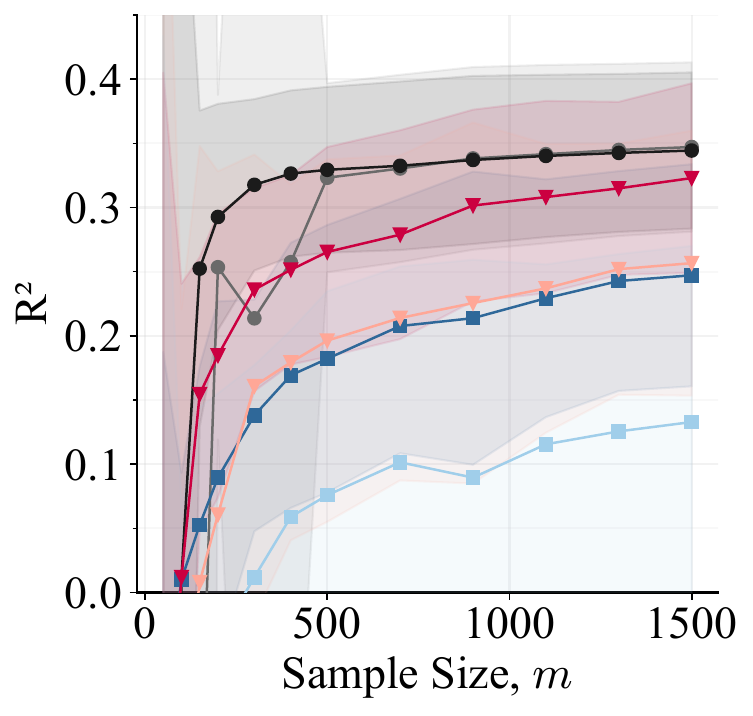}
    \caption{City of Shenyang}
    \vspace{.2em}
\end{subfigure}
\centering
\begin{subfigure}[t]{0.3\textwidth}
\includegraphics[width=0.9\textwidth]{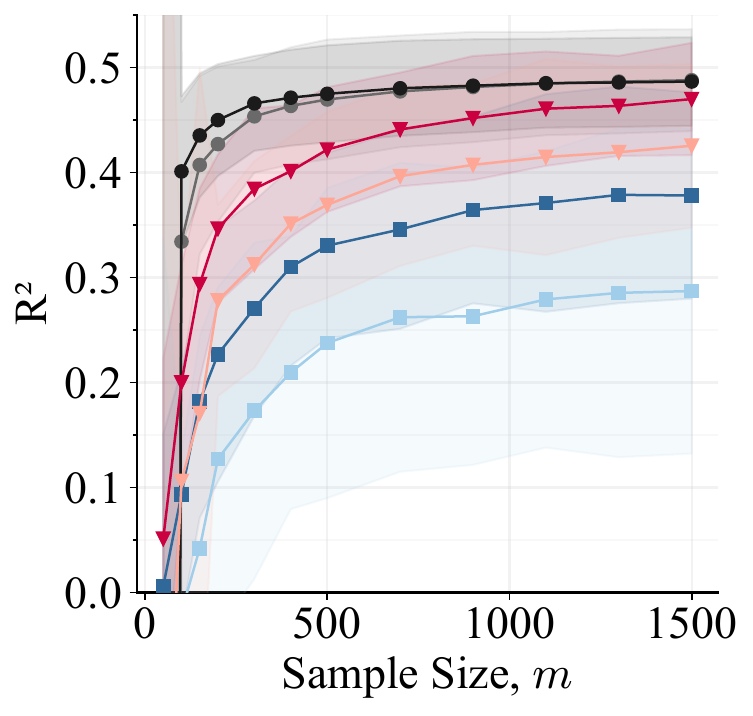}
    \caption{City of Guangzhou}
\end{subfigure}
\hspace{1em}
\begin{subfigure}[t]{0.3\textwidth}
\includegraphics[width=0.9\textwidth]{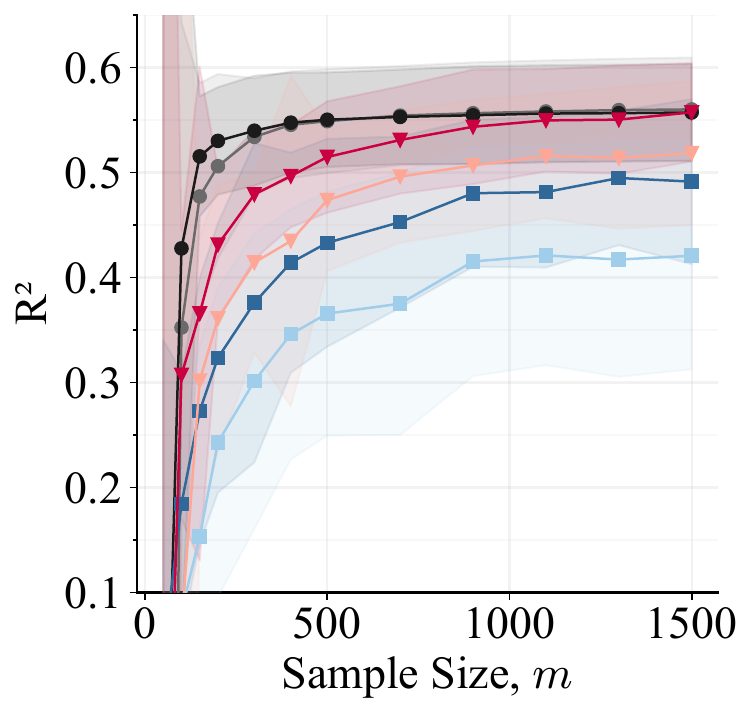}
    \caption{City of Chengdu}
\end{subfigure}
\caption{Predictive accuracy of the different generalized LuPTS variants on the prediction task posed by the \pmair{} data set. All experiments use time series of length five ($T=4$), where each time step is two hours long. The results were computed based on 60 repetitions.}
\end{figure}

\begin{figure}[H]
\begin{subfigure}[t]{\textwidth}
\centering
    \includegraphics[width=\textwidth]{fig/legends/replearn_gd.pdf}
\end{subfigure}

\begin{subfigure}[t]{0.3\textwidth}
        \includegraphics[width=0.9\textwidth]{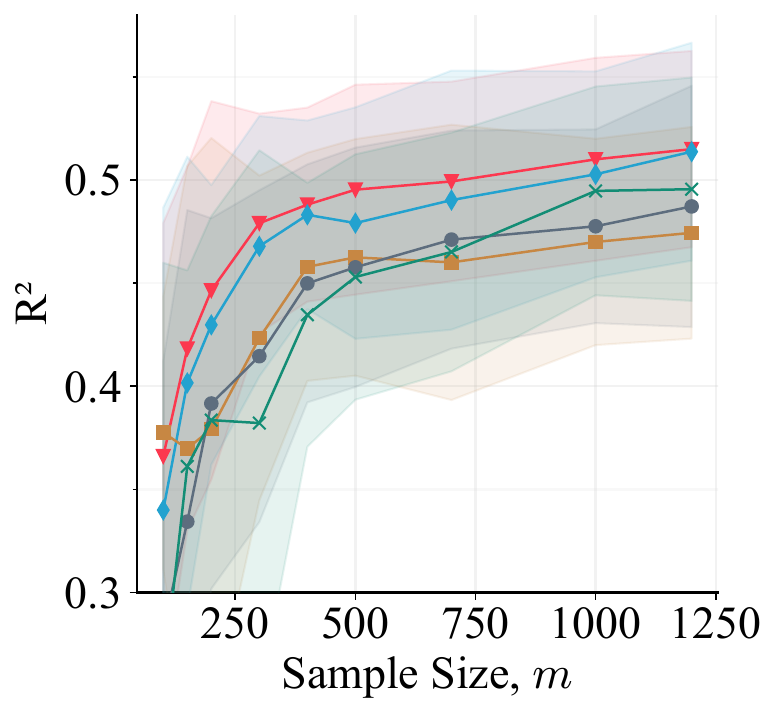}
            \caption{City of Beijing}
\end{subfigure}
\hfill%
\begin{subfigure}[t]{0.3\textwidth}
\includegraphics[width=0.9\textwidth]{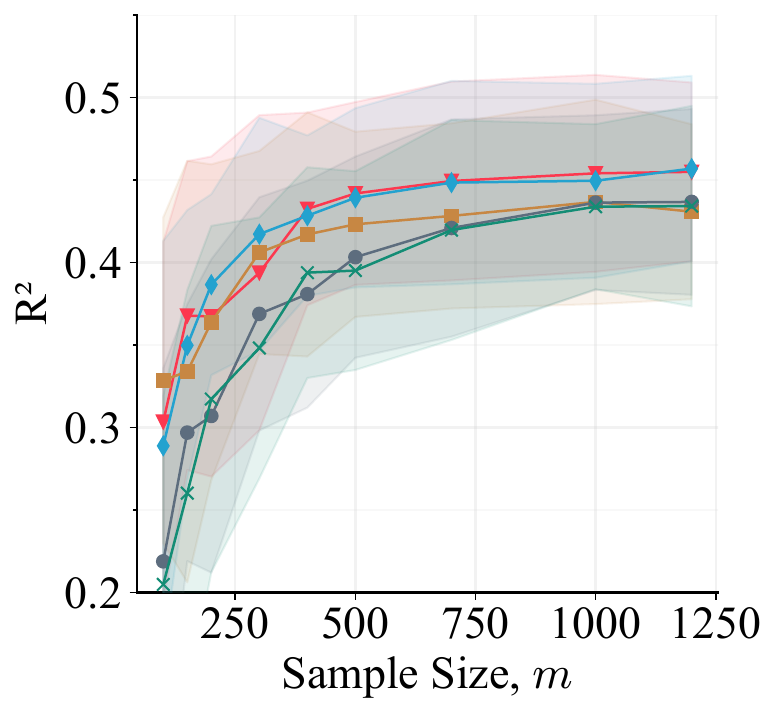}
    \caption{City of Shanghai}
\end{subfigure}
\hfill
\begin{subfigure}[t]{0.3\textwidth}
\includegraphics[width=0.9\textwidth]{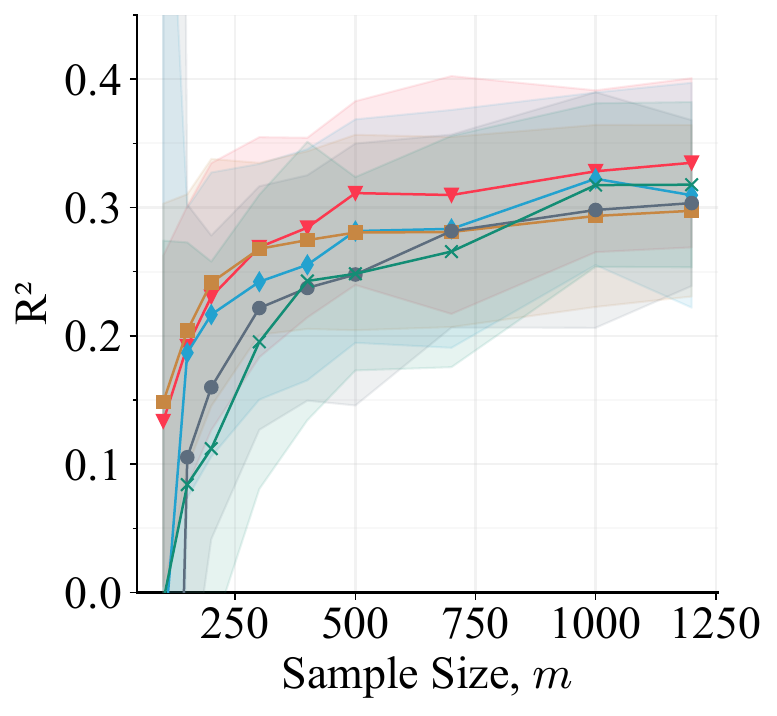}
    \caption{City of Shenyang}
    \vspace{.2em}
\end{subfigure}
\centering
\begin{subfigure}[t]{0.3\textwidth}
\includegraphics[width=0.9\textwidth]{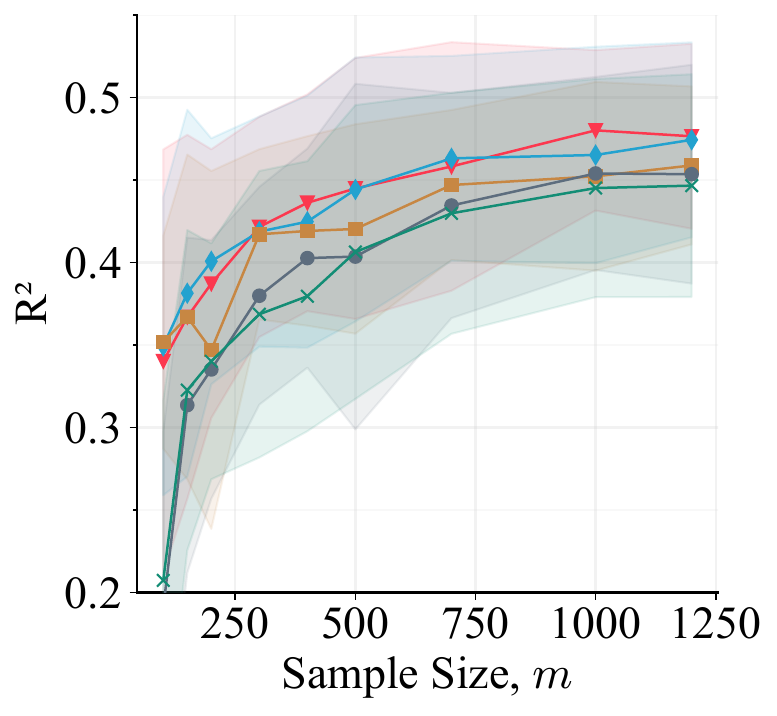}
    \caption{City of Guangzhou}
\end{subfigure}
\hspace{1em}
\begin{subfigure}[t]{0.3\textwidth}
\includegraphics[width=0.9\textwidth]{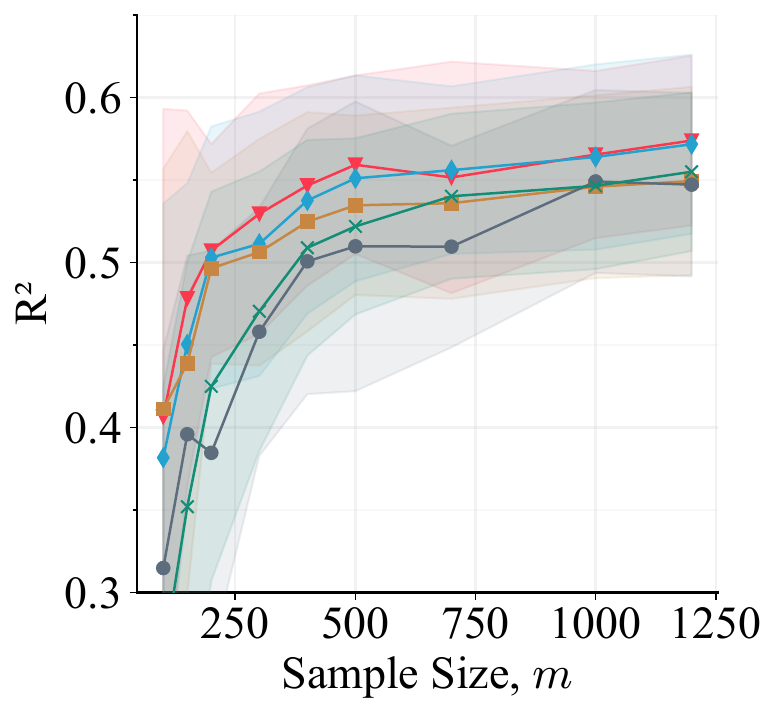}
    \caption{City of Chengdu}
\end{subfigure}
\caption{Evaluation of the sample efficiency of the represenation learning algorithms of Section~\ref{sec:learn_rep} on the \pmair{} air quality prediction task. All experiments use time series of length five ($T=4$), where each time step represents two hours. The results were computed based on 25 repetitions.}
\end{figure}

\end{document}